\documentclass{article} %
\usepackage{iclr2027_conference,times}

\usepackage{amsmath,amsfonts,bm}

\def\eqref#1{equation~\ref{#1}}

\def\1{\bm{1}}

\def\vzero{{\bm{0}}}
\def\vone{{\bm{1}}}

\def\va{{\bm{a}}}
\def\vb{{\bm{b}}}

\def\vg{{\bm{g}}}

\def\vk{{\bm{k}}}
\def\vl{{\bm{l}}}
\def\vm{{\bm{m}}}
\def\vn{{\bm{n}}}

\def\vx{{\bm{x}}}
\def\vy{{\bm{y}}}

\def\mA{{\bm{A}}}
\def\mB{{\bm{B}}}

\def\mH{{\bm{H}}}
\def\mI{{\bm{I}}}

\def\mK{{\bm{K}}}
\def\mL{{\bm{L}}}
\def\mM{{\bm{M}}}
\def\mN{{\bm{N}}}

\def\mP{{\bm{P}}}

\def\mT{{\bm{T}}}
\def\mU{{\bm{U}}}
\def\mV{{\bm{V}}}
\def\mW{{\bm{W}}}
\def\mX{{\bm{X}}}
\def\mY{{\bm{Y}}}
\def\mZ{{\bm{Z}}}

\def\mSigma{{\bm{\Sigma}}}

\DeclareMathAlphabet{\mathsfit}{\encodingdefault}{\sfdefault}{m}{sl}
\SetMathAlphabet{\mathsfit}{bold}{\encodingdefault}{\sfdefault}{bx}{n}

\newcommand{\R}{\mathbb{R}}

\DeclareMathOperator*{\argmax}{arg\,max}
\DeclareMathOperator*{\argmin}{arg\,min}

\DeclareMathOperator{\Tr}{Tr}
\DeclareMathOperator{\St}{St}
\DeclareMathOperator{\polar}{polar}
\DeclareMathOperator{\diag}{diag}
\DeclareMathOperator{\cka}{CKA}
\DeclareMathOperator{\hsic}{HSIC}
\DeclareMathOperator{\hungarian}{hungarian\_matching}
\DeclareMathOperator{\hungarianbatch}{hungarian\_matching\_batch}
\DeclareMathOperator{\GeometricInit}{GeometricInit}

\usepackage{amsmath}

\usepackage{amsthm}
\newtheorem{lemma}{Lemma}
\usepackage{bbm}

\newcommand*{\inparagraph}[1]{\noindent\textbf{#1}\hspace{0.5em}}

\usepackage{tikz}
\usetikzlibrary{positioning, calc, patterns, shapes.arrows, shapes.geometric, fit, arrows, backgrounds, shapes.callouts}
\usepackage{pgfplots}
\usepgfplotslibrary{groupplots,dateplot}
\pgfplotsset{compat=newest}

\definecolor{TUMBlue}{HTML}{0065BD}
\definecolor{TUMSecondaryBlue}{HTML}{005293}
\definecolor{TUMSecondaryBlue2}{HTML}{003359}
\definecolor{TUMBlack}{HTML}{000000}
\definecolor{TUMWhite}{HTML}{FFFFFF}
\definecolor{TUMDarkGray}{HTML}{333333}
\definecolor{TUMGray}{HTML}{808080}
\definecolor{TUMLightGray}{HTML}{CCCCC6}
\definecolor{TUMAccentGray}{HTML}{DAD7CB}
\definecolor{TUMAccentOrange}{HTML}{E37222}
\definecolor{TUMAccentGreen}{HTML}{A2AD00}
\definecolor{TUMAccentLightBlue}{HTML}{98C6EA}
\definecolor{TUMAccentBlue}{HTML}{64A0C8}

\definecolor{TUMYellow}{RGB}{254, 215, 2}
\definecolor{TUMPink}{RGB}{181, 92, 165}
\definecolor{TUMPurple}{RGB}{143, 129, 234}
\definecolor{TUMRed}{RGB}{234, 114, 55}
\definecolor{TUMTeal}{HTML}{00857C}
\definecolor{TUMBrown}{HTML}{8C6240}

\usepackage{algorithm}
\usepackage[noend]{algpseudocode}

\algrenewcommand\alglinenumber[1]{\tiny #1}

\usepackage{multirow}
\usepackage{pdflscape}
\usepackage{array}
\usepackage{setspace}
\usepackage{tabularx}
\usepackage{siunitx}
\input{figures/experiments/tables.tex}

\newcolumntype{L}[1]{>{\raggedright\let\newline\\\arraybackslash\hspace{0pt}}m{#1}}
\newcolumntype{C}[1]{>{\centering\let\newline\\\arraybackslash\hspace{0pt}}m{#1}}
\newcolumntype{R}[1]{>{\raggedleft\let\newline\\\arraybackslash\hspace{0pt}}m{#1}}

\usepackage[most]{tcolorbox}
\tcbset{shield externalize}

\newcommand{\callout}[1]{%
    \par\noindent
    \begin{tcolorbox}[colframe=white!30!black, colback=white!98!black, width=\columnwidth, left=.7mm, right=.7mm, top=1mm, bottom=1mm, boxrule=.5mm, before skip=8pt, after skip=8pt]
        #1
    \end{tcolorbox}
    \par
}

\newcounter{finding}
\newcommand{\finding}[1]{%
    \stepcounter{finding}
    \callout{\textbf{Finding~\thefinding:}\hspace{.5em}#1}
}

\usepackage{caption}
\usepackage{placeins}
\usepackage{xurl}
\usepackage[protrusion=false,expansion=true,stretch=30,shrink=30]{microtype}
\AtBeginDocument{\microtypesetup{expansion=false}}
\usepackage{subcaption}
\usepackage{xspace}

\makeatletter
\DeclareRobustCommand\onedot{\futurelet\@let@token\@onedot}
\def\@onedot{\ifx\@let@token.\else.\null\fi\xspace}

\def\ie{\emph{i.e}\onedot} 
\def\cf{\emph{cf}\onedot}

\makeatother

\usepackage{enumitem}

\AtEndPreamble{
    \usepackage[capitalize]{cleveref}
    \crefname{section}{Sec.}{Secs.}
    \Crefname{section}{Section}{Sections}
    \Crefname{table}{Table}{Tables}
    \crefname{table}{Tab.}{Tabs.}
    \Crefname{appendix}{Appendix}{Appendices}
    \crefname{appendix}{Appx.}{Appxs.}
}

\usepackage[normalem]{ulem}

\usepackage[colorlinks=true, linkcolor=TUMSecondaryBlue2, citecolor=TUMSecondaryBlue2, urlcolor=TUMBlue]{hyperref}
\usepackage{url}
\title{
Shared Geometry As A Rosetta Stone:\\
Cross-Modal Alignment Without Paired Data
}

\author{Dominik Schnaus$^\text{\normalfont 1,2}$, Thomas Dag{\`e}s$^\text{\normalfont 1,2}$, Daniel Cremers$^{\text{\normalfont 1,2}\dagger}$, Xi Wang$^{\text{\normalfont 1,2,3,4}\dagger}$, Phillip Isola$^{\text{\normalfont 5}\dagger}$ \\[2mm]
$^\text{1}$TU Munich \quad $^\text{2}$MCML \quad $^\text{3}$Ulm University \quad $^\text{4}$ETH Zurich \quad $^\text{5}$MIT \quad $^\dagger$Equal advising\\[1mm]
{\small \textbf{Project page:} {\href{https://dominik-schnaus.github.io/unpaired-rosetta/}{dominik-schnaus.github.io/unpaired-rosetta}}}
}

\iclrpreprintcopy
\begin{document}

\maketitle

\begin{abstract}
Multimodal representations enable 
zero-shot classification and retrieval, but aligning independently trained models 
usually
requires large amounts of paired data.
Yet,
the Platonic Representation Hypothesis suggests that models trained on different modalities may 
converge spontaneously toward a shared representation geometry.
But then,
do we even need paired examples 
for
cross-modal alignment?
Remarkably, we show that paired examples are unnecessary for coarse cross-modal alignment.
Our simple Wasserstein Procrustes method
with a coarse geometric initialization (GeoWarP)
aligns two disjoint embedding sets by estimating a single orthogonal 
map
without 
seeing
any pairs.
Across
datasets, modalities,
and unimodal models,
we show 
that we can
consistently align
independently trained representations without pairs,
and standard geometric alignment metrics accurately predict when this is possible.
Nevertheless, 
we can
naturally benefit from 
paired examples.
In the very few-pair regime,
our method substantially outperforms existing 
ones,
while 
staying
competitive with pair-based methods 
with
more 
added
examples. 
Finally, we demonstrate that the 
resulting alignments can enable text-to-image generation without paired 
examples.
These results show that independently trained models often share enough geometry to establish cross-modal correspondence with little or no paired data.
\end{abstract}

\section{Introduction}

The discovery of the Rosetta Stone was a philology breakthrough. By having a 
Greek translation to a small text, the stone provided sufficient information to Champollion and Young to decipher entire ancient Egyptian scripts. Modern multimodal learning follows essentially the same principle.
Models 
across
modalities are 
usually
connected 
by
paired observations 
via
image captions~\citep{Radford:2021:LTV, jia2021scaling, zhai2023sigmoid}, 
biological~\citep{xiong2023scclip},
or 
scientific data~\citep{chang2024bidirectional}.
Example pairs
act as a multimodal Rosetta Stone 
illustrating
how one modality maps 
to
another.
This assumption underlies 
major
paradigms:
contrastive
learning
from image-caption pairs~\citep{Radford:2021:LTV, jia2021scaling, zhai2023sigmoid}
or
multimodal foundation models 
trained
on paired data 
at massive scale~\citep{bai2025qwen3}.
Even methods designed to reduce 
supervision still 
need some
known 
pairings
as 
cross-modal
anchors~\citep{Norelli:2023:ASI, Maiorca:2023:LST, Maniparambil:2024:DVL, yacobi2026learning, groger2026limited, roschmann2026sotalign}.
The prevailing view is that meaningful alignment 
requires
at least some 
observed correspondences.

\begin{figure}[t]
    \centering
    \includegraphics{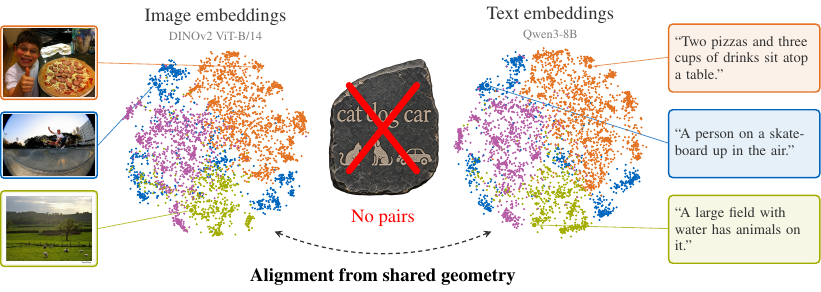} {} 
    \vspace{-1em}
    \caption{\textbf{Shared geometry enables unpaired alignment.} 
    Across
    modalities,
    embedding spaces of independent models
    share
    a similar 
    geometry.
    Prior works
    require a multimodal ``Rosetta Stone'' of paired 
    cross-modal
    examples to 
    align both
    spaces.
    In contrast, our Wasserstein Procrustes approach directly exploits this shared geometry 
    for aligning
    without observing a single 
    paired example.
    We here show
    the nearest neighbor of each image in the text embedding space
    of disjoint datasets.
    }
    \label{fig:teaser}
    \vspace{-0.5em}
\end{figure}

Recent findings 
fragilize this view.
Independently trained models often 
show
oddly
similar 
geometry.
The Platonic Representation Hypothesis (PRH)~\citep{huh2024platonic} proposes that 
models 
spontaneously
converge 
to
a shared representation as model and dataset scale increase.
While the extent of 
convergence 
is
unclear~\citep{groger2026revisiting, koepke2026back}, 
there is
substantial
representation agreement
across modalities, 
mostly coarsely~\citep{koepke2026back}.
This extends to
video~\citep{zhu2025dynamic}, audio~\citep{ngo2024language}, neuroscience~\citep{marcos2026platonic}, and scientific 
models~\citep{edamadaka2025universally, li2026platonic}.
These 
findings raise
a fundamental question:

\callout{\textbf{Research Question:}\hspace{.5em}Can two independently trained modalities be aligned from their embedding geometry alone, without observing a single corresponding pair?}

This question is challenging 
as both
embedding spaces 
have no shared coordinates, no known 
sample
correspondences,
and potentially different numbers
of samples and dimensions.
Prior works~\citep{Norelli:2023:ASI, Maiorca:2023:LST, Maniparambil:2024:DVL, yacobi2026learning, groger2026limited, roschmann2026sotalign} have shown that a 
few
paired examples 
may suffice
if
the
embedding geometry
is exploited.
Early evidence from
fully unpaired methods
~\citep{hoshen2018unsupervised, schnaus2025it} 
hints
that geometric structure 
may
recover cross-modal correspondences, but 
these methods
have been restricted to small-scale datasets or class-level embeddings.
Aligning disjoint sets when removing all paired examples remained an
unsolved problem.

In this paper, we show that such a blind cross-modal alignment is possible, at least at a coarse level.
By introducing a simple alignment procedure based on Wasserstein Procrustes~\citep{rangarajan1997softassign, cohen1999earth}, we jointly infer a correspondence between samples and an orthogonal mapping between embedding spaces.
Due to non-convex~\citep{alvarez2019towards} optimization,
random initialization is insufficient.
Inspired by mini-vec2vec~\citep{dar2025mini}, we instead find a coarse correspondence by independently matching the geometric structure of the two embedding spaces and use it to initialize the joint alignment.
We call the resulting procedure GeoWarP, for \textbf{Geo}metric initialization and \textbf{Wa}sse\textbf{r}stein \textbf{P}rocrustes, and it requires only the two sets of unimodal embeddings.
It uses no paired examples, class labels, or shared reference points.

Across 
many
datasets, 
vision models, and 
language models, we find that independently trained 
models can 
be aligned without paired data.
The recovered alignment captures coarse semantic structure
even when 
images and captions 
come
from different datasets.
In fact, our unpaired alignment quality is strongly predicted by 
geometric similarity 
between
embedding spaces,
measured by common scores
like
Centered Kernel Alignment (CKA)~\citep{kornblith2019similarity}.
This provides
an empirical criterion 
for 
unpaired alignment 
success chances.
This relationship
is not specific to image-text models and
extends 
to scientific and medical domains. 
Our
framework naturally generalizes to
few paired examples,
bridging the unpaired and (very) few-pair settings.
With very few pairs ($\le 100$), we substantially outperform baselines and remain competitive with more added pairs.
We 
also
demonstrate 
proof-of-concept
text-to-image generation without image-text pairs.
Together, our results suggest a different view of multimodal alignment:
paired examples are not always necessary 
for
cross-modal 
alignment.
Independent
models
often 
already organize their embedding spaces similarly enough
for
shared geometry
to
reveal coarse correspondences.
Unlike the historical one, 
a 
multi-modal Rosetta Stone 
might not be necessary,
although having a small one 
can
greatly refine the estimated alignment. 
Instead, 
leveraging shared
geometries provides the required coarse bridge.

\section{Related Work}
\label{sec:related}
Our 
work
builds on three observations.
First, independently trained embedding spaces within a modality 
can often be aligned
without 
pairs
via
almost
isometric~\citep{mikolov2013exploiting} geometry.
Second, independently trained models across modalities also exhibit shared 
geometry.
Third, 
shared geometry has already
reduced
paired-data 
needs
for cross-modal alignment.
We combine these ideas 
in
the zero-pair regime, showing that 
geometry alone enables
coarse cross-modal alignment.

\inparagraph{Unpaired alignment within a modality.} 
Unpaired alignment was 
first
studied in unsupervised 
translation, 
for aligning across languages independently trained word embedding spaces.
Two main approaches have emerged.
Adversarial methods~\citep{conneau2017word, zhang2017adversarial, zhang2017earth, xu2018unsupervised, mohiuddin2019revisiting} learn a transformation such that a discriminator cannot distinguish mapped source 
from target embeddings.
Optimization-based methods instead explicitly search for a correspondence between the samples.
This leads to Wasserstein Procrustes~\citep{artetxe2018robust, hoshen2018non, grave2019unsupervised, aboagye2022quantized, even2024aligning}, which jointly optimizes the correspondence and an orthogonal transformation, or to Gromov-Wasserstein~\citep{Alvarez-Melis:2018:GWA, marchisio2022bilingual} formulations that seek correspondences preserving pairwise distances.
These ideas have since been extended beyond word embeddings to whole-text representations, including the adversarial vec2vec~\citep{jha2026harnessing} and optimization-based mini-vec2vec~\citep{dar2025mini} methods, as well as to applications in bioinformatics~\citep{demetci2022scot, baker2026scot+} and neuroscience~\citep{thual2022aligning, marcos2026platonic}.
While these methods demonstrate that shared geometry can enable unpaired alignment, they consider representations within the same modality.
We ask whether the same principle can bridge different modalities.

\inparagraph{Shared geometry across modalities.}
Independently trained models 
across
modalities have also been shown to converge toward similar representations.
At the unit level, ``Rosetta Neurons''~\citep{dravid2023rosetta} identified 
neurons 
encoding shared concepts 
in
independently trained vision models.
At the representation level, 
different modality
representations 
can be connected 
via
simple linear or orthogonal transformations~\citep{merullo2022linearly, koh2023grounding, Maiorca:2023:LST, gupta2026canonicalizing, li2024vision}.
Unpaired data from other modalities can also improve unimodal models~\citep{gupta2026better}.
Relative representations~\citep{Moschella:2023:RRE} go further by expressing samples through 
similarities to paired reference examples, using these shared relationships to place different modalities in a common space.
Such
reference examples form a multimodal ``Rosetta Stone''~\citep{Norelli:2023:ASI}.
The Platonic Representation Hypothesis~\citep{huh2024platonic} provides a broader framework for these observations, proposing that representations from different models converge toward a shared geometry as model and training data scale increase.
Across vision and language models, empirical studies consistently find shared geometric structure, with 
alignment increasing particularly in coarse neighborhood structure~\citep{groger2026revisiting, koepke2026back}.
Similar alignment has also been observed across audio-language~\citep{ngo2024language}, video-language~\citep{zhu2025dynamic}, and scientific models~\citep{edamadaka2025universally, li2026platonic}, and between image encoders and diffusion models~\citep{tasker2026universal}.
Prior work measures this shared structure on paired representations.
We use it to map between embedding spaces without paired data, showing a strong Platonic Representation Hypothesis~\citep{jha2026harnessing} across modalities.

\inparagraph{Cross-modal alignment with few pairs.}
This shared geometry has been increasingly 
used
to reduce the paired-data requirement for cross-modal alignment.
Early contrastive 
works~\citep{Radford:2021:LTV, jia2021scaling, zhai2023sigmoid} learn aligned representations directly from hundreds of millions to billions of paired examples.
However, much of this supervision can be replaced by strong pretrained unimodal representations.
LiT~\citep{zhai2022lit} uses a frozen vision encoder to reduce the amount of paired data substantially.
More explicitly, ASIF~\citep{Norelli:2023:ASI} directly uses relative representations~\citep{Moschella:2023:RRE} as a shared coordinate system between modalities and STRUCTURE~\citep{groger2026limited} regularizes the aligned space to preserve geometric structure.
Further methods use linear transformations~\citep{Maiorca:2023:LST}, CKA~\citep{Maniparambil:2024:DVL}, spectral representations~\citep{yacobi2026learning}, or optimal transport~\citep{roschmann2026sotalign} to reach the few-pair regime.
These methods demonstrate that paired data can primarily serve to establish a connection between embedding spaces whose internal geometry is already informative.

Rare
works have 
studied
the fully unpaired cross-modal setting, 
but
for
more restricted assumptions.
\citet{hoshen2018unsupervised} 
align supervised 
representations
with an 
adversarial CycleGAN~\citep{zhu2017unpaired},
and \citet{schnaus2025it} align class-aggregated 
embeddings using pairwise distances.
These
results 
suggest
cross-modal 
geometry
information
may
recover 
matchings
without 
examples,
but they are limited to small-scale settings or 
class-aggregated embeddings.
In contrast, we
tackle
disjoint 
embedding sets
from 
independent
models and use no paired examples, class labels, or shared 
reference points.
We show that 
shared geometry alone can recover a coarse cross-modal alignment.

\providecommand{\kmeans}{k\text{-means}}
\setlength{\fboxsep}{1pt}

\begin{figure*}[t]
\vspace{-1em}
\centering
\noindent\begin{minipage}[t]{0.52\textwidth}
    \centering
    \begin{algorithm}[H]
        \caption{GeoWarP}
        \label{alg:ours}
        {\scriptsize
        \begin{spacing}{1.10054}
        \begin{algorithmic}[1]
           \State \textbf{In:}$\mX\!\in\!\R^{n\!\times\!d_X}$, $\mY\!\in\!\R^{m\!\times\!d_Y}$, $C\!=\!30$, $S\!=\!30$, $b\!=\!10^4$, $\!R\!=\!100$
           \State \textbf{Optional:} $\hat{\mX} \in \R^{p \times d_X}$,
                  $\hat{\mY} \in \R^{p \times d_Y}$ pairs
           \State \textbf{Out:} $\mW \in \St(d_X, d_Y)$
           \State $\mT \gets
                  \GeometricInit(\mX, \mY, \hat{\mX}, \hat{\mY}, C, S, b)$
                  \hfill \cref{alg:init}
           \State \colorbox{TUMDarkGray!20}{$\mW \gets \polar\big(\mX^{\top} \mT \mY$}
                  \colorbox{TUMPink!20}{$+ \frac{\min(n,m)}{p}\hat{\mX}^{\top}\hat{\mY}$}
                  \colorbox{TUMDarkGray!20}{$\big)$} \label{line:procrustes1}
           \For{$r = 1, \dots, R$}
                \State $\mX_B, \mY_B \gets$
                       random batches of $b$ samples from $\mX$, $\mY$
                \State \colorbox{TUMAccentGreen!20}{$\mT \gets
                       \hungarian(
                       \mX_B\mW\mY_B^{\top}
                       )$}\label{line:hungarian1}
                \State \colorbox{TUMDarkGray!20}{$\mW \gets \polar\big(\mX_B^{\top}\mT\mY_B$}
                       \colorbox{TUMPink!20}{$+ \frac{b}{p}\hat{\mX}^{\top}\hat{\mY}$}
                       \colorbox{TUMDarkGray!20}{$\big)$}\label{line:procrustes2}
           \EndFor
           \State \textbf{return} $\mW$
        \end{algorithmic}
        \end{spacing}
        }
    \end{algorithm}
\end{minipage}%
\hfill
\noindent\begin{minipage}[t]{0.46\textwidth}
    \centering
    \begin{algorithm}[H]
        \caption{Geometric initialization}
        \label{alg:init}
        {\scriptsize
        \begin{algorithmic}[1]
              \State \textbf{In:} $\mX\!\in\!\R^{n\!\times\!d_X}$, $\mY\!\in\!\R^{m\!\times\!d_Y}$, $C\!=\!30$, $S\!=\!30$, $b\!=\!10^4$
              \State \textbf{Optional:} $\hat{\mX} \in \R^{p \times d_X}$,
                     $\hat{\mY} \in \R^{p \times d_Y}$ pairs
              \State \textbf{Out:}
                     $\mT \in \R^{n \times m}$
                     initial transport plan
              \For{$s = 1, \dots, S$}
                     \State $\mX_B, \mY_B \gets$
                            random batches of $b$ samples from $\mX$, $\mY$
                     \State \colorbox{TUMAccentOrange!20}{$\mA_s \gets
                            \kmeans(\mX_B, C)$;
                            $\mB_s \gets
                            \kmeans(\mY_B, C)$} \label{line:kmeans}
                     \State \colorbox{TUMBlue!20}{$\mP_s \gets
                            \displaystyle\argmax_{\mP \in \mathcal{P}_C}
                            \cka\left(
                            \begin{pmatrix}
                                   \mA_s\\
                                   \colorbox{TUMYellow!20}{$\hat{\mX}$}
                            \end{pmatrix},
                            \begin{pmatrix}
                                   \mP\mB_s\\
                                   \colorbox{TUMYellow!20}{$\hat{\mY}$}
                            \end{pmatrix}
                            \right)$} \label{line:qap}
                     
              \EndFor
              \State $\mM \gets \frac{1}{SC}\sum_{s=1}^S\mA_s^{\top}\mP_s\mB_s$
              \State \colorbox{TUMTeal!20}{$\mT \gets \hungarianbatch(\mX \mM \mY^{\top}, b)$}\label{line:hungarian2}
              \State \textbf{return}
                     $\mT$
        \end{algorithmic}
        }
    \end{algorithm}
\end{minipage}%
\caption{
\textbf{
GeoWarP alignment.}
We first 
get
a coarse correspondence by repeatedly 
\colorbox{TUMAccentOrange!20}{clustering} and 
\colorbox{TUMBlue!20}{matching} the geometric structure of both embedding spaces (right). 
From this initialization, we alternate 
\colorbox{TUMAccentGreen!20}{correspondence estimation}
and 
\colorbox{TUMDarkGray!20}{orthogonal Procrustes updates}
to refine the alignment (left). 
Paired examples, when available, enter both stages naturally as a 
\colorbox{TUMPink!20}{linear}\colorbox{TUMYellow!20}{ term}.
}
\label{fig:algorithms}
\vspace{-1em}
\end{figure*}

\setlength{\fboxsep}{1pt}

\section{Wasserstein Procrustes with Geometric Initialization}
Our goal is to see if shared geometry alone is sufficient to align independently trained embedding spaces across modalities.
We deliberately use a simple alignment procedure that we call GeoWarP.
It is based on Wasserstein Procrustes,
summarized in \cref{alg:ours},
and inspired by
prior work~\citep{artetxe2018robust, grave2019unsupervised, aboagye2022quantized, even2024aligning}.
Let $(\vx_i)_{i=1}^n\!=\!\mX\!\in\!\R^{n\times d_X}$, $(\vy_j)_{j=1}^m\!=\!\mY\!\in\!\R^{m\times d_Y}$, $d_X \leq d_Y$, be centered, normalized embeddings from independently trained models.
Their rows and feature coordinates have no known correspondence.

\inparagraph{Wasserstein Procrustes.}
We jointly estimate a map $\mW$ and a transport plan $\mT \in \Pi(\va,\vb)$:
\begin{equation}
\label{eq:wasserstein_procrustes}
\mW^{\text{WP}}, \mT^{\text{WP}} \in
\!\!\!\!\!\!\!
\argmin_{\mW \in \St(d_X, d_Y),\; \mT \in \Pi(\va, \vb)}\; \sum_{i=1}^n\sum_{j=1}^m \mT_{ij} \|\mW \vx_{i} - \vy_j\|_2^2
= \argmax \Tr(\mX\mW\mY^{\top}\mT^{\top}),
\end{equation}
with $\mW\in\St(d_X,d_Y)$ a semi-orthogonal matrix.
Given a
correspondence, the optimal map is given in closed form by orthogonal Procrustes~\citep{schonemann1966generalized}.
Given a
map, the optimal correspondence is 
found
by linear assignment.
Alternating 
updates is simple, but the joint problem is non-convex~\citep{alvarez2019towards}, and random or uniform initializations 
yield
little 
matching
signal.
Common initializations from word translation assume a high degree of isometry.
However, the shared geometry across modalities is mainly coarse~(\citet{koepke2026back}, \cf \cref{app:alignment_granularity}).
This motivates initializing the correspondence between clusters rather than between individual samples.

\inparagraph{Coarse geometric initialization.}
We build on the cluster initialization of mini-vec2vec \citep{dar2025mini}.
For each of $S$ repetitions, we independently sample and cluster both spaces into $C$ centers $\mA_s$ and $\mB_s$ with $k$-means (\colorbox{TUMAccentOrange!20}{Line~\ref{line:kmeans}}). We match the centers by the permutation that maximizes their linear CKA, $\mP_s\in\argmax_{\mP\in\mathcal{P}_C}\cka(\mA_s,\mP\mB_s)$ (\colorbox{TUMBlue!20}{Line~\ref{line:qap}}).
This is a quadratic assignment problem (QAP; \citet{quadrianto2008kernelized, Maniparambil:2024:DVL, schnaus2025it}).
We replace 2-opt~\citep{croes1958_2opt} with multiple initializations from mini-vec2vec with MPOpt with its primal heuristic~\citep{hutschenreiter2021fusion}, since our ablations show that the quality of the QAP solution is the most important algorithmic choice (\cf \cref{app:ablation}).
Importantly,
initialization is related to Gromov-Wasserstein (GW) alignment for preserving 
intra
similarities:
\begin{equation}
\label{eq:gromov_wasserstein}
\mT^{\text{GW}} \in \argmax_{\mT \in \Pi(\va, \vb)}\;
\Tr(\mK_X\mT\mK_Y\mT^\top) \qquad \text{with }\; \mK_X=\mX\mX^\top,\ \mK_Y=\mY\mY^\top .
\end{equation}
Maximizing CKA 
on
permutations is 
its
QAP on centered kernels (\cref{app:algorithm}, \cref{lem:cka_qap}).
Moreover, 
matched clusterings are rank-$C$ GW transport plans with cross-covariance $\frac{1}{C}\mA_s^\top\mP_s\mB_s$.
We average the resulting plans to get $\mM=\frac{1}{SC}\sum_s\mA_s^\top\mP_s\mB_s$, which summarizes a low-rank approximation to 
the sample-level GW correspondence without materializing an $n\times m$ transport matrix.

\inparagraph{Read-out and refinement.}
Sample-level pseudo-pairs
are found by
one conditional-gradient step on \cref{eq:gromov_wasserstein} from the averaged low-rank plan,
reducing
to linear assignment 
for
scores $\mX\mM\mY^\top$.
Full assignment is prohibitive on large datasets, so we randomly 
block
partition 
them
and solve 
independent one-to-one assignments between 
pairs of corresponding blocks 
via
the Jonker-Volgenant algorithm~\citep{jonker1988shortest} (\colorbox{TUMTeal!20}{Line~\ref{line:hungarian2}}).
Combining 
block assignments 
gives
$\min(n,m)$ pseudo-pairs, 
requiring
only $b\times b$ score matrices,
initializing
$\mW=\polar(\mX^\top\mT\mY)$ by Procrustes, 
$\polar(\mN) = \mU\mV^{\top}$ for $\mN = \mU\boldsymbol{\Sigma}\mV^{\top}$ (\colorbox{TUMDarkGray!20}{Line~\ref{line:procrustes1}}).
Wasserstein Procrustes 
refinement
on batches
alternates
updates of matching via linear assignment (\colorbox{TUMAccentGreen!20}{Line~\ref{line:hungarian1}}) and mapping via Procrustes (\colorbox{TUMDarkGray!20}{Line~\ref{line:procrustes2}}).

\inparagraph{Few-pair extension.}
We can
naturally incorporate a 
few
paired examples.
Known pairs contribute to the cluster-matching objective during initialization as a linear term (\colorbox{TUMYellow!20}{Line~\ref{line:qap}}) and to the Procrustes 
update during refinement (\colorbox{TUMPink!20}{Line~\ref{line:procrustes1} and Line~\ref{line:procrustes2}}).
We weigh the paired examples in the Procrustes step such that their total contribution is comparable to that of the pseudo-pairs obtained from the unpaired data.
Thus, the same algorithm continuously interpolates between the fully unpaired and few-pair settings.
\Cref{app:unpaired_alignment_methods} provides derivations, complexity, sensitivity, and implementation details.
The ablations show that the quality of the QAP solution matters most and that our method is stable with
respect to all four of its hyperparameters, with larger values generally giving a better alignment.

\section{Experiments}
\label{sec:experiments}
Our experiments answer three questions:
\begin{enumerate}[noitemsep, topsep=0pt, leftmargin=*]
    \item \textbf{Can independently trained modalities be aligned without pairs?} 
    Across 7 vision, 3 language models, and 4 datasets, our method consistently recovers coarse cross-modal structure, including when the two modalities are drawn from different datasets.
    The quality of the alignment depends strongly on the shared geometry that can be measured with common metrics (\cref{subsec:unpaired}).
    \item \textbf{Does 
    our
    approach generalize 
    to less common modalities for alignment?
    }
    Our simple aligner remains competitive with domain-specific methods on established unpaired benchmarks (\cref{subsec:domain_specific}).
    It
    successfully aligns a broad range of previously unexplored modality pairs when their embedding geometries are sufficiently similar (\cref{subsec:other_domains}).
    \item \textbf{What is the additional benefit of 
    few pairs?
    } 
    Combining our geometric initialization with 
    very few
    pairs substantially improves sample efficiency and outperforms existing few-pair methods by large margins (\cref{subsec:few_pair}). 
    We visualize the resulting alignment through cross-modal retrieval, PCA projections, and text-to-image generation (\cref{subsec:text2img}).
\end{enumerate}

\inparagraph{Experimental setup.}
We measure cross-modal correspondence using Fraction of Samples Closer Than the True Match (FOSCTTM; \citet{liu2019jointly}). This is the average rank normalized to $[0, 1]$, where $0.5$ indicates random matching and $0$ perfect alignment, independent of batch size.
We also evaluate zero-shot classification by mapping image embeddings to the language space and selecting the closest class prompt.
We report results on 5 random seeds and report mean and standard deviation for every metric.
The aligner is fitted exclusively on the \emph{disjoint} unimodal embeddings, and the classification datasets are used only for evaluation, with no validation-based model selection.
The same hyperparameters are used for all experiments.

\subsection{Unpaired Cross-Modal Alignment Is Possible}
\label{subsec:unpaired}
We first investigate unpaired alignment of independently trained vision and languages models.

\inparagraph{Setup.}
We evaluate 7 vision 
and 3 language models 
on
MS COCO~\citep{chen2015microsoft} and 3 detailed captioning datasets: Stanford Paragraph Captions (SPC)~\citep{krause2016paragraphs}, DCI~\citep{urbanek2024picture}, and DOCCI~\citep{onoe2024docci}.
Baselines are
vec2vec~\citep{jha2026harnessing} and mini-vec2vec~\citep{dar2025mini}, 
which are
adversarial and optimization-based 
methods for
unpaired alignment.

\begin{figure}[t]
    \centering
    \captionsetup[subfigure]{skip=6pt}%
    \captionsetup{skip=8pt}
    \begin{subfigure}[t]{\textwidth}
        \centering
        \includegraphics{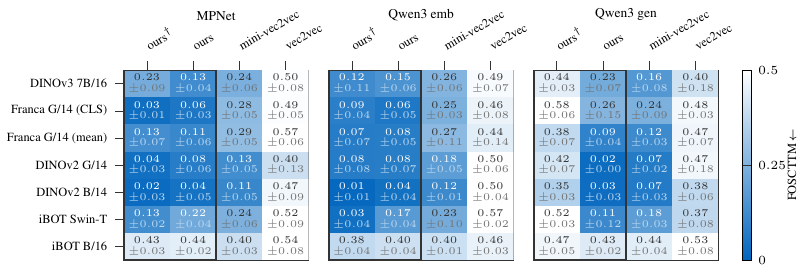} %
        \caption{Unpaired alignment performance. $^\dagger$Cross-dataset: images from MS COCO, captions from SPC.}
        \label{fig:unpaired_coco}
    \end{subfigure}

    \vspace{0.8em}
    \begin{subfigure}[b]{0.57\textwidth}
        \centering
        \includegraphics{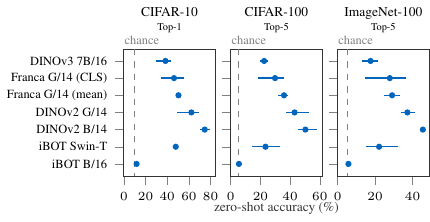} %
        \caption{Semantic transfer via zero-shot classification.}
        \label{fig:unpaired_class}
    \end{subfigure}%
    \hfill
    \begin{subfigure}[b]{0.41\textwidth}
        \centering
        \includegraphics{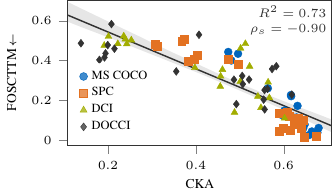} %
        \caption{Shared geometry predicts alignment.}
        \label{fig:alignment_metrics}
    \end{subfigure}
    \caption{
    \textbf{
    Shared-geometry for vision-language alignment without paired examples.}
    (\subref{fig:unpaired_coco})~FOSCTTM of unpaired aligners 
    on
    vision and 
    language models on MS COCO, 
    outperforming
    prior works on most model pairs.
    We can handle
    images and captions come from different datasets (ours†).
    (\subref{fig:unpaired_class})~Our
    alignment transfers semantics
    enabling
    zero-shot classification.
    (\subref{fig:alignment_metrics})~Points
    are
    vision-language-dataset combinations. The more similar 
    the
    embedding spaces,
    the better 
    our unpaired alignment.
    }
    \label{fig:unpaired_main}
\end{figure}

\inparagraph{Results.}
\cref{fig:unpaired_main} shows the results on MS COCO, 
those
on DCI, DOCCI, and SPC 
are
in \cref{app:unpaired}.
We 
see
in \cref{fig:unpaired_coco} that vec2vec is generally close to random matching, while mini-vec2vec already recovers substantial cross-modal structure.
Our method 
outperforms
mini-vec2vec on 17 of 21 vision-language model pairs, 
with
an average FOSCTTM of $0.154$ compared with $0.223$.
For reference, CLIP ViT-L/14, trained on 400M paired image-text examples, achieves $0.0006$.
Thus, our method does not recover the exact sample-wise correspondence, but it does recover substantial coarse cross-modal structure.

This coarse alignment is sufficient to transfer semantic information across modalities (\cf \cref{fig:unpaired_class}).
Without using any classification data during alignment, our method achieves average zero-shot accuracies of $47.4\%$ on CIFAR-10 (Top-1), $30.1\%$ on CIFAR-100 (Top-5), and $26.5\%$ on ImageNet-100 (Top-5). 
On the model combination from \citet{schnaus2025it}, which is optimized directly on CIFAR-10, our method nearly doubles the Top-1 accuracy from $37.3\%$ to $69.1\%$.
On the detailed captioning datasets, we observe the same overall behavior, although generative token pooling~\citep{wang2026the} performs substantially worse.
We investigate this difference in \cref{app:gen_token_pooling}.

So far, both modalities come from the same 
multimodal
dataset.
Yet
our method does not need this 
assumption.
To test this, we align 
MS COCO
images
with 
SPC
captions,
so that no ground-truth pairing exists between 
both
datasets.
Most model pairs remain similarly alignable (\cf \cref{fig:unpaired_coco}, ours$^\dagger$), with an average FOSCTTM of $0.236$ compared with $0.154$ when both modalities are drawn from MS COCO.
Of note,
the small degradation mainly comes from the combination of SPC and generative token pooling which is similar in the single dataset experiment (\cf \cref{fig:panel_StanfordParagraphCaptioning}).
This shows that our method does not rely on recovering a hidden pairing 
from
two views of the same dataset.

Regarding
unpaired alignment 
success,
all tested geometry 
scores,
including centered kernel alignment (CKA)~\citep{kornblith2019similarity} in \cref{fig:alignment_metrics} and others in \cref{fig:alignment_metrics_full}, strongly correlate with 
alignment quality 
given by
FOSCTTM.
Models 
with more similar neighborhood and distance structure are consistently easier to align,
while we naturally fail to align geometrically inconsistent models.

\subsection{Unpaired Alignment Generalizes Across Domains}
\label{subsec:domain_specific}
On vision-text, our method can align without paired data.
We 
now
study how
we 
align
non vision-text
problems where specialized methods have been designed for the target domain.

\inparagraph{Setup.}
We evaluate language alignment on NQ~\citep{kwiatkowski2019natural} against vec2vec~\citep{jha2026harnessing} and mini-vec2vec~\citep{dar2025mini}, RNA-to-ATAC alignment on PBMC~\citep{10xgenomics2021pbmc} against SCOT+~\citep{baker2026scot+}, and cross-subject fMRI alignment on NSD~\citep{allen2022massive} against Platonic Brain~\citep{marcos2026platonic}.
Unlike the Platonic Brain benchmark, we report average rather than best performance, since without supervision one cannot select the best-performing seed.
Additional per-pair results are given in \cref{app:domain_specific}.

\begin{figure}[t]
    \centering
    \begin{minipage}[c]{0.355\textwidth}
        \centering
        {\scriptsize\setlength{\tabcolsep}{3pt}\begin {tabular}{llS[table-format=1.4]@{\,\ensuremath {\pm }\,}S[table-format=1.4]}%
\toprule Benchmark&Method&\multicolumn {2}{c}{FOSCTTM $\downarrow $}\\\midrule %
NQ&vec2vec&0.4094&0.1628\\%
&mini-vec2vec&0.0003&0.0015\\%
&ours&\protect \bfseries  0.0000&\protect \bfseries  0.0000\\\midrule %
PBMC&SCOT+&0.1215&0.0275\\%
&ours&\protect \bfseries  0.0887&\protect \bfseries  0.0033\\\midrule %
NSD f\allowbreak {}MRI&platonic brain&0.0353&0.0413\\%
&ours&\protect \bfseries  0.0332&\protect \bfseries  0.0253\\\bottomrule %
\end {tabular}
}%
        \subcaption{Domain-specific benchmarks.}
        \label{tab:domain_main}
    \end{minipage}%
    \hfill
    \begin{minipage}[c]{0.64\textwidth}
        \centering
        \includegraphics{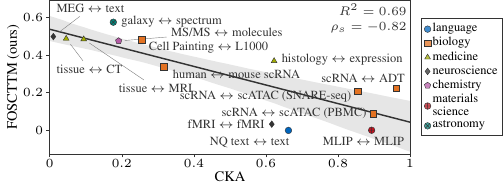} %

        \subcaption{Alignment against shared geometry, one point per modality pair.}
        \label{fig:other_domains}
    \end{minipage}

    \caption{\textbf{
    {Shared geometry predicts unpaired alignment across scientific domains}.}
    (\subref{tab:domain_main})~Our generic aligner is competitive with specialized methods on language (NQ, ten
    encoder pairs), biology (PBMC, RNA $\to$ ATAC) and neuroscience (NSD fMRI, fifty-six subject pairs). The
    non-aggregated metrics of each benchmark are reported in \cref{app:domain_specific}.
    (\subref{fig:other_domains})~Each point corresponds to a modality pair spanning seven scientific domains. 
    As in vision and language, modality pairs with more similar embedding geometry are consistently easier to align without paired data.
    }
    \label{fig:domain_specific}
\end{figure}

\inparagraph{Results.}
Our method is competitive with mini-vec2vec on NQ and achieves lower mean FOSCTTM than the specialized baselines on both PBMC 
($0.089$ compared to $0.121$ for SCOT+)
and NSD fMRI 
($0.033$ compared to $0.035$ for Platonic Brain), 
\cf \cref{tab:domain_main}.
Thus, 
our
simple geometry-based aligner transfers across language, biology, and neuroscience without task-specific adaptation.

\subsection{Shared Geometry Predicts Alignment Across Modalities}
\label{subsec:other_domains}
Here,
we
study
whether shared geometry can identify new settings in which unpaired alignment is likely to work, including settings without established alignment baselines.

\inparagraph{Setup.}
We collect independently trained models from medicine, biology, chemistry, materials science, astronomy, neuroscience, and language.
For each modality pair, we
compare CKA geometric similarity with our aligner's FOSCTTM.
Full modality pairs and results are provided in \cref{app:other_domains}.

\inparagraph{Results.}
\Cref{fig:other_domains} shows the same relationship observed in vision and language across these diverse settings. 
CKA explains $69\%$ of the variance in alignment performance ($R^2=0.69$), with a Spearman correlation of $\rho_s=-0.82$.
Models with more similar geometric structure are consistently easier to align without pairs, while poorly aligned spaces remain difficult to map.

\finding{%
For many models and domains,
independent 
embedding spaces 
of different modalities
share sufficient geometry for coarse alignment without pairs, even across different datasets.
The success of unpaired alignment depends strongly on the amount of shared geometry.
}

\subsection{Few-Pair Alignment}
\label{subsec:few_pair}
We showed that with shared geometry we can yield
reasonable
cross-modal correspondences without 
paired examples.
Yet having some examples can boost performance.
We next 
study
how our geometric initialization lowers the amount of paired examples to obtain a strong alignment.

\inparagraph{Setup.}
We use the same MS COCO setting with DINOv2 ViT-B/14 and Qwen3-8B gen and provide from 0 to 1,000 paired image-caption examples.
Baselines are
linear or orthogonal maps~\citep{Maiorca:2023:LST}, ASIF~\citep{Norelli:2023:ASI}, Local CKA~\citep{Maniparambil:2024:DVL}, SUE~\citep{yacobi2026learning}, STRUCTURE~\citep{groger2026limited}, and SOTAlign~\citep{roschmann2026sotalign}.

\begin{figure}[t]
    \centering
    \includegraphics{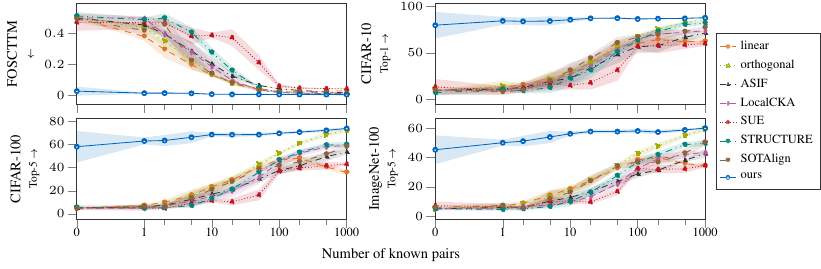} %
    \vspace{-1em}
    \caption{\textbf{Shared geometry drastically reduces the amount of paired supervision beneficial for improved alignment.} 
    Alignment quality (FOSCTTM) and downstream zero-shot classification accuracy are shown as a function of the number of known image-text pairs. 
    Starting from a strong unpaired solution (leftmost point), our geometry-based method consistently outperforms existing few-pair methods, with the largest gains below 100 pairs.
    }
    \label{fig:few_pair}
\end{figure}

\inparagraph{Results.}
\Cref{fig:few_pair} shows alignment quality 
versus
the number
of 
paired examples.
We
outperform 
all
baselines at 
all
budgets, with 
largest gains in the low-pair regime:
with up to $20$ pairs, 
our
FOSCTTM is $14\times$ to $28\times$ 
lower than 
the strongest baseline's.
The gap remains $5.4\times$ at $50$ pairs and $3.2\times$ at $100$,
while the orthogonal map nearly catches up only at $1{,}000$.
Remarkably, 
baselines 
matching
our 
zero-pair
FOSCTTM 
need $100$ to $500$ pairs to do so, 
and SUE never does.
The advantage is even stronger downstream: our zero-pair model reaches $79.6\%$ top-1 accuracy on CIFAR-10, a level only STRUCTURE and 
orthogonal map ever reach, and only with $500$ pairs. 
Conversely, 
we
only 
need
$20$ pairs 
to match or outperform five of 
seven baselines given $1{,}000$ pairs.

\finding{%
Leveraging shared geometry also yields a clear advantage in the few-pair regime.
}

\subsection{Visualizing the Alignment with Text-to-Image Generation}
\label{subsec:text2img}
We 
here
visualize 
our
coarse correspondence 
through
four different experiments: 
text-to-image retrieval, image-to-text retrieval, PCA projection of the aligned space, and text-to-image generation.

\inparagraph{Setup.}
We use DINOv2 ViT-B/14 and all-mpnet-base-v2 for all four experiments. 
For the first three experiments, we show the results when using 
MS COCO
images
and 
SPC
captions. 
Retrieval and PCA projections are done on the held-out MS COCO validation split. 
For text-to-image generation, we fit the aligner on disjoint MS COCO images and captions,
using
0 
to
100 paired examples. 
We then train a small RAE diffusion model~\citep{zheng2026diffusion} exclusively on ImageNet-1K 
images~\citep{russakovsky2015imagenet},
conditioned on 
mean-pooled DINOv2 embeddings.
At inference,
captions 
are
mapped 
to
the
image embedding space with our aligner,
which
condition the visual generator. 
Thus, paired data enters the pipeline only through the cross-modal aligner. 
The goal is not to build a competitive text-to-image system, but to 
view directly our
cross-modal alignment.

\begin{figure}[t]
    \centering
    \captionsetup[subfigure]{skip=4pt}%
    \begin{subfigure}[t]{0.48\textwidth}
        \centering
        \includegraphics{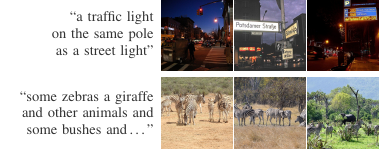} %
        \caption{Text-to-image retrieval.}
        \label{fig:qualitative_text_to_image}
    \end{subfigure}%
    \hfill
    \begin{subfigure}[t]{0.48\textwidth}
        \centering
        \includegraphics{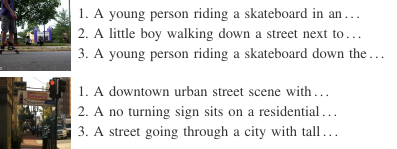} %
        \caption{Image-to-text retrieval.}
        \label{fig:qualitative_image_to_text}
    \end{subfigure}

    \vspace{0.4em}
    \begin{subfigure}[b]{0.48\textwidth}
        \centering
        \includegraphics{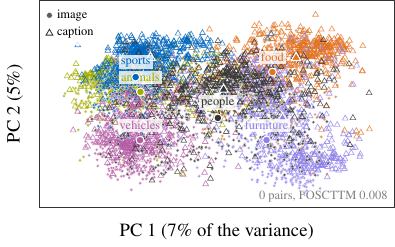} %
        \caption{Shared semantic regions after alignment.}
        \label{fig:qualitative_space}
    \end{subfigure}%
    \begin{subfigure}[b]{0.48\textwidth}
        \centering
        \includegraphics{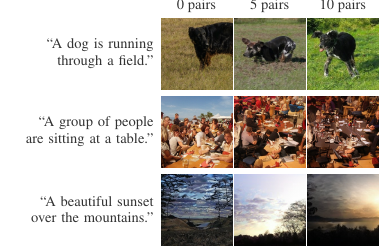} %
        \caption{Text-to-image generation.}
        \label{fig:text2img_small}
    \end{subfigure}

    \caption{\textbf{
    {Unpaired alignment recovers semantic structure}.} 
    We visualize
    with our aligner
    the shared space 
    of
    DINOv2 ViT-B/14 and all-mpnet-base-v2 trained on 
    MS COCO-train
    images
    and 
    SPC
    captions. 
    Retrieval images and captions are from MS COCO-val. 
    Given a text, our aligner can retrieve matching images 
    (\subref{fig:qualitative_text_to_image})
    and vice versa (\subref{fig:qualitative_image_to_text}). 
    We 
    plot
    the first two principal components of the aligned space 
    (\subref{fig:qualitative_space}) 
    along
    with the MS COCO supercategories:
    classes coarsely align.
    Text-to-image generations
    (\subref{fig:text2img_small}),
    using a diffusion model 
    from
    embeddings to 
    images,
    show that
    our unpaired aligner 
    already generates images of the broad class and scene while additional pairs improve finer details.
    }
    \label{fig:qualitative}
\end{figure}

\inparagraph{Results.}
Results are in \cref{fig:qualitative}.
Both retrieval experiments show that coarse semantic properties, such as scene type and broad object category, are captured by the mapping. 
The point clouds colored by 
MS COCO superclasses further emphasize the coarse alignment. 
Though not every image is mapped exactly to its corresponding caption, 
categories are correctly mapped. 
We show similar plots for three other modalities with varying alignment quality in \cref{fig:domain_spectrum}. 
Our aligner can be successfully combined with a generative model to enable
a
full mapping from 
captions to 
images,
with reasonable quality in the unpaired setting, albeit with often missing fine details.
Adding a few pairs sharpens individual objects and scene attributes. 
We evaluate the generated images quantitatively and compare our aligner to a linear aligner, also with the Contriever model~\citep{izacard2022unsupervised} in \cref{app:text2img}.

\section{Discussion}

We 
have
shown 
that shared embedding geometrical
information
often 
suffices
for
cross-modal alignment.
Yet,
important 
conditions
for success
unavoidably remain.
We discuss three
of them here.

\inparagraph{What enables unpaired cross-modal alignment?}
Related semantics alone 
seem
insufficient:
both embedding spaces must also organize 
them
in a similar geometric structure,
as
supported by our 
experiments, where CKA strongly predict 
our aligner's performance.
Models
with more similar geometry are consistently 
better
aligned.
This
relationship holds 
experimentally
beyond vision and language, suggesting that the role of shared geometry is not specific to a particular pair of modalities.
Thus, geometric similarity 
explains
why unpaired alignment succeeds.
This 
also highlights an important limitation.
Currently, we cannot reliably determine from 
unpaired data alone whether two spaces are sufficiently similar to be aligned,
as
usual
alignment metrics
need
matching
samples.
Developing 
fully unsupervised
alignment prediction 
scores
remains an open problem.

\inparagraph{Why is the recovered alignment mostly coarse?}
The recovered alignment is often substantially better at matching coarse semantic structure than fine-grained details.
This is consistent with the observation that modalities share information about the underlying world while also encoding modality-specific information~\citep{koepke2026back}.
Our analysis of local versus coarse geometric alignment in \cref{app:alignment_granularity} supports this view: coarse clusters are more consistently aligned than local neighborhoods.
Thus, our aligner can recover shared semantic structure without necessarily recovering modality-specific details or exact sample-wise correspondences.

\inparagraph{How does 
data quality affect shared geometry?}
We
mostly
used image-caption datasets
with
relatively high-quality visual and textual descriptions.
Yet, much of 
web-available
text
does not describe visual scenes and may 
thus
share less semantic structure with vision representations.
This limits the applicability of our current approach to noisier multimodal data,
though
our results suggest that 
it
can be mitigated by improving
representation quality.
Language models 
already help identify text 
relevant to 
visual content~\citep{zhang2026multi}, and generative token pooling~\citep{wang2026the} improves geometric alignment for short captions (see \cref{app:gen_token_pooling}).
These results suggest that 
better filtering and representations
might
extend unpaired cross-modal alignment to noisier 
settings.

\section{Conclusion}
We showed that independently trained models 
across
modalities can often be aligned without any paired data.
A simple Wasserstein Procrustes method with a coarse geometric initialization (GeoWarP) recovers coarse cross-modal correspondences,
even when 
modalities are drawn from different datasets.
We
further showed 
alignment success is strongly predicted by CKA scores,
indicating that greater geometric similarity leads to better unpaired alignment.
These 
findings
are consistent across vision and language and extend to scientific and medical domains.
Finally, 
paired supervision
can be incorporated naturally into 
our
framework.
Our method substantially improves over existing approaches in the low-pair regime and remains competitive as more pairs become available.
Taken together, 
our
results suggest that a multimodal Rosetta Stone is not always necessary.
In many cases, the geometry learned independently by different models already contains enough information to connect their modalities.
Our work creates exciting opportunities for new modalities for which paired information is scarce or even non-existent, as long as sufficient geometry is shared between representations.

\subsubsection*{Acknowledgments}
We thank Nikita Araslanov and Riccardo Marin for their helpful suggestions.
This work was supported by a Packard Fellowship to P.I. Support from ONR MURI grants N00014-22-1-2740 and N00014-26-1-2024 and was partially funded by the German Federal Ministry of Education and Research through the ExperTeam4KI funding program for UDance (Grant No. 01IS24064).

\microtypesetup{expansion=true}
\bibliography{main}
\bibliographystyle{iclr2027_conference}
\microtypesetup{expansion=false}

\clearpage
\appendix

\section*{Appendix Overview}

\begin{itemize}[leftmargin=1.2em, itemsep=0.15em, topsep=0.2em]
    \item \Cref{app:unpaired_alignment_methods} describes our aligner, GeoWarP, in full and ablates every component and hyperparameter of the method.
    \item \Cref{app:experimental_details} lists the models, datasets, splits, prompts, metrics and their
          parameters for each experiment of the main paper, so that every number can be reproduced.
    \item \Cref{app:results} reports the results the main text summarizes, per dataset, per benchmark and per
          modality pair, together with the generated images of the text-to-image experiment.
    \item \Cref{app:gen_token_pooling} studies generative token pooling~\citep{wang2026the}, which uses the average of generated tokens 
          as the embedding for a given caption. 
          It helps on short captions and hurts on detailed ones, which explains why it fails on the detailed captioning corpora of  \cref{subsec:unpaired}.
    \item \Cref{app:alignment_granularity} measures which part of the geometry the two modalities actually share.
          The agreement sits in coarse rather than fine structures.
\end{itemize}

\section{GeoWarP: Details and Ablation}
\label{app:unpaired_alignment_methods}
This appendix provides details about our algorithm and ablates it against other components and hyperparameter choices.

\subsection{Our Algorithm}
\label{app:algorithm}
\Cref{alg:ours,alg:init} summarize our method, called \textbf{Geo}metric initialization and \textbf{Wa}sse\textbf{r}stein \textbf{P}rocrustes (GeoWarP).
It consists of three steps:
The initialization seeks a coarse transport that connects the two spaces, the read-out transforms the coarse transport into a sample-wise correspondence, and the refinement uses standard Wasserstein Procrustes alternation to improve the mapping.

\inparagraph{Notation.}
Let $\mX \in \R^{n \times d_X}$ and $\mY \in \R^{m \times d_Y}$, $d_X \leq d_Y$, be centered and normalized embeddings from independently trained models.
We use $\va = \vone_n / n$ and $\vb = \vone_m / m$ to denote the uniform marginals and $\Pi(\va, \vb) = \{\mT \in \R^{n \times m}_{\ge 0} : \mT\vone_m = \va,\ \mT^{\top}\vone_n = \vb\}$ for the transport polytope.
The vertices of the transport polytope for square matrices are permutation matrices $\mathcal{P}_n$.
Moreover, we use the Stiefel manifold $\St(d_X, d_Y)$, which is the set of semi-orthogonal matrices $\mW \in \R^{d_X \times d_Y}$ whose columns are orthogonal if $d_X \ge d_Y$ and whose rows are orthonormal otherwise.
We can transform a matrix into a semi-orthogonal matrix with the polar factor $\polar(\mM) = \mU\mV^{\top}$ using the singular value decomposition $\mM = \mU\mSigma\mV^{\top}$.
To ease the notation with Gromov-Wasserstein (GW) problems, we denote the linear kernels as $\mK_X = \mX\mX^{\top}$ and $\mK_Y = \mY\mY^{\top}$.
Finally, let $\mH = \mI - \tfrac{1}{n}\vone\vone^{\top}$ be the centering matrix that is used in the definition of the centered kernel alignment (CKA)~\citep{kornblith2019similarity}.

\inparagraph{Initialization.}
The Procrustes-Wasserstein problem is jointly non-convex.
Moreover, random initializations fail in practice~\citep{artetxe2018robust} and the uniform plan carries no information about the map, since on centered data $\mX^{\top}\va\vb^{\top}\mY = \bar{\vx}\bar{\vy}^{\top} = \vzero$, so the first Procrustes step has no signal.

We don't assume a perfect isometry but rather shared coarse structures in the embedding spaces.
Therefore, we use the coarse clustering initialization from mini-vec2vec~\citep{dar2025mini}.
This initialization repeatedly matches cluster centers to create a set of pseudo-pairs.
This makes the problem computationally tractable and relies on shared coarse structures.

For $s = 1, \dots, S$, we draw $b$ rows of $\mX$ and, independently, $b$ rows of $\mY$ and cluster each subset with $k$-means into $C$ clusters.
The resulting cluster centers $\mA_s \in \R^{C \times d_X}$ and $\mB_s \in \R^{C \times d_Y}$ are then matched by the permutation that maximizes their linear centered kernel alignment (CKA)~\citep{kornblith2019similarity}:
\begin{equation}
    \label{eq:cluster_qap}
    \mP_s \in \argmax_{\mP \in \mathcal{P}_C}\; \cka\left(\mA_s, \mP\mB_s\right).
    \end{equation}

The optimization of CKA over permutation matrices is a quadratic assignment problem (QAP), as the following lemma shows.
\begin{lemma}
\label{lem:cka_qap}
Let $\mA \in \R^{C \times d_X}$, $\mB \in \R^{C \times d_Y}$ contain $C$ samples. Then
\begin{equation}
    \label{eq:cka_qap}
    \argmax_{\mP \in \mathcal{P}_C} \cka(\mA, \mP\mB) = \argmax_{\mP \in \mathcal{P}_C} \Tr\left(\bar{\mK}_A\mP\bar{\mK}_B\mP^{\top}\right)
\end{equation}
is a Koopmans-Beckmann quadratic assignment problem~\citep{koopmans1957assignment} with $\bar{\mK}_A = \mH \mA \mA^{\top} \mH$ and $\bar{\mK}_B = \mH \mB \mB^{\top} \mH$ for $\mH = \mI - \tfrac{1}{C}\vone\vone^{\top}$.
\end{lemma}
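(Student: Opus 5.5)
We will write out the definition of linear CKA and check that, restricted to permutations, every factor except one cross term is constant. Linear CKA between two row-matched sample matrices is
\[
\cka(\mA,\mB')=\frac{\Tr(\bar{\mK}_A\bar{\mK}_{B'})}{\|\bar{\mK}_A\|_F\,\|\bar{\mK}_{B'}\|_F},
\qquad \bar{\mK}_{B'}=\mH\mB'\mB'^{\top}\mH .
\]
We take $\mB'=\mP\mB$. The first step is to show that centering commutes with permutations. Every permutation satisfies $\mP\vone=\vone$ and $\vone^{\top}\mP^{\top}=\vone^{\top}$, so $\mP\vone\vone^{\top}\mP^{\top}=\vone\vone^{\top}$ and hence $\mP\mH\mP^{\top}=\mH$, i.e. $\mH\mP=\mP\mH$. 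It follows that
\[
\bar{\mK}_{\mP B}=\mH\mP\mB\mB^{\top}\mP^{\top}\mH=\mP\mH\mB\mB^{\top}\mH\mP^{\top}=\mP\bar{\mK}_B\mP^{\top}.
\]

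The second step is to show that the denominator does not depend on $\mP$. The Frobenius norm is invariant under orthogonal conjugation, so $\|\mP\bar{\mK}_B\mP^{\top}\|_F=\|\bar{\mK}_B\|_F$. This makes the denominator a positive constant $c=\|\bar{\mK}_A\|_F\|\bar{\mK}_B\|_F$. We must note the degenerate case: if either centered kernel vanishes, CKA is undefined or constant under any convention, so both argmax sets equal all of $\mathcal{P}_C$ and the equality still holds. Otherwise the objective equals $c^{-1}\Tr(\bar{\mK}_A\mP\bar{\mK}_B\mP^{\top})$. Multiplying by a positive constant preserves maximizers, which gives the set equality in \cref{eq:cka_qap}.

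The final step is to identify the Koopmans–Beckmann form. Expanding the trace gives $\sum_{i,j,k,l}(\bar{\mK}_A)_{ij}\mP_{jk}(\bar{\mK}_B)_{kl}\mP_{il}$. With $\pi$ the permutation encoded by $\mP$, this equals $\sum_{i,j}(\bar{\mK}_A)_{ij}(\bar{\mK}_B)_{\pi(j)\pi(i)}$. Both kernels are symmetric, so this is exactly the Koopmans–Beckmann objective with flow matrix $\bar{\mK}_A$ and distance matrix $\bar{\mK}_B$, stated as a maximization.

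The only step needing care is the commutation $\mH\mP=\mP\mH$. Without it, the centered kernel of $\mP\mB$ would not be a conjugate of $\bar{\mK}_B$, and the denominator would not be constant. The rest is routine.
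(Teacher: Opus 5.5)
Your proof is correct and takes essentially the same route as the paper. Both arguments rest on $\mH\mP=\mP\mH$, use it to show the normalization is independent of $\mP$, and reduce the numerator to $\Tr(\bar{\mK}_A\mP\bar{\mK}_B\mP^{\top})$: the paper phrases this through HSIC, the cyclic trace and idempotence of $\mH$, while you use $\bar{\mK}_{\mP\mB}=\mP\bar{\mK}_B\mP^{\top}$ and Frobenius-norm invariance, and your explicit index check of the Koopmans--Beckmann form and the zero-kernel remark are small additions the paper leaves implicit.
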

\begin{proof}
Let $\mK_A = \mA \mA^{\top}$ and $\mK_B = \mB \mB^{\top}$ be the unnormalized kernels and $\mP \in \mathcal{P}_C$ a permutation matrix.
Given the definition of CKA, we have that 
\begin{equation*}
    \cka(\mA, \mP\mB) = \frac{\hsic(\mK_A, \mP \mK_B \mP^{\top})}{\sqrt{\hsic(\mK_A, \mK_A)\hsic(\mP \mK_B \mP^{\top}, \mP \mK_B \mP^{\top})}}.
\end{equation*}
First, we notice that the $\mH$ commutes with a permutation, \ie,
\begin{equation*}
    \mP \mH = \mP (\mI - \tfrac{1}{C}\vone\vone^{\top}) = \mP - \tfrac{1}{C}\vone\vone^{\top} = (\mI - \tfrac{1}{C}\vone\vone^{\top}) \mP = \mH \mP.
\end{equation*}
Plugged into the definition of HSIC, we have
\begin{align*}
    \hsic(\mP \mK_B \mP^{\top}, \mP \mK_B \mP^{\top})
    &= \tfrac{1}{(C - 1)^2} \Tr(\mP \mK_B \mP^{\top} \mH \mP \mK_B \mP^{\top} \mH) \\
    &= \tfrac{1}{(C - 1)^2} \Tr(\mK_B \mH \mP^{\top} \mP \mK_B \mP^{\top} \mP \mH),
\end{align*}
where we additionally use the cyclic property of the trace.
The permutations each cancel out because of $\mP^{\top} \mP = \mI$.
Thus, the denominator is independent of $\mP$.
Next, we can use that $\mH$ is idempotent,
\begin{equation*}
    \mH \mH = (\mI - \tfrac{1}{C}\vone\vone^{\top}) (\mI - \tfrac{1}{C}\vone\vone^{\top}) = \mI - \tfrac{1}{C}\vone\vone^{\top} - \tfrac{1}{C}\vone\vone^{\top} + \tfrac{1}{C^2}\vone\vone^{\top}\vone\vone^{\top} = \mI - \tfrac{1}{C}\vone\vone^{\top} = \mH.
\end{equation*}
Plugging this in the definitions leads to
\begin{equation*}
    \cka(\mA, \mP\mB) \propto \Tr(\mK_A \mH \mP \mK_B \mP^{\top} \mH) = \Tr(\mK_A \mH \mH \mP \mK_B \mP^{\top} \mH \mH) = \Tr(\bar{\mK}_A\mP\bar{\mK}_B\mP^{\top})
\end{equation*}
by the cyclic property of the trace.
\end{proof}
We use the dual-ascent solver MPOpt with their primal heuristic~\citep{hutschenreiter2021fusion}.
Each iteration implicitly optimizes a low-rank Gromov-Wasserstein (GW) problem.
GW seeks a transport plan that preserves within-space similarity:
\begin{equation}
\label{eq:gromov_wasserstein2}
\mT^{\text{GW}} \in \argmax_{\mT \in \Pi(\va, \vb)}\;
\Tr(\mK_X\mT\mK_Y\mT^\top).
\end{equation}
This problem is closely related to the Wasserstein Procrustes problem~\citep{alvarez2019towards} and the maximization of CKA:
\begin{lemma}
\label{lem:cka_gw}
Let $\mA \in \R^{C \times d_X}$, $\mB \in \R^{C \times d_Y}$ contain $C$ samples with unnormalized kernels $\mK_A = \mA \mA^{\top}$ and $\mK_B = \mB \mB^{\top}$.
Then
\begin{equation}
    \label{eq:cka_gw}
    \argmax_{\mP \in \mathcal{P}_C} \cka(\mA, \mP\mB) = \argmax_{\mP \in \mathcal{P}_C} \Tr\left(\mK_A\mP\mK_B\mP^{\top}\right) - \frac{2}{C} \Tr(\vk \vl^{\top} \mP^{\top}),
\end{equation}
where $\vk = \mK_A \vone$ and $\vl = \mK_B \vone$ are the row sums of the unnormalized kernels.
Maximizing CKA over permutations is therefore the Gromov-Wasserstein problem of \cref{eq:gromov_wasserstein2},
restricted to permutations, with the additional linear term $-\tfrac{2}{N}\Tr(\vk\vl^{\top}\mP^{\top})$.
\end{lemma}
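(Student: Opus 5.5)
The plan is to start from the reduction already established in \cref{lem:cka_qap} and expand the centering matrices explicitly. By \cref{lem:cka_qap}, the denominator of $\cka(\mA,\mP\mB)$ does not depend on $\mP$, and the numerator is a positive multiple of $\Tr(\mK_A\mH\mP\mK_B\mP^{\top}\mH)$. We assume the nondegenerate case $\hsic(\mK_A,\mK_A)\hsic(\mK_B,\mK_B)>0$, so that CKA is well defined. By the cyclic property of the trace, this numerator equals $\Tr(\mH\mK_A\mH\,\mP\mK_B\mP^{\top})$. Hence
\[
\argmax_{\mP\in\mathcal{P}_C}\cka(\mA,\mP\mB)=\argmax_{\mP\in\mathcal{P}_C}\Tr(\mH\mK_A\mH\,\mP\mK_B\mP^{\top}).
\]
It then suffices to show that this objective equals $\Tr(\mK_A\mP\mK_B\mP^{\top})-\tfrac{2}{C}\Tr(\vk\vl^{\top}\mP^{\top})$ plus a constant independent of $\mP$.

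Next I expand $\mH\mK_A\mH$ using $\mH=\mI-\tfrac1C\vone\vone^{\top}$ and $\vk=\mK_A\vone$ (with $\mK_A$ symmetric):
\[
\mH\mK_A\mH=\mK_A-\tfrac1C\vk\vone^{\top}-\tfrac1C\vone\vk^{\top}+\tfrac{\vone^{\top}\vk}{C^2}\vone\vone^{\top}.
\]
I then pair each of the four terms with $\mP\mK_B\mP^{\top}$ under the trace:
\begin{itemize}
\item The first term gives the Gromov--Wasserstein objective $\Tr(\mK_A\mP\mK_B\mP^{\top})$.
\item For the two middle terms, use $\mP^{\top}\vone=\vone$ and $\vone^{\top}\mP=\vone^{\top}$, together with symmetry. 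Then
\[
\Tr(\vk\vone^{\top}\mP\mK_B\mP^{\top})=\vone^{\top}\mP\mK_B\mP^{\top}\vk=\vone^{\top}\mK_B\mP^{\top}\vk=\vl^{\top}\mP^{\top}\vk=\Tr(\vk\vl^{\top}\mP^{\top}).
\]
The other middle term is its transpose, so it has the same value. Together they contribute $-\tfrac2C\Tr(\vk\vl^{\top}\mP^{\top})$.
\item The last term gives $\tfrac{\vone^{\top}\vk}{C^2}\,\vone^{\top}\mP\mK_B\mP^{\top}\vone=\tfrac{\vone^{\top}\vk}{C^2}\,\vone^{\top}\mK_B\vone$, which is constant in $\mP$ and can be dropped from the argmax.
\end{itemize}

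The final remark of the lemma then follows by comparison with \cref{eq:gromov_wasserstein2}. Restricting the transport polytope to its vertices (scaled permutations) turns the GW objective into $\Tr(\mK_A\mP\mK_B\mP^{\top})$ up to a positive factor. The CKA objective therefore differs from it only by the linear term. The factor $\tfrac2N$ in the statement should read $\tfrac2C$.

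The main obstacle is bookkeeping rather than conceptual. One must carefully justify that the two cross terms coincide, using the symmetry of $\mK_A$ and $\mK_B$ and the fact that permutations fix $\vone$. One must also check that the dropped constant and the positive normalization factors do not change the argmax. This relies on the denominator being $\mP$-independent, which \cref{lem:cka_qap} already provides.
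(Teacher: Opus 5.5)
Your proposal is correct and is essentially the paper's proof: reduce via \cref{lem:cka_qap} to $\Tr(\mK_A\mH\mP\mK_B\mP^{\top}\mH)$, expand $\mH=\mI-\tfrac{1}{C}\vone\vone^{\top}$, and use $\mP\vone=\mP^{\top}\vone=\vone$ with the symmetry of the kernels to merge the two cross terms into $-\tfrac{2}{C}\Tr(\vk\vl^{\top}\mP^{\top})$ and drop the constant. Moving $\mH$ onto $\mK_A$ by cyclicity before expanding is only cosmetic, and you are right that the $\tfrac{2}{N}$ in the statement's final sentence should read $\tfrac{2}{C}$.
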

\begin{proof}
For a permutation $\mP \in \mathcal{P}_C$, we have that $\cka(\mA, \mP\mB) \propto \Tr(\mK_A \mH \mP \mK_B \mP^{\top} \mH)$, analogously to \cref{lem:cka_qap}.
Plugging in the definition of $\mH = \mI - \tfrac{1}{C}\vone\vone^{\top}$ and expanding, we get
$\Tr(\mK_A \mH \mP \mK_B \mP^{\top} \mH) = \Tr(\mK_A (\mI - \tfrac{1}{C}\vone\vone^{\top}) \mP \mK_B \mP^{\top} (\mI - \tfrac{1}{C}\vone\vone^{\top})) = \Tr(\mK_A \mP \mK_B \mP^{\top} - \frac{1}{C} \mK_A \vone \vone^{\top} \mP \mK_B \mP^{\top} - \frac{1}{C} \mK_A \mP \mK_B \mP^{\top} \vone \vone^{\top} + \frac{1}{C^2} \mK_A \vone \vone^{\top} \mP \mK_B \mP^{\top} \vone \vone^{\top}) = \Tr(\mK_A \mP \mK_B \mP^{\top}) - \frac{1}{C} \Tr(\vk \vl^{\top} \mP^{\top}) - \frac{1}{C} \Tr(\vk^{\top} \mP \vl ) + \frac{1}{C^2} \vone^{\top} \vk \vone^{\top} \vl = \Tr\left(\mK_A\mP\mK_B\mP^{\top}\right) - \frac{2}{C} \Tr(\vk \vl^{\top} \mP^{\top}) + \text{const}$,
where const is a constant that does not depend on $\mP$.
We use that $\mP \vone = \vone$, $\mP^{\top} \vone = \vone$ and the cyclic property of the trace.
\end{proof}
In particular, for our centered and normalized embeddings, both $\vl$ and $\vk$ are small, and the maximization over the linear CKA approximately corresponds to a maximization of the GW problem.
Given the cluster assignments $\mZ_X \in \{0, 1\}^{n \times C}$ and $\mZ_Y \in \{0, 1\}^{m \times C}$, and the cluster sizes $\vn = \mZ_X^{\top}\vone$ and $\vm = \mZ_Y^{\top}\vone$, we can recover a sample-wise transport matrix with
\begin{equation}
    \label{eq:lowrank_plan}
    \mT_{\mP_s} = \tfrac{1}{C}\,\mZ_X\,\mathrm{diag}(\vn)^{-1}\,\mP_s\,\mathrm{diag}(\vm)^{-1}\,\mZ_Y^{\top}.
\end{equation}
Substituting it into \cref{eq:gromov_wasserstein2} reduces the $n \times m$ problem to the coarse QAP over the $C$ centers with the cross-covariance to $ \mX^{\top}\mT_{\mP_s}\mY = \tfrac{1}{C}\,\mA_s^{\top}\mP_s\mB_s$.
Moreover, it is a transport plan with rank $C$.
This also shows a resemblance to a traditional initialization in low-rank GW.
\citet{scetbon2022linear} introduce an initialization, where both spaces are initialized with k-means.
However, both clusters are paired using the arbitrary order from k-means, and no explicit permutation is optimized:
\begin{equation}
    \label{eq:lowrank_original}
    \mT = \mZ_X\,\mathrm{diag}(1/\vg)\,\mZ_Y^{\top}.
\end{equation}
Instead, they optimize all three factors with mirror descent.
We compare all low-rank GW initializations in \cref{app:ablation_components} and observe that for structured data, the clustering and matching approach from mini-vec2vec leads to a much better initialization and final optimum.

As a last step, all transport matrices are averaged and captured in the cross-covariance $\mM=\frac{1}{SC}\sum_s\mA_s^\top\mP_s\mB_s = \mX^{\top} \left(\frac{1}{S}\sum_s \mT_{\mP_s}\right)\mY$.

\inparagraph{Read-out.}
Each initialization is a correspondence between clusters, and the read-out turns them into a correspondence between samples.
Given the averaged transport matrix, we take one conditional-gradient step on the Gromov-Wasserstein objective.
The gradient of \cref{eq:gromov_wasserstein2} at $\bar{\mT} = \frac{1}{SC}\sum_s \mT_{\mP_s}$ is $2\mK_X\bar{\mT}\mK_Y = 2\mX\mM\mY^{\top}$.
A conditional-gradient step maximizes the linearization over the transport polytope, \ie $\mT_0 = \hungarian\left(\mX\mM\mY^{\top}\right)$.
From this, we can get an initial map with orthogonal Procrustes: $\mW = \polar\left(\mX^{\top}\mT_0\mY\right)$.
An assignment between all $n$ and $m$ samples is cubic in their number, so we shuffle both sets once, cut them into consecutive blocks of $b$ rows, and assign the $i$-th block of $\mX$ to the $i$-th block of $\mY$.
This yields $\min(n, m)$ pseudo-pairs from $\lceil \min(n, m) / b \rceil$ small assignments each of size at most $b \times b$.

\inparagraph{Refinement.}
The refinement follows the standard alternation for Wasserstein Procrustes on batches.
For $r = 1, \dots, R$, we draw independent batches $\mX_B$ and $\mY_B$ of $b$ rows and alternate the assignment $\mT = \hungarian\left(\mX_B\mW\mY_B^{\top}\right)$ with the Procrustes update $\mW = \polar\left(\mX_B^{\top}\mT\mY_B\right)$.

\inparagraph{Retrieval.}
During validation, we simply use the inner product between the mapped source and target embedding, $\vx\mW\vy^{\top}$.
Because the embeddings are normalized, this is the same as their cosine similarity.

\inparagraph{Known pairs.}
Pairs can be incorporated naturally into our algorithm.
Given $p$ pairs, $(\hat{\vx}_i)_{i=1}^p = \hat{\mX} \in \R^{p \times d_X}$ and $(\hat{\vy}_j)_{j=1}^p = \hat{\mY} \in \R^{p \times d_Y}$, we can treat them similar to the unpaired samples but with a known correspondence.
During the QAP, known pairs enter as a linear term:
\begin{lemma}
\label{lem:cka_pairs}
Let $\mA \in \R^{C \times d_X}$, $\mB \in \R^{C \times d_Y}$ be $C$ unpaired samples and let $\hat{\mX} \in \R^{p \times d_X}$, $\hat{\mY} \in \R^{p \times d_Y}$ be $p$ paired samples.
We collect all samples in $\tilde{\mA} = \begin{pmatrix}\mA\\\hat{\mX}\end{pmatrix}$ and $\tilde{\mB} = \begin{pmatrix}\mB\\\hat{\mY}\end{pmatrix}$ and their centered kernel as $\tilde{\mK} = \mH \tilde{\mA} \tilde{\mA}^{\top} \mH$ and $\tilde{\mL} = \mH \tilde{\mB} \tilde{\mB}^{\top} \mH$, partitioned into $\tilde{\mK}_{11} \in \R^{C \times C}$, $\tilde{\mK}_{12} \in \R^{C \times p}$, and $\tilde{\mK}_{22} \in \R^{p \times p}$ and likewise for $\tilde{\mL}$.
Then
\begin{equation}
    \label{eq:cka_pairs}
    \argmax_{\mP \in \mathcal{P}_C} \cka\left(\begin{pmatrix}\mA\\\hat{\mX}\end{pmatrix}, \begin{pmatrix}\mP \mB\\\hat{\mY}\end{pmatrix}\right) = \argmax_{\mP \in \mathcal{P}_C} \Tr\left(\tilde{\mK}_{11}\mP\tilde{\mL}_{11}\mP^{\top}\right) + 2 \Tr\left(\tilde{\mK}_{12} \tilde{\mL}_{12}^{\top} \mP^{\top}\right).
\end{equation}
Therefore, the pairs only enter during centering and as a linear term.
\end{lemma}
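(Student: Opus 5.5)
The plan is to reduce to \cref{lem:cka_qap} with the block-permutation $\tilde{\mP} = \diag(\mP, \mI_p) \in \mathcal{P}_{C+p}$. Then
\[
\begin{pmatrix}\mP\mB\\ \hat{\mY}\end{pmatrix} = \tilde{\mP}\tilde{\mB}.
\]
Here $\mH = \mI - \tfrac{1}{C+p}\vone\vone^{\top}$ is the centering matrix of size $C+p$. Since $\tilde{\mP}$ is a permutation, the argument of \cref{lem:cka_qap} applies verbatim with $C+p$ samples. The denominator of CKA is invariant under $\tilde{\mP}$, because $\mH$ commutes with $\tilde{\mP}$ and $\tilde{\mP}^{\top}\tilde{\mP} = \mI$. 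The numerator is proportional to
\[
\Tr\bigl(\tilde{\mK}\,\tilde{\mP}\tilde{\mL}\tilde{\mP}^{\top}\bigr),
\]
with $\tilde{\mK} = \mH\tilde{\mA}\tilde{\mA}^{\top}\mH$ and $\tilde{\mL} = \mH\tilde{\mB}\tilde{\mB}^{\top}\mH$. More precisely, \cref{lem:cka_qap} gives $\mH\tilde{\mP}\tilde{\mB}\tilde{\mB}^{\top}\tilde{\mP}^{\top}\mH = \tilde{\mP}\tilde{\mL}\tilde{\mP}^{\top}$. So maximizing CKA over $\mP \in \mathcal{P}_C$ is the same as maximizing this trace over the restricted family of block permutations.

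Next I expand the trace blockwise. Conjugation gives
\[
\tilde{\mP}\tilde{\mL}\tilde{\mP}^{\top} = \begin{pmatrix}\mP\tilde{\mL}_{11}\mP^{\top} & \mP\tilde{\mL}_{12}\\ \tilde{\mL}_{12}^{\top}\mP^{\top} & \tilde{\mL}_{22}\end{pmatrix},
\]
using symmetry, so $\tilde{\mL}_{21} = \tilde{\mL}_{12}^{\top}$. Multiplying by $\tilde{\mK}$, whose blocks are $\tilde{\mK}_{11}, \tilde{\mK}_{12}, \tilde{\mK}_{12}^{\top}, \tilde{\mK}_{22}$, and taking the trace of the diagonal blocks gives four terms:
\[
\Tr(\tilde{\mK}_{11}\mP\tilde{\mL}_{11}\mP^{\top}) + \Tr(\tilde{\mK}_{12}\tilde{\mL}_{12}^{\top}\mP^{\top}) + \Tr(\tilde{\mK}_{12}^{\top}\mP\tilde{\mL}_{12}) + \Tr(\tilde{\mK}_{22}\tilde{\mL}_{22}).
\]
By the cyclic property and transposition, the third term equals the second. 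The last term does not depend on $\mP$. Dropping it and the positive proportionality constant yields the claimed objective.

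The main point needing care is that the partitioned blocks $\tilde{\mK}_{ij}$, $\tilde{\mL}_{ij}$ are taken from the jointly centered kernels, not from separately centered ones. This is what makes the reduction to \cref{lem:cka_qap} exact: the centering over all $C+p$ samples is exactly what CKA computes, and it is only $\tilde{\mP}$-equivariant as a whole. This also explains the remark that the pairs enter only through the centering and the linear term $2\Tr(\tilde{\mK}_{12}\tilde{\mL}_{12}^{\top}\mP^{\top})$. The rest is routine bookkeeping.
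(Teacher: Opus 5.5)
Your proof is correct and follows essentially the same route as the paper. You embed $\mP$ as the block permutation $\tilde{\mP} = \diag(\mP, \mI)$, reuse the argument of \cref{lem:cka_qap} with the $(C+p)$-dimensional centering to reduce CKA to $\Tr(\tilde{\mK}\tilde{\mP}\tilde{\mL}\tilde{\mP}^{\top})$, and expand blockwise, merging the two cross terms and dropping the $\mP$-independent term $\Tr(\tilde{\mK}_{22}\tilde{\mL}_{22})$; your explicit check that $\mH\tilde{\mP}\tilde{\mB}\tilde{\mB}^{\top}\tilde{\mP}^{\top}\mH = \tilde{\mP}\tilde{\mL}\tilde{\mP}^{\top}$ just spells out what the paper leaves as ``similar to \cref{lem:cka_qap}''.
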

\begin{proof}
Let $\tilde{\mP} = \diag(\mP, \mI) \in \mathcal{P}_{C + p}$ be the joint permutation matrix for a permutation $\mP \in \mathcal{P}_C$.
Then, similar to \cref{lem:cka_qap},
\begin{align*}
\cka(\begin{pmatrix}\mA\\\hat{\mX}\end{pmatrix}, \begin{pmatrix}\mP \mB\\\hat{\mY}\end{pmatrix}) &= \cka(\tilde{\mA}, \tilde{\mP}\tilde{\mB})\\
&\propto \Tr(\tilde{\mK}\tilde{\mP}\tilde{\mL}\tilde{\mP}^{\top})\\
&=\Tr\left(\begin{pmatrix}\tilde{\mK}_{11} & \tilde{\mK}_{12} \\ \tilde{\mK}_{12}^{\top} & \tilde{\mK}_{22}\end{pmatrix} \begin{pmatrix} \mP & \bm{0} \\ \bm{0} & \mI \end{pmatrix} \begin{pmatrix}\tilde{\mL}_{11} & \tilde{\mL}_{12} \\ \tilde{\mL}_{12}^{\top} & \tilde{\mL}_{22} \end{pmatrix} \begin{pmatrix} \mP^{\top} & \bm{0} \\ \bm{0} & \mI \end{pmatrix}\right)\\
&=\Tr\left(\begin{pmatrix}\tilde{\mK}_{11} & \tilde{\mK}_{12} \\ \tilde{\mK}_{12}^{\top} & \tilde{\mK}_{22}\end{pmatrix} \begin{pmatrix} \mP \tilde{\mL}_{11} \mP^{\top} & \mP \tilde{\mL}_{12} \\ \tilde{\mL}_{12}^{\top} \mP^{\top} & \tilde{\mL}_{22} \end{pmatrix}\right)\\
&=\Tr(\tilde{\mK}_{11} \mP \tilde{\mL}_{11} \mP^{\top} + \tilde{\mK}_{12} \tilde{\mL}_{12}^{\top} \mP^{\top}) + \Tr(\tilde{\mK}_{12}^{\top} \mP \tilde{\mL}_{12} + \tilde{\mK}_{22} \tilde{\mL}_{22})\\
&= \Tr(\tilde{\mK}_{11}\mP\tilde{\mL}_{11}\mP^{\top}) + 2 \Tr(\tilde{\mK}_{12} \tilde{\mL}_{12}^{\top} \mP^{\top}) + \text{const},
\end{align*}
where const is a constant that does not depend on $\mP$.
\end{proof}
The MPOpt solver natively handles a linear term.
This also means that the complexity of the solver doesn't depend on the number of pseudo pairs.
During the Procrustes steps, the pairs also enter as a linear term and are weighted so that they carry the same total weight as the unpaired samples in the batch.

\inparagraph{Reproducibility.}
Embeddings are stored in bfloat16, normalized to unit length, and cast to float32.
The aligner subtracts the mean of its own training set and normalizes every row again.
All random subsets are drawn without replacement, and every run is seeded once for Python, NumPy, and PyTorch with deterministic algorithms enabled.
The clustering is scikit-learn's \texttt{KMeans} with its default parameters, which include $\kmeans$++ seeding, and the labels are reassigned to the nearest center afterwards.
MPOpt is called through \texttt{pylibmgm} 1.1.2 in the variant whose primal heuristic is improved by the local search, with all costs shifted to be non-positive, which changes the objective by a constant only, and with a batch size of 100, 10 greedy generations, and the stopping rule $p = 0.1$, $k = 100$ after at most 1000 batches.
The assignments of the read-out and the refinement use SciPy's \texttt{linear\_sum\_assignment}, a Jonker-Volgenant algorithm~\citep{jonker1988shortest}, on float64 cost matrices.
The polar factor is computed from the SVD of PyTorch and falls back to float64 when the float32 decomposition fails to converge.
We use PyTorch 2.14, scikit-learn 1.9, SciPy 1.18, and five seeds for every reported number unless stated otherwise.

\inparagraph{Computational cost.}
Let $I$ be the number of Lloyd iterations and $d = \max(d_X, d_Y)$.
One clustering run costs $\mathcal{O}(bCdI)$ for the two $\kmeans$ and $\mathcal{O}(C^{2}d)$ for the kernels, and the
QAP solver runs in $\mathcal{O}(C^{4})$, which takes seconds for $C = 30$.
The $S$ runs are independent.
The read-out and the refinement are $\lceil \min(n, m) / b\rceil + R$ assignments of size $b$, each preceded by a
score matrix in $\mathcal{O}(b^{2}d)$ and solved in $\mathcal{O}(b^{3})$ in the worst case, and each followed by a
Procrustes step in $\mathcal{O}(d_X d_Y \min(d_X, d_Y))$.
No stage forms an $n \times m$ matrix.
The memory footprint is $\mathcal{O}(b^{2} + d_X d_Y)$ beyond the embeddings themselves, and the running time is
linear in $n$ through the number of read-out blocks and otherwise independent of the data size.

\subsection{Ablation}
\label{app:ablation}
We perform two ablation studies: one study to compare our choice of components with other choices made in the literature (\cref{app:ablation_components}) and one study showing the robustness of our four hyperparameters (\cref{app:ablation_hyperparameters}). We evaluate on DINOv2 ViT-B/14~\citep{Oquab:2024:DIN} with Qwen3-8B~\citep{yang2025qwen3} and generative token pooling~\citep{wang2026the} on MS COCO~\citep{chen2015microsoft}, GTR~\citep{ni2022large} with GTE~\citep{li2023towards} on NQ~\citep{kwiatkowski2019natural}, and the RNA and ATAC embeddings of SNARE-seq~\citep{chen2019high}, and report FOSCTTM over five seeds.

\subsubsection{Component Ablation}
\label{app:ablation_components}
Our approach consists of three stages: initialization, read-out, and refinement. Most other methods can be divided similarly. In general, both adversarial and optimization-based methods search for a map $f$ from one embedding space to the other under which the source cloud is close to the target cloud~\citep{hartmann2019comparing}:
\begin{equation}
    \label{eq:general_objective}
    f^{\star} \in \argmin_{f \in \mathcal{F}}\; D\left(f(\mX), \mY\right).
\end{equation}
Adversarial methods usually approximate the distance with discriminators, corresponding to the Jensen-Shannon divergence for the standard GAN loss~\citep{goodfellow2014gan}, Wasserstein-1 distance for Wasserstein-GAN~\citep{arjovsky2017wasserstein}, or any $f$-divergence~\citep{nowozin2016f}. This gives adversarial methods the advantage that the function space $\mathcal{F}$ can be chosen more freely. Nonetheless, most of them choose simple MLPs~\citep{jha2026harnessing}, linear transformations~\citep{hoshen2018unsupervised, zhang2017earth, xu2018unsupervised}, or orthogonal transformations~\citep{conneau2017word, zhang2017adversarial}. Finally, the adversarial loss is often only the initialization and is usually further refined with optimization-based refinements~\citep{conneau2017word, zhang2017earth, mohiuddin2019revisiting}. In general, recent work~\citep{jha2026harnessing, dar2025mini} finds that adversarial methods are generally very unstable, and we can confirm this observation even more strongly in our unpaired experiments (\cref{subsec:unpaired}). Therefore, we mainly consider optimization-based methods in our ablation study.

\begin{table}[ht]
    \centering\scriptsize
    \begin {tabular}{lS[table-format=1.3]@{\,\ensuremath {\pm }\,}S[table-format=1.3]S[table-format=1.3]@{\,\ensuremath {\pm }\,}S[table-format=1.3]S[table-format=1.3]@{\,\ensuremath {\pm }\,}S[table-format=1.3]}%
\toprule Initialization&\multicolumn {2}{c}{NQ}&\multicolumn {2}{c}{SNARE-seq}&\multicolumn {2}{c}{MS COCO}\\\midrule %
PCA heuristic&0.503&0.055&0.321&0.234&0.265&0.116\\%
sorted heuristic&0.499&0.034&0.351&0.134&0.472&0.010\\%
Gromov-Wasserstein (\allowbreak {}random)&0.455&0.039&0.209&0.129&0.432&0.153\\%
Gromov-Wasserstein (\allowbreak {}uniform)&0.527&0.042&0.222&0.128&0.316&0.054\\%
mini-vec2vec (\allowbreak {}\ensuremath {k}-means + 2-opt, \ensuremath {C=\allowbreak {}20})&0.082&0.071&\protect \bfseries  0.194&\protect \bfseries  0.041&0.092&0.016\\%
mini-vec2vec (\allowbreak {}\ensuremath {k}-means + 2-opt, \ensuremath {C=\allowbreak {}30})&0.070&0.049&0.221&0.007&0.092&0.019\\%
ours (\allowbreak {}\ensuremath {k}-means + MPOpt)&\protect \bfseries  0.000&\protect \bfseries  0.000&0.216&0.034&\protect \bfseries  0.076&\protect \bfseries  0.023\\\bottomrule %
\end {tabular}

    \caption{\textbf{Initialization.} FOSCTTM of every initialization with the same read-out and no refinement. The clustering and matching initialization outperforms all other initializations. For NQ and MS COCO, MPOpt outperforms the 2-opt solver from mini-vec2vec while being competitive on SNARE-seq.
    }
    \label{tab:initialization_stage}
\end{table}

\inparagraph{Initialization.}
The Wasserstein Procrustes problem is jointly non-convex~\citep{alvarez2019towards} and Gromov-Wasserstein is NP-hard in general~\citep{scetbon2022linear}. Therefore, a variety of heuristics have been introduced to initialize the correspondence. \citet{artetxe2018robust} sorts each row of both kernel square roots to find a correspondence between the distance distributions (sorted heuristic), and \citet{hoshen2018non} takes inspiration from 3D point cloud matching and matches the leading PCA dimensions (PCA heuristic). We also compare a random and uniform transport initialization refined with conditional gradient~\citep{Peyre:2016:GWA} on 1024 samples -- Gromov-Wasserstein (random) and Gromov-Wasserstein (uniform) -- and the original mini-vec2vec initialization with 2-opt and two different numbers of clusters. We do not perform any refinement and use the same readout on all of them.

On the small SNARE-seq benchmark, all solvers are competitive, but the two heuristics are slightly worse. For the NQ and MS COCO datasets, a clear pattern is observable. On NQ, all heuristics and Gromov-Wasserstein initializations perform near chance, and on MS COCO they stay far behind matching cluster centers. Matching cluster centers makes the problem solvable, and the quality of the solver mainly determines the quality of the alignment.

\begin{figure}[t]
    \centering
    \begin{subfigure}[b]{0.56\textwidth}
        \centering
        \includegraphics{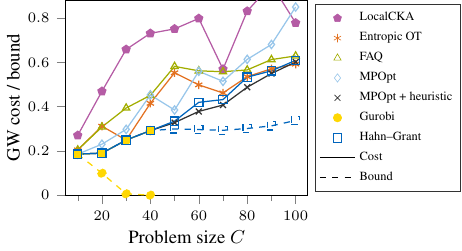} %
        \caption{Cost and bound against the problem size.}
        \label{fig:qap_solvers}
    \end{subfigure}%
    \hfill
    \begin{subfigure}[b]{0.42\textwidth}
        \centering\scriptsize
        \setlength{\tabcolsep}{2.5pt}
        \begin{tabular}{@{}l S[table-format=1.3] S[table-format=4.0] S[table-format=1.3] S[table-format=5.0] S[table-format=1.3]@{}}
            \toprule
            & \multicolumn{2}{c}{Hahn-Grant} & \multicolumn{3}{c}{MPOpt + heuristic} \\
            \cmidrule(lr){2-3}\cmidrule(l){4-6}
            Size & {cost} & {time [s]} & {cost} & {time [s]} & {equal time} \\
            \midrule
            10  & \bfseries 0.186 & \bfseries 1 & \bfseries 0.186 & \bfseries 1 & \bfseries 0.186 \\
            20  & \bfseries 0.190 & 15 & \bfseries 0.190 & \bfseries 6 & \bfseries 0.190 \\
            30  & \bfseries 0.249 & 84 & \bfseries 0.249 & \bfseries 45 & \bfseries 0.249 \\
            40  & \bfseries 0.292 & 389 & \bfseries 0.292 & \bfseries 108 & \bfseries 0.292 \\
            50  & 0.335 & 5400 & \bfseries 0.328 & \bfseries 417 & \bfseries 0.328 \\
            60  & 0.420 & 5400 & \bfseries 0.379 & \bfseries 1349 & \bfseries 0.379 \\
            70  & 0.432 & 5400 & 0.408 & \bfseries 1915 & \bfseries 0.406 \\
            80  & 0.533 & 5400 & \bfseries 0.488 & \bfseries 4563 & \bfseries 0.488 \\
            90  & 0.562 & \bfseries 5400 & \bfseries 0.550 & 6755 & \bfseries 0.550 \\
            100 & 0.609 & \bfseries 5400 & \bfseries 0.602 & 13939 & 0.604 \\
            \bottomrule
        \end{tabular}
        \vspace{0.2em}
        \caption{Cost and time of the two best solvers.}
        \label{tab:qap_times}
    \end{subfigure}
    \caption{\textbf{QAP solvers on the class-matching benchmark of \citet{schnaus2025it}.}
    (\subref{fig:qap_solvers})~Gromov-Wasserstein cost of the returned permutation (solid) and lower bound of the
    solver (dashed), normalized by the squared problem size. MPOpt with the primal
    heuristic, the solver of our initialization, matches them there and is best on most of the larger sizes. Its bound lies below the
    axis. (\subref{tab:qap_times})~Normalized cost, lower is better, and wall-clock time on one CPU, the best
    cost and the best time of every size in bold. The Hahn-Grant solver is stopped after 5400 seconds. The last
    column runs MPOpt with its primal heuristic for as long as the Hahn-Grant solver. All solvers except MPOpt with its primal heuristic are the
    published runs of \citet{schnaus2025it}.}
    \label{fig:qap_solver_benchmark}
\end{figure}

\inparagraph{QAP solver.}
The initialization ablation already motivates that the quality of the QAP solver plays a major role in the alignment quality. \citet{schnaus2025it} introduce their own QAP solver and benchmark against a variety of other solvers, including MPOpt without the primal heuristic. We repeat their experiment (\citet{schnaus2025it}, Fig.~6) that matches aggregated vision and language embeddings with the MPOpt solver with the local search. 

We show the results in black in \cref{fig:qap_solvers} and show the runtime, the GW cost, and the cost with the same runtime as the Hahn-Grant solver in \cref{tab:qap_times}. The other solvers are the published runs of \citet{schnaus2025it}, made with their official code. The local search improves the MPOpt solver by a large margin. With the settings of our initialization, MPOpt returns the certified optimum up to 40 classes in less time than the Hahn-Grant algorithm. Above 40 classes, it finds a better permutation than the Hahn-Grant algorithm, both with these settings and with a number of iterations that gives it about as much time as the Hahn-Grant algorithm.

\begin{table}[ht]
    \centering\scriptsize
    \begin {tabular}{lllllll}%
\toprule Initialization&blobs $k$=10&blobs $k$=30&curve&SNARE-seq&splatter&uniform\\\midrule %
random&0.0781 / 0.0590&0.0493 / 0.0412&0.0878 / 0.0319&0.0982 / 0.0795&0.0917 / 0.0765&0.0333 / 0.0278\\%
trivial&0.0652 / 0.0336&0.0412 / 0.0238&0.0732 / 0.0323&0.0818 / 0.0814&0.0765 / 0.0765&0.0278 / 0.0264\\%
\ensuremath {k}-means&0.0525 / 0.0373&0.0343 / 0.0241&0.0617 / 0.0343&0.0757 / 0.0593&0.0674 / 0.0568&0.0264 / 0.0227\\%
lower bound&0.0430 / 0.0378&0.0245 / 0.0241&0.0277 / 0.0279&0.0773 / 0.0612&0.0753 / 0.0636&\textbf {0.0198 / 0.0198}\\%
\ensuremath {k}-means + QAP&\textbf {0.0211 / 0.0206}&\textbf {0.0183 / 0.0168}&\textbf {0.0129 / 0.0129}&\textbf {0.0647 / 0.0589}&\textbf {0.0446 / 0.0446}&0.0251 / 0.0226\\\bottomrule %
\end {tabular}

    \caption{\textbf{Cluster matching as a low-rank Gromov-Wasserstein solver.} Gromov-Wasserstein cost of the
    low-rank plan before and after mirror descent (\emph{before} / \emph{after}), averaged over seeds and ranks. Lower is
    better. Matching cluster centers leads to a better initialization and a better optimum after mirror descent for structured data.}
    \label{tab:lowrank_objective}
\end{table}

\inparagraph{Low-rank Gromov-Wasserstein.}
In general, any low-rank GW solution could be used in our method as a coarse correspondence. Here, we evaluate the clustering and matching approach from mini-vec2vec on the toy examples from \citet{scetbon2022linear}. We report the GW objective before and after mirror descent, averaged over seeds and the ranks $10$, $20$, $30$, and $50$.

In \cref{tab:lowrank_objective}, we observe that matching cluster centers is consistently superior both in terms of the initial and final objective, except for the uniform toy example. In general, with more structure, this difference becomes bigger.

\begin{table}[ht]
    \centering\scriptsize
    \begin {tabular}{lS[table-format=1.3]@{\,\ensuremath {\pm }\,}S[table-format=1.3]S[table-format=1.3]@{\,\ensuremath {\pm }\,}S[table-format=1.3]S[table-format=1.3]@{\,\ensuremath {\pm }\,}S[table-format=1.3]}%
\toprule Read-out&\multicolumn {2}{c}{NQ}&\multicolumn {2}{c}{SNARE-seq}&\multicolumn {2}{c}{MS COCO}\\\midrule %
direct&0.024&0.007&0.246&0.050&0.091&0.020\\%
top-\ensuremath {k} relative representations&\protect \bfseries  0.000&\protect \bfseries  0.000&0.254&0.101&0.085&0.023\\%
batched Hungarian matching&\protect \bfseries  0.000&\protect \bfseries  0.000&\protect \bfseries  0.216&\protect \bfseries  0.034&\protect \bfseries  0.076&\protect \bfseries  0.023\\\bottomrule %
\end {tabular}

    \caption{\textbf{Read-out.} FOSCTTM after reading a map from the same averaged correspondence, without
    refinement. Our batched Hungarian matching slightly outperforms the other two methods while requiring fewer hyperparameters than top-k relative representations.}
    \label{tab:readout_stage}
\end{table}

\inparagraph{Read-out.}
Given a coarse transport plan, there are multiple methods to extend it to the sample level. \citet{Alvarez-Melis:2018:GWA} apply one Procrustes step to the plan, which for our plan is $\polar(\mM)$ and which we call the direct read-out. mini-vec2vec proposes the direct read-out and a read-out through relative representations~\citep{Moschella:2023:RRE}. It compares the relative representations of a source and a target by their cosine and averages the $k$ nearest targets of every source. Our read-out is one conditional-gradient step on the GW objective from $\bar{\mT}$, as described in \cref{app:algorithm}. The relative representation and the conditional gradient step implicitly optimize similar objectives. This can be seen when expanding $\left(\mX\mM\mY^{\top}\right)_{ij} = \tfrac{1}{SC}\left\langle \left(\mX\mA_{1:S}^{\top}\right)_i, \left(\mY\mB_{1:S}^{\top}\right)_j\right\rangle$. Hence, the mini-vec2vec read-out differs mainly in that it uses cosine similarities instead of inner products and the top-$k$ neighbors instead of a batched linear assignment.

In \cref{tab:readout_stage}, we compare all three read-out methods, each without refinement and with the same pseudo-pairs from our method. Our batched Hungarian matching is the best of the three on all benchmarks, and the direct read-out falls behind on NQ. Compared to the top-$k$ relative representation refinement from mini-vec2vec, it reduces one hyperparameter.

\begin{table}[ht]
    \centering\scriptsize
    \begin {tabular}{lS[table-format=1.3]@{\,\ensuremath {\pm }\,}S[table-format=1.3]S[table-format=1.3]@{\,\ensuremath {\pm }\,}S[table-format=1.3]S[table-format=1.3]@{\,\ensuremath {\pm }\,}S[table-format=1.3]}%
\toprule Refinement&\multicolumn {2}{c}{NQ}&\multicolumn {2}{c}{SNARE-seq}&\multicolumn {2}{c}{MS COCO}\\\midrule %
no refinement&\protect \bfseries  0.000&\protect \bfseries  0.000&0.216&0.034&0.076&0.023\\%
mini-vec2vec&\protect \bfseries  0.000&\protect \bfseries  0.000&\protect \bfseries  0.209&\protect \bfseries  0.033&0.046&0.039\\%
ours&\protect \bfseries  0.000&\protect \bfseries  0.000&\protect \bfseries  0.209&\protect \bfseries  0.034&\protect \bfseries  0.020&\protect \bfseries  0.015\\\bottomrule %
\end {tabular}

    \caption{\textbf{Refinement.} FOSCTTM after refining the same read-out. ``mini-vec2vec'' is its
    \emph{refine 1} and \emph{refine 2}~\citep{dar2025mini}, and ``ours'' is the batch alternation of \cref{alg:ours}. Our refinement leads to similar or better numbers on all three benchmarks. It also doesn't introduce any additional hyperparameters.}
    \label{tab:refinement_stage}
\end{table}

\inparagraph{Refinement.}
For the refinement, we compare our standard Wasserstein Procrustes alternation with the two-stage refinement from mini-vec2vec. The first refinement replaces the transport with the average over the top-$k$ neighbors, and it updates the weight with an exponential moving average of the Procrustes solutions. Compared to our refinement, this adds two additional hyperparameters and produces an average over semi-orthogonal matrices, which is not semi-orthogonal in general. The second refinement clusters the source space with $k$-means and uses the mapped cluster centers as initialization for clustering in the target space. This again corresponds to a low-rank transport plan, and an exponential moving with Procrustes is used for the weight update. This adds two further hyperparameters, namely the number of refinement steps with the second refinement and the number of clusters in the second refinement.

\Cref{tab:refinement_stage} compares the different refinements and shows that both refinements beat no refinement on MS COCO, while NQ and SNARE-seq are already saturated by the read-out. Moreover, on MS COCO, our refinement beats mini-vec2vec despite its simplicity.

\begin{figure}[t]
    \centering
    \begin{subfigure}[t]{0.49\textwidth}
        \centering
        \includegraphics{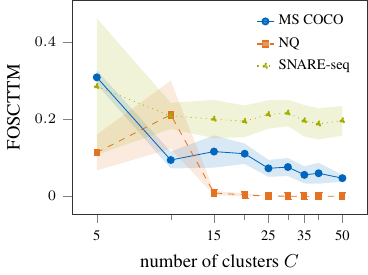} {} 
        \caption{Clusters $C$}
        \label{fig:cluster_sweep}
    \end{subfigure}
    \hfill
    \begin{subfigure}[t]{0.49\textwidth}
        \centering
        \includegraphics{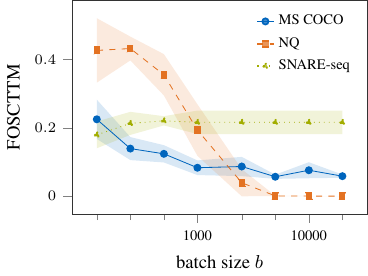} {} 
        \caption{Batch size $b$}
        \label{fig:batch_sweep}
    \end{subfigure}

    \vspace{0.8em}
    \begin{subfigure}[t]{0.49\textwidth}
        \centering
        \includegraphics{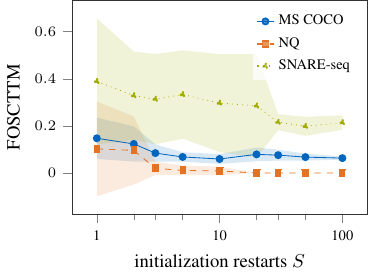} {} 
        \caption{Restarts $S$}
        \label{fig:restart_sweep}
    \end{subfigure}
    \hfill
    \begin{subfigure}[t]{0.49\textwidth}
        \centering
        \includegraphics{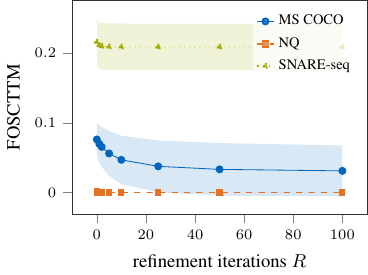} {} 
        \caption{Refinement iterations $R$}
        \label{fig:iteration_sweep}
    \end{subfigure}

    \caption{\textbf{All four hyperparameters improve with scale.} We ablate one variable at a time while fixing all other variables. The sweeps over the batch size and the restarts use the read-out without refinement ($R = 0$), so they isolate the initialization. Increasing each hyperparameters improves performance, except for the batch size on SNARE-seq, which saturates because it only contains around $520$ samples. In addition to an improved performance, more restarts of the initialization also reduce the variance.}
    \label{fig:hyperparameters}
\end{figure}

\subsubsection{Hyperparameter Ablation}
\label{app:ablation_hyperparameters}
The last section shows that our method outperforms previous methods while being simpler. It only contains four hyperparameters: the number of clusters $C$, the batch size $b$, and the number of iterations during initialization $S$ and refinement $R$. This section shows that our method is stable with respect to all of them, and choosing them larger generally improves the performance while trading off speed.

\inparagraph{Number of clusters ($C=30$).}
The number of clusters is the main hyperparameter of the initialization.
A larger number of clusters allows the initialization to approximate the Gromov-Wasserstein objective better, but it also makes the QAP harder to solve.

\Cref{fig:qap_solver_benchmark} shows that problems up to $C = 40$ clusters are solved close to optimality, and \citet{schnaus2025it} shows that the global optimum leads to better solutions than a local optimum.
We evaluate the effect of the number of clusters on the initialization in \cref{fig:cluster_sweep}.
In this experiment, we use our batched Hungarian read-out and no refinement.
We observe that an increasing number of clusters generally improves initialization, but the effect saturates at different points across benchmarks.
Altogether, the results suggest that $C = 30$ is a reasonable default across domains, balancing performance and computational cost even though larger values might lead to even better performance.

\inparagraph{Batch size ($b=10{,}000$).}
The batch size is used uniformly for all three stages and decides how many samples are used to approximate the whole problem.

\Cref{fig:batch_sweep} shows the performance as a function of the batch size. For SNARE-seq, a small batch size of $125$ is already enough to reach a good performance, and it doesn't degrade much for larger batch sizes. On NQ and MS COCO, larger batch sizes lead to better numbers. Both datasets are magnitudes larger than SNARE-seq, so a small batch only represents a small part of the data. For both datasets, the performance saturates at around a batch size of $5{,}000$. The batch size of $b=10{,}000$ is a conservative choice that becomes less sensitive as it increases.

\inparagraph{Number of initialization iterations ($S=30$).}
Each initialization iteration clusters its own subsample and matches the cluster centers, and the initialization
averages the resulting correspondences.
\Cref{fig:restart_sweep} shows the performance as a function of $S$.
We use the same read-out and no refinement to isolate the effect of the initialization.
More iterations generally improve the performance on all three datasets and strongly reduce the spread over the seeds.
On MS COCO, the FOSCTTM decreases from $0.148 \pm 0.088$ with a single iteration to $0.076 \pm 0.023$ with
$S=30$ and $0.064 \pm 0.006$ with $S=100$.
On NQ, it decreases from $0.103 \pm 0.201$ to $0.0003 \pm 0.0002$ with $S=30$, and on SNARE-seq from
$0.389 \pm 0.268$ to $0.216 \pm 0.034$.

\inparagraph{Number of refinement iterations ($R=100$).}
The refinement alternates between assignment and weight update on batches.

\Cref{fig:iteration_sweep} varies the number of iterations $R$. On NQ, refinement doesn't make a difference because the read-out already yields near-perfect alignment. On MS COCO, more iterations improve the performance monotonically, while SNARE-seq improves only slightly, within the spread over the seeds. Altogether, a conservative choice of $R=100$ works well for all datasets.

\FloatBarrier
\section{Experimental Details}
\label{app:experimental_details}

This appendix gives the models, data, splits and metric parameters of every experiment in \cref{sec:experiments}.
Together with the description of the aligner in \cref{app:algorithm} it contains everything needed to reproduce
the reported numbers.

\inparagraph{General setup.}
Every experiment fits the aligner of \cref{app:algorithm} with the same hyperparameters, $C = 30$ clusters,
$S = 30$ restarts, a batch size of $b = 10^4$ and $R = 100$ refinement iterations.
Each side is preprocessed by subtracting the mean of its own training samples and normalizing every row to unit
length, with a numerical floor of $10^{-10}$.
The two training sets are disjoint.
We draw one random permutation of the corpus, give the first half to one modality and the second half to the
other, so no sample is seen on both sides and no correspondence between the two sets exists.
In the cross-dataset setting the two sides are drawn independently from two different corpora instead.
Every experiment is repeated with five seeds, drawn once from a generator seeded with $42$, and the tables and
figures report the mean and the standard deviation over those five runs.
The exceptions are the geometry scores and the text-to-image retrieval, which use one seed, and the fMRI
benchmark and the granularity experiments, which use ten seeds.
Embeddings are computed once and cached in \texttt{bfloat16}.
A language corpus stores several texts per item, for instance the five captions of an image or the three repeats
of a generative prompt, as a padded tensor.
Such a tensor is reduced by normalizing every row, averaging over the texts of an item while ignoring the
padding, and normalizing the result again.

\inparagraph{Retrieval metrics.}
The reported metric is FOSCTTM, the fraction of samples closer to a query than its true partner, averaged over
all queries. Given a paired validation set of $p$ samples $(\hat{\vx}_i,\hat{\vy}_i)$, $i=1,\dots,p$ and a function $f$ that maps from the source to the target space, it is
\begin{equation}
\text{FOSCTTM} = 
\frac{1}{p (p - 1)}
\sum_{i, j=1}^{p}
\mathbf{1}
\left[
\|f(\hat{\vx}_i) - \hat{\vy}_j\|_2
<
\|f(\hat{\vx}_i) - \hat{\vy}_i\|_2
\right].
\end{equation}
It is $0$ for a perfect alignment and $0.5$ for a random correspondence, and ties count half.
Retrieval is evaluated on the full paired validation set.
Both sides are randomly permuted before they reach the aligner, so the row order carries no information.

\inparagraph{Geometry metrics.}
Four measures of how similar the two embedding geometries are enter \cref{subsec:unpaired,subsec:other_domains}.
CKA is the linear centered kernel alignment of \citet{kornblith2019similarity} with the unbiased estimator of the
Hilbert-Schmidt independence criterion (HSIC).
Mutual $k$-NN is the measure of \citet{huh2024platonic}, the average overlap of the $k$ nearest neighbors of a
sample in the two spaces, with $k = 10$, an inner-product kernel and the sample itself excluded.
TSI and QSI~\citep{soares2026scalable} are the fractions of sampled triplets $(i,j,k)$ with $d(i,j) < d(i,k)$ and of sampled quadruplets
$(i,j,k,l)$ with $d(i,j) < d(k,l)$ whose ordering agrees in both spaces, each estimated from $10^5$ samples.
All four are computed on $10{,}000$ paired validation samples with each side centered.
A corpus with fewer than $10{,}000$ paired samples contributes all of them, without repetition.
Several of the modality pairs of \cref{subsec:other_domains} are far below that, the smallest being $123$ paired
slides for tissue against MRI, $149$ words for MEG against text and $168$ patients for tissue against CT.
Their geometry scores are therefore estimated from a few hundred points rather than ten thousand and carry much
more variance than the vision-language ones.
All three of those pairs sit at chance in \cref{tab:other_domains}, so the conclusion drawn from them is that
there is no alignment to find, which a noisy estimate supports as much as a precise one.

\inparagraph{Unpaired cross-modal alignment (\cref{subsec:unpaired}).}
The grid is seven vision models against three language models on four corpora.
The vision models are iBOT ViT-B/16 and iBOT Swin-T/14~\citep{zhou2022image}, DINOv2 ViT-B/14 and DINOv2 ViT-G/14~\citep{Oquab:2024:DIN}, Franca ViT-G/14~\citep{venkataramanan2026franca} trained on LAION~\citep{schuhmann2022laion} in both its class-token and its mean-pooled read-out,
and DINOv3 ViT-7B/16 at a resolution of $512$ pixels~\citep{simeoni2026dinov}.
All of them are used at a resolution of $224$ pixels except DINOv3, and all are mean pooled over the class,
register and patch tokens except the class-token variant of Franca.
The language models are all-mpnet-base-v2 from Sentence Transformers~\citep{Reimers:2019:SBE}, Qwen3-Embedding-8B~\citep{zhang2025qwen3}, and Qwen3-8B~\citep{yang2025qwen3} with the generative token pooling of \citet{wang2026the}.
Generative pooling prompts the model with \emph{Imagine what it would look like to see:} followed by the caption,
generates at most $128$ tokens, and averages the hidden states of the generated tokens.
The prompt is repeated three times and the three results are averaged.
The corpora are MS COCO 2014 train with $82{,}783$ images and up to seven captions each~\citep{chen2015microsoft},
Stanford Paragraph Captioning with $19{,}561$ images~\citep{krause2016paragraphs}, Densely Captioned Images with $7{,}805$ images in its
extended caption variant~\citep{urbanek2024picture}, and DOCCI with $14{,}847$ images~\citep{onoe2024docci}.
Validation always uses the $40{,}504$ images of MS COCO 2014 val~\citep{chen2015microsoft}.
The cross-dataset setting takes the images from MS COCO and the captions from Stanford Paragraph Captioning, so
the two sides describe different scenes.

\inparagraph{Zero-shot classification.}
The same aligners are evaluated on CIFAR-10~\citep{Krizhevsky:2009:LML}, CIFAR-100~\citep{Krizhevsky:2009:LML} and ImageNet-100~\citep{tian2020contrastive} by mapping the image embeddings into the language space and retrieving the nearest class prompt.
CIFAR-10 and CIFAR-100 use the same $18$ photo templates from CLIP~\citep{Radford:2021:LTV}, such as \emph{a photo of a \{\}} and \emph{a blurry
photo of the \{\}}, and the class embedding is the average over the templates.
ImageNet-100 uses no template and embeds the class name followed by its WordNet definition.
We report top-1 accuracy on CIFAR-10 and top-5 accuracy on CIFAR-100 and ImageNet-100.

\inparagraph{Few-pair alignment (\cref{subsec:few_pair}).}
This experiment uses MS COCO with DINOv2 ViT-B/14 against Qwen3-8B with generative pooling, and varies the number
of known pairs over $0$, $1$, $2$, $5$, $10$, $20$, $50$, $100$, $200$, $500$ and $1000$.
The pairs enter all three stages, as the linear term of the quadratic assignment problem in \cref{lem:cka_pairs}
and as an additional term in both Procrustes steps.
Every baseline receives exactly the same pairs and the same validation data.

\inparagraph{Text-to-image generation (\cref{subsec:text2img}).}
A caption is embedded, mapped into the image space by the aligner, and generated by a diffusion model conditioned on the image embedding.
We use a representation autoencoder (RAE, \citet{zheng2026diffusion}) as a diffusion model.
Instead of text conditioning, we condition the diffusion transformer on the mean-pooled DINOv2 ViT-B/14 embeddings with registers~\citep{darcet2024vision}, which is the encoder of the RAE.
We train the diffusion transformer on ImageNet-1K with center crop of $256$ pixels for $400$ epochs with a global batch size of $256$ on $4 \times$ A40 GPUs.
During generation, we sample images with $250$ steps and a guidance scale of $1.0$.
The resulting unnoised patch embeddings are then decoded into images by the pretrained RAE decoder.
The whole diffusion model is trained exclusively on image data and never sees a caption, so the language side enters
exclusively through the aligner.
The language models are all-mpnet-base-v2 and Contriever~\citep{izacard2022unsupervised}.
The aligner is fitted on MS COCO 2014 train and
the number of known pairs is varied over $0$ to $100$.
The eight captions shown are fixed across all grids, and one seed is used throughout, so that two grids differ
only in the aligner.
The linear baseline is the least-squares map from the same known pairs, obtained from the pseudo-inverse, with
the same preprocessing.
With zero pairs it is an arbitrary map and generates unrelated images.

\inparagraph{Text-to-image scores.}
We score the generated images on two sets of prompts.
The first set is the test split of CyclePrefDB-T2I~\citep{bahng2025cycle}, whose $380$ prompts summarize dense
captions of DCI photographs~\citep{urbanek2024picture}.
We take the photograph of each prompt from the test split of CyclePrefDB-I2T, which holds the same photographs in
the same order.
The second set has one caption for each of the $40{,}504$ images of the MS COCO 2014 validation split, namely the
caption with the lowest annotation id.
Every setting generates one image per prompt.
The initial noise depends only on the prompt, so all aligners start from the same noise.
As a model trained on paired data, we use Scale-RAE~\citep{tong2026scaling}, which combines
Qwen2.5-1.5B with a diffusion transformer of $2.4$B parameters and generates images of $224$ pixels.
We sample it as its own evaluation script does, with the prefix \emph{Generate an image of} and a guidance scale
of $1.0$.
Every image is scored against its prompt with four measures.
CLIPScore~\citep{hessel2021clipscore} is the rescaled cosine similarity between the CLIP ViT-L/14 embeddings of
the image and of the caption.
VQAScore~\citep{lin2024evaluating} is the probability that CLIP-FlanT5-XL assigns to the answer \emph{Yes} for the
question whether the image shows the caption.
TIFA~\citep{hu2023tifa} generates question and answer pairs from the caption with its open LLaMA-2~\citep{touvron2023llama} question generator and keeps only those that a
question answering model answers correctly from the caption alone.
The score is the fraction of the kept questions that a visual question answering model answers correctly on the
image.
We use the official implementation with its released LLaMA-2 question generator instead of GPT-3.5 and with
BLIP-large as the visual question answering model, which answers freely.
An answer that is not one of the choices counts as the closest choice under Sentence-BERT~\citep{Reimers:2019:SBE}.
CycleReward~\citep{bahng2025cycle} is a learned preference model with an arbitrary scale, so only differences
between methods are meaningful.
We use its Combo checkpoint.
We report the mean and the standard error over prompts, and TIFA leaves out the prompts for which no question
passes the filter.

\inparagraph{Domain-specific alignment (\cref{subsec:domain_specific}).}
Three benchmarks from three fields, each aligning two spaces that no shared encoder connects.
The first is Natural Questions (NQ)~\citep{kwiatkowski2019natural}, where two different sentence encoders embed the same $5{,}332{,}023$ passages.
The last $8192$ passages are the paired validation set and the two disjoint training halves take $250{,}000$
passages each.
The five encoders (granite~\citep{awasthy2025granite}, e5~\citep{wang2022text}, gte~\citep{li2023towards}, gtr~\citep{ni2022large}, stella~\citep{zhang2024jasper}) give ten ordered pairs.
The second is the PBMC benchmark~\citep{10xgenomics2021pbmc} of $2407$ cells measured with two assays, with gene
expression reduced to $50$ principal components and chromatin accessibility to $50$ topics.
We additionally report label transfer accuracy (LTA), the fraction of cells whose cell type is recovered from the
nearest neighbor in the other assay.
The third is the fMRI benchmark of the Natural Scenes Dataset (NSD)~\citep{allen2022massive}, in the setting of \citet{marcos2026platonic}, where a
per-subject encoder maps brain responses of eight subjects into a shared $128$-dimensional space.
The eight subjects give $56$ ordered pairs.

\inparagraph{Shared geometry across modalities (\cref{subsec:other_domains}).}
Eleven further modality pairs from the natural sciences, each with an independently trained encoder on either
side, are listed with their encoders and validation sizes in \cref{tab:other_domains_setup}.
The aligner is fitted on two disjoint halves of the unpaired data and evaluated on all paired samples.
The geometry scores are computed on the same validation rows as the retrieval scores, so the two axes of \cref{fig:other_domains} come from one run.

\begin{table}[t]
    \centering\tiny
    \begin{tabularx}{\textwidth}{@{}>{\raggedright\arraybackslash}X >{\raggedright\arraybackslash}X >{\raggedright\arraybackslash}X r@{}}
\toprule
Pair & Encoder 1 & Encoder 2 & Val. \\
\midrule
MLIP $\leftrightarrow$ MLIP (MP-20)~\citep{jain2013commentary} & \texttt{mace\_mp\_small-\allowbreak pca256-\allowbreak structure}\citep{batatia2025foundation} & \texttt{mace\_mp\_medium-\allowbreak pca256-\allowbreak structure}\citep{batatia2025foundation} & 12{,}000 \\
human $\leftrightarrow$ mouse scRNA~\citep{czi2025cellxgene} & \texttt{pca50} & \texttt{pca50} & 34{,}500 \\
scRNA $\leftrightarrow$ scATAC (SNARE-seq)~\citep{chen2019high} & \texttt{pca10} & \texttt{lsi19} & 1{,}047 \\
histology $\leftrightarrow$ expression (HEST CCRCC)~\citep{jaume2024hest} & \texttt{h\_optimus\_0}~\citep{saillard2024hoptimus} & \texttt{expression-pca64-ccrcc} & 74{,}220 \\
scRNA $\leftrightarrow$ ADT (CITE-seq)~\citep{stoeckius2017simultaneous} & \texttt{raw25} & \texttt{raw25} & 1{,}000 \\
tissue $\leftrightarrow$ CT (CPTAC)~\citep{edwards2015cptac} & \texttt{h\_optimus\_0}~\citep{saillard2024hoptimus} & \texttt{ct\_fm}~\citep{pai2025vision} & 168 \\
MEG $\leftrightarrow$ text (SpanishBCBL)~\citep{levy2025brain} & \texttt{meg\_words} & \texttt{mpnet-multi}~\citep{reimers2020making} & 149 \\
tissue $\leftrightarrow$ MRI (TCGA glioma)~\citep{bakas2017advancing} & \texttt{gigapath}~\citep{xu2024whole} & \texttt{dinov1\_vitb16-comp}~\citep{Caron:2021:EPS} & 123 \\
Cell Painting $\leftrightarrow$ L1000 (Rosetta)~\citep{haghighi2022high} & \texttt{morphology-pca512} & \texttt{expression-pca256} & 2{,}732 \\
MS/MS $\leftrightarrow$ molecules (MassSpecGym)~\citep{bushuiev2024massspecgym} & \texttt{dreams}~\citep{bushuiev2026self} & \texttt{ecfp4}~\citep{rogers2010extended} & 31{,}602 \\
galaxy $\leftrightarrow$ spectrum (DESI)~\citep{parker2024astroclip} & \texttt{astrodino}~\citep{parker2024astroclip} & \texttt{specformer}~\citep{parker2024astroclip} & 29{,}697 \\
\bottomrule
\end{tabularx}

    \caption{\textbf{The modality pairs of \cref{subsec:other_domains}.} We test our aligner on a variety of independently trained models. The last
    column gives the number of paired samples the alignment is evaluated on.}
    \label{tab:other_domains_setup}
\end{table}
\FloatBarrier
\section{Additional Evaluation Results}
\label{app:results}

\subsection{Unpaired Cross-Modal Alignment}
\label{app:unpaired}
\cref{fig:panel_coco_train2014,fig:panel_StanfordParagraphCaptioning,fig:panel_DenselyCaptionedImages,fig:panel_DOCCIDataset}
repeat the main figure for every dataset, MS COCO as in the main text and the three detailed captioning corpora.
The cross-dataset results are shown in \cref{fig:panel_coco_spc} in which the images come from MS COCO and the captions from Stanford Paragraph
Captioning, so that no correspondence between the two sets exists.
For all datasets, most vision and language models can be aligned well without any pairs and on every dataset our aligner outperforms the baselines mini-vec2vec and vec2vec on average.
In general, iBOT ViT-B/16 performs worse than the other vision models.
On all detailed captioning datasets, the generative token pooling performs substantially worse.
\cref{app:gen_token_pooling} shows that this method is mostly beneficial for short captions and hurts on long ones.
Mean pooling for Franca ViT-G/14 performs similar to taking the [CLS] token.
Finally, we don't observe a strong trend that bigger models can be aligned better, even though our study is not big enough to conclude on this point.

\begin{figure}[p]
    \centering
    \includegraphics{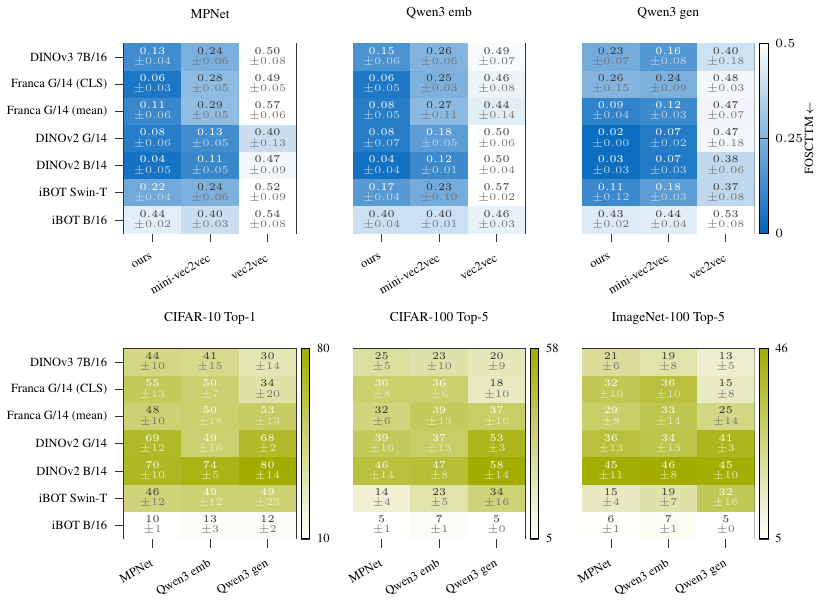} {} 

    \caption{\textbf{Unpaired alignment on MS COCO.} \textit{Top:} FOSCTTM across all vision-language method combinations (white: chance level; blue: better alignment). Our method substantially outperforms vec2vec and mini-vec2vec across most vision-language combinations. \textit{Bottom:} Zero-shot accuracy of our aligner (\%, white: chance level; green: better alignment). Despite being trained without paired data, our method reaches zero-shot accuracies well above chance for most model pairs.}
    \label{fig:panel_coco_train2014}
\end{figure}

\begin{figure}[p]
    \centering
    \includegraphics{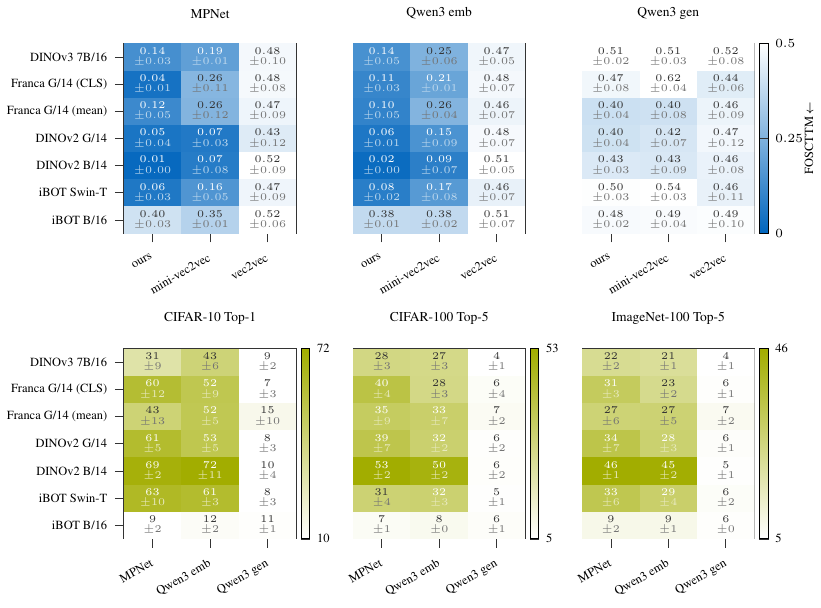} {} 

    \caption{\textbf{Unpaired alignment on SPC.} \textit{Top:} FOSCTTM across all vision-language method combinations (white: chance level; blue: better alignment). Our method substantially outperforms vec2vec and mini-vec2vec across most vision-language combinations. As observed across all detailed captioning datasets, none of the methods is able to reliably align Qwen3 with generative token pooling~\citep{wang2026the} on SPC. \textit{Bottom:} Zero-shot accuracy of our aligner (\%, white: chance level; green: better alignment). Despite being trained without paired data, our method reaches zero-shot accuracies well above chance for most model pairs.}
    \label{fig:panel_StanfordParagraphCaptioning}
\end{figure}

\begin{figure}[p]
    \centering
    \includegraphics{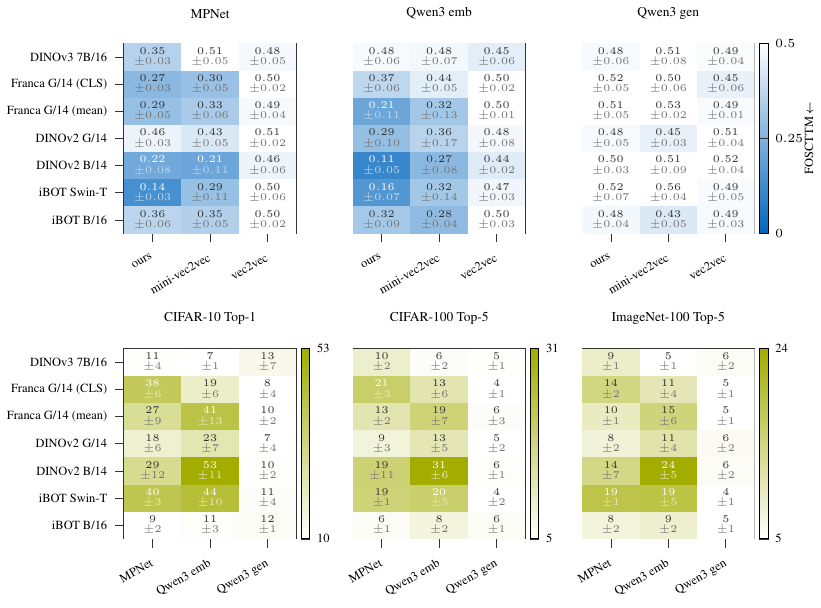} {} 

    \caption{\textbf{Unpaired alignment on DCI.} \textit{Top:} FOSCTTM across all vision-language method combinations (white: chance level; blue: better alignment). Our method substantially outperforms vec2vec and mini-vec2vec across most vision-language combinations. As observed across all detailed captioning datasets, none of the methods is able to reliably align Qwen3 with generative token pooling~\citep{wang2026the} on DCI. \textit{Bottom:} Zero-shot accuracy of our aligner (\%, white: chance level; green: better alignment). Despite being trained without paired data, our method reaches zero-shot accuracies well above chance for most model pairs, although lower than on MS COCO.}
    \label{fig:panel_DenselyCaptionedImages}
\end{figure}

\begin{figure}[p]
    \centering
    \includegraphics{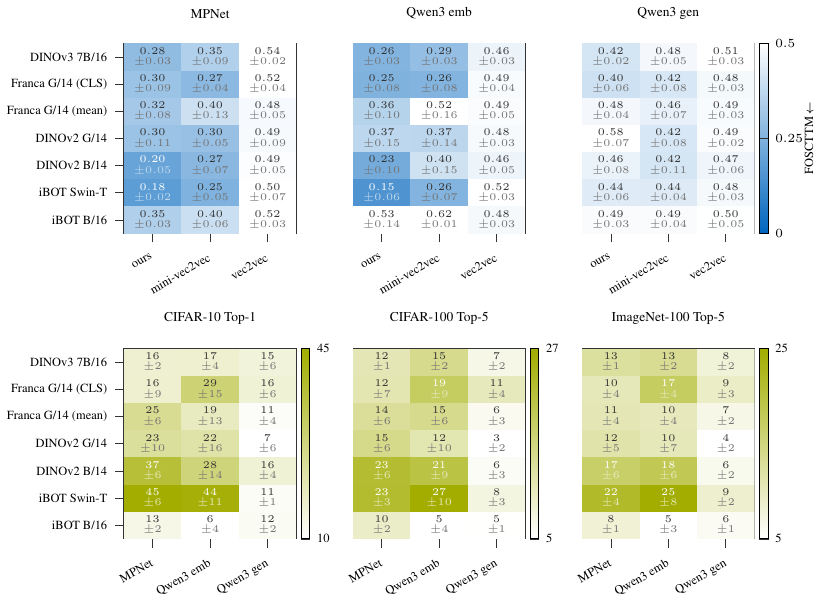} {} 

    \caption{\textbf{Unpaired alignment on DOCCI.} \textit{Top:} FOSCTTM across all vision-language method combinations (white: chance level; blue: better alignment). Our method substantially outperforms vec2vec and mini-vec2vec across most vision-language combinations. As observed across all detailed captioning datasets, none of the methods is able to reliably align Qwen3 with generative token pooling~\citep{wang2026the} on DOCCI. \textit{Bottom:} Zero-shot accuracy of our aligner (\%, white: chance level; green: better alignment). Despite being trained without paired data, our method reaches zero-shot accuracies well above chance for most model pairs, although lower than on MS COCO.}
    \label{fig:panel_DOCCIDataset}
\end{figure}

\begin{figure}[p]
    \centering
    \includegraphics{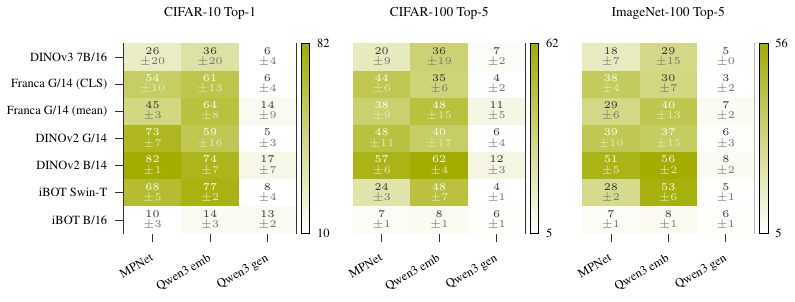} {} 

    \caption{\textbf{Zero-shot accuracy in the cross-dataset setting} Zero-shot accuracy of our aligner (\%, white: chance level; green: better alignment). We fit our aligner with images from MS COCO and captions from SPC. The FOSCTTM of this setting is the ours$^\dagger$ column of \cref{fig:unpaired_coco} in the main text. Our aligner reaches zero-shot accuracies comparable to the single-dataset setting for MPNet and Qwen3-Embedding-8B, while generative token pooling drops to chance.}
    \label{fig:panel_coco_spc}
\end{figure}

\inparagraph{The other geometric measures.}
\cref{fig:alignment_metrics} of the main text relates the alignment to CKA.
\cref{fig:alignment_metrics_full} adds the three other measures (mutual $k$-NN, TSI, and QSI) on the same model pairs and datasets and with the
same colors.
All four are predictive, but they differ in how tightly they follow the alignment and in how they rank the model pairs.
Pooled over the four corpora, CKA explains the most variance at $R^2 = 0.73$, followed by mutual $k$-NN at
$0.40$, TSI at $0.37$ and QSI at $0.18$.

\begin{figure}[t]
    \centering
    \begin{subfigure}[b]{0.49\textwidth}
        \centering
        \includegraphics{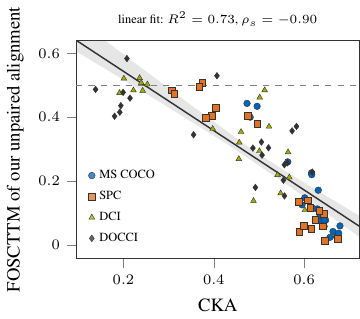} {} 
        \caption{CKA}
        \label{fig:alignment_metrics_cka}
    \end{subfigure}%
    \hfill
    \begin{subfigure}[b]{0.49\textwidth}
        \centering
        \includegraphics{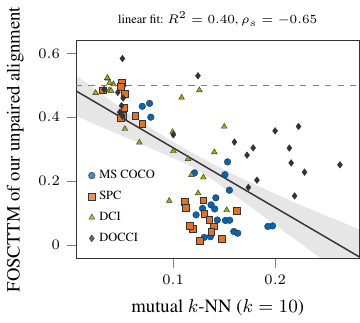} {} 
        \caption{mutual $k$-NN}
        \label{fig:alignment_metrics_mknn}
    \end{subfigure}

    \vspace{0.8em}
    \begin{subfigure}[b]{0.49\textwidth}
        \centering
        \includegraphics{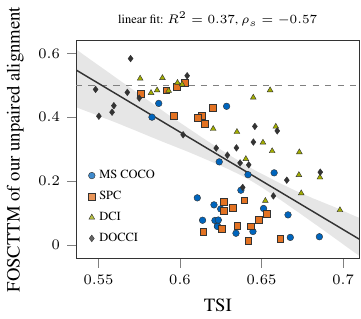} {} 
        \caption{TSI}
        \label{fig:alignment_metrics_tsi}
    \end{subfigure}%
    \hfill
    \begin{subfigure}[b]{0.49\textwidth}
        \centering
        \includegraphics{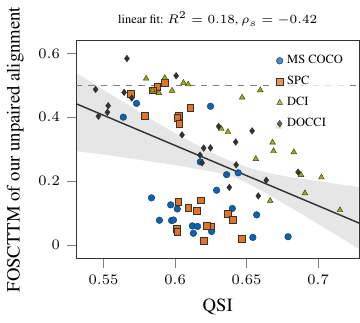} {} 
        \caption{QSI}
        \label{fig:alignment_metrics_qsi}
    \end{subfigure}

    \caption{\textbf{Shared geometry predicts alignment.} FOSCTTM of our unpaired alignment against four measures
    of the geometric similarity of the paired embeddings, one point per vision-language-dataset combination, colored by
    dataset, with a least-squares fit and its cluster-bootstrap confidence band. Dashed: the $0.5$ of a random
    correspondence. We observe that all four similarity measures predict the alignment even though CKA has the highest absolute correlation.}
    \label{fig:alignment_metrics_full}
\end{figure}

\FloatBarrier
\subsection{Text-to-Image Generation}
\label{app:text2img}
\cref{fig:text2img,fig:text2img_linear_mpnet,fig:text2img_contriever,fig:text2img_linear_contriever} show one
page per setting, our aligner and a linear map trained on the same pairs, each with MPNet and with Contriever as
the text encoder.
Every column is one caption and every row a number of known image-text pairs.
The diffusion model and the image decoder never see any text, so everything that changes between two grids is the
map from the language space into the image space.

The results do not depend on MS COCO captions in the training data of the text encoder.
MPNet is trained on a mixture of sentence pairs that includes MS COCO captions, so it could have seen the captions
our aligner is fitted on.
Contriever is trained on web text without MS COCO.
With Contriever, the grids show the same progression with the number of pairs, and our aligner scores higher than
the linear map on all four measures and for every number of pairs (\cref{tab:text2img_metrics}).
Without any pairs, our aligner with Contriever even scores higher on CyclePrefDB than the linear map with $10$ pairs with a TIFA of $0.419$ against $0.389$.
Contriever gives lower scores than MPNet overall, but the ordering between the two methods is the same for both
encoders.

We also measure how faithful the samples are on two sets of prompts, the $380$ test prompts of
CyclePrefDB~\citep{bahng2025cycle}, which summarize dense captions of photographs, and one caption for each of the
$40{,}504$ images of the MS COCO 2014 validation split.
Every setting generates one image per prompt, and all aligners start from the same noise for a given prompt.
We compare against the real image of each prompt and against Scale-RAE~\citep{tong2026scaling}, which conditions an
RAE diffusion model directly on a language model and is trained on tens of millions of image-text pairs.
Every image is scored against its prompt with the four measures of \cref{app:experimental_details}.
On CyclePrefDB, our aligner scores higher than the linear map on all four measures, for every number of pairs and
with both text encoders (\cref{tab:text2img_metrics}).
With MPNet and no pairs, our samples reach a CLIPScore of $0.466$, a VQAScore of $0.458$ and a TIFA of $0.552$,
which is above the linear map with $100$ pairs at $0.411$, $0.334$ and $0.507$.
Adding pairs improves our aligner only slightly, to $0.510$, $0.511$ and $0.615$ at $100$ pairs.
Contriever gives slightly lower scores than MPNet, but the ordering between the two methods stays the same.
Scale-RAE and the real images both reach a VQAScore and a TIFA above $0.82$.
The gap to both is expected, because only the aligner connects our diffusion model to text, using at most $100$
pairs.

On MS COCO, the scores show the same ordering as on CyclePrefDB (\cref{tab:text2img_metrics_coco}).
Our aligner again scores higher than the linear map on all four measures, for every number of pairs and with both
text encoders.
Pairs help our aligner more than on CyclePrefDB, and with MPNet its TIFA rises from $0.589$ without pairs to $0.686$
with $100$ pairs.
With Contriever and without pairs, our aligner scores below the linear map with $100$ pairs, e.g., with a TIFA of
$0.435$ against $0.521$.

\begin{table}[t]
    \centering\scriptsize
    \begin {tabular}{lllS[table-format=1.3(4)]S[table-format=1.3(4)]S[table-format=1.3(4)]S[table-format=-1.3(4)]}%
\toprule Text encoder&Method&Pairs&\multicolumn {1}{c}{CLIPScore}&\multicolumn {1}{c}{VQAScore}&\multicolumn {1}{c}{TIFA}&\multicolumn {1}{c}{CycleReward}\\\midrule %
--&GT image&--&0.614 +- 0.005&0.875 +- 0.006&0.840 +- 0.009&1.632 +- 0.038\\\midrule %
Qwen2.5-1.5B&Scale-RAE&--&0.660 +- 0.005&0.825 +- 0.008&0.821 +- 0.009&1.708 +- 0.034\\\midrule %
MPNet&ours&0&0.466 +- 0.005&0.458 +- 0.012&0.552 +- 0.014&-0.431 +- 0.051\\%
&ours&1&0.477 +- 0.005&0.486 +- 0.012&0.583 +- 0.013&-0.429 +- 0.050\\%
&ours&10&0.491 +- 0.005&0.491 +- 0.013&0.583 +- 0.013&-0.356 +- 0.048\\%
&ours&100&0.510 +- 0.005&0.511 +- 0.012&0.615 +- 0.013&-0.120 +- 0.051\\%
&linear&0&0.245 +- 0.005&0.207 +- 0.007&0.267 +- 0.012&-1.716 +- 0.016\\%
&linear&1&0.305 +- 0.006&0.211 +- 0.007&0.346 +- 0.012&-1.658 +- 0.017\\%
&linear&10&0.344 +- 0.006&0.259 +- 0.009&0.424 +- 0.014&-1.427 +- 0.029\\%
&linear&100&0.411 +- 0.005&0.334 +- 0.010&0.507 +- 0.013&-1.032 +- 0.040\\\midrule %
Contriever&ours&0&0.343 +- 0.006&0.253 +- 0.009&0.419 +- 0.014&-1.301 +- 0.035\\%
&ours&1&0.371 +- 0.006&0.290 +- 0.010&0.455 +- 0.014&-1.154 +- 0.038\\%
&ours&10&0.442 +- 0.005&0.375 +- 0.011&0.509 +- 0.014&-0.732 +- 0.044\\%
&ours&100&0.458 +- 0.005&0.413 +- 0.012&0.569 +- 0.014&-0.427 +- 0.052\\%
&linear&0&0.250 +- 0.005&0.227 +- 0.008&0.247 +- 0.011&-1.689 +- 0.014\\%
&linear&1&0.295 +- 0.006&0.202 +- 0.007&0.339 +- 0.012&-1.667 +- 0.017\\%
&linear&10&0.312 +- 0.006&0.230 +- 0.008&0.389 +- 0.013&-1.588 +- 0.021\\%
&linear&100&0.367 +- 0.006&0.259 +- 0.008&0.468 +- 0.014&-1.305 +- 0.035\\\bottomrule %
\end {tabular}

    \caption{\textbf{Faithfulness of the generated images on CyclePrefDB.}
    Our aligner outperforms the linear map on all four measures, for every number of pairs and with both text encoders.
    The real images and Scale-RAE~\citep{tong2026scaling} are included as reference points, but they are trained on tens of millions of image-text pairs and are not comparable to our aligner, which uses at most $100$ pairs.
    }
    \label{tab:text2img_metrics}
\end{table}

\begin{table}[t]
    \centering\scriptsize
    \begin {tabular}{lllS[table-format=1.4(5)]S[table-format=1.4(5)]S[table-format=1.4(5)]S[table-format=-1.4(5)]}%
\toprule Text encoder&Method&Pairs&\multicolumn {1}{c}{CLIPScore}&\multicolumn {1}{c}{VQAScore}&\multicolumn {1}{c}{TIFA}&\multicolumn {1}{c}{CycleReward}\\\midrule %
--&GT image&--&0.6389 +- 0.0005&0.8638 +- 0.0009&0.8695 +- 0.0008&0.2743 +- 0.0051\\\midrule %
Qwen2.5-1.5B&Scale-RAE&--&0.6456 +- 0.0004&0.8264 +- 0.0010&0.8528 +- 0.0009&0.5283 +- 0.0050\\\midrule %
MPNet&ours&0&0.5170 +- 0.0006&0.4284 +- 0.0014&0.5886 +- 0.0014&-1.3547 +- 0.0032\\%
&ours&1&0.5279 +- 0.0006&0.4644 +- 0.0014&0.6216 +- 0.0014&-1.2869 +- 0.0034\\%
&ours&10&0.5588 +- 0.0005&0.5197 +- 0.0014&0.6650 +- 0.0013&-1.1598 +- 0.0036\\%
&ours&100&0.5687 +- 0.0005&0.5412 +- 0.0013&0.6863 +- 0.0012&-1.1025 +- 0.0036\\%
&linear&0&0.2735 +- 0.0004&0.1498 +- 0.0005&0.2446 +- 0.0011&-1.8392 +- 0.0006\\%
&linear&1&0.3226 +- 0.0005&0.1929 +- 0.0007&0.2547 +- 0.0011&-1.8994 +- 0.0006\\%
&linear&10&0.3966 +- 0.0006&0.2491 +- 0.0010&0.3616 +- 0.0013&-1.7917 +- 0.0012\\%
&linear&100&0.4768 +- 0.0006&0.3686 +- 0.0012&0.5289 +- 0.0014&-1.5389 +- 0.0023\\\midrule %
Contriever&ours&0&0.4072 +- 0.0007&0.2685 +- 0.0011&0.4350 +- 0.0014&-1.6958 +- 0.0018\\%
&ours&1&0.4478 +- 0.0007&0.3190 +- 0.0012&0.4896 +- 0.0015&-1.5499 +- 0.0025\\%
&ours&10&0.5227 +- 0.0006&0.4452 +- 0.0013&0.6264 +- 0.0013&-1.3198 +- 0.0031\\%
&ours&100&0.5446 +- 0.0005&0.4789 +- 0.0013&0.6483 +- 0.0013&-1.2230 +- 0.0033\\%
&linear&0&0.2843 +- 0.0004&0.1559 +- 0.0005&0.2714 +- 0.0011&-1.8313 +- 0.0006\\%
&linear&1&0.3206 +- 0.0005&0.1925 +- 0.0007&0.2521 +- 0.0011&-1.9013 +- 0.0005\\%
&linear&10&0.3786 +- 0.0006&0.2384 +- 0.0009&0.3375 +- 0.0012&-1.8200 +- 0.0011\\%
&linear&100&0.4791 +- 0.0006&0.3624 +- 0.0012&0.5211 +- 0.0014&-1.5550 +- 0.0022\\\bottomrule %
\end {tabular}

    \caption{\textbf{Faithfulness of the generated images on MS COCO.}
    Our aligner outperforms the linear map on all four measures, for every number of pairs and with both text encoders.
    As in \cref{tab:text2img_metrics}, the real images and Scale-RAE are reference points and are not comparable to our
    aligner, which uses at most $100$ pairs.
    }
    \label{tab:text2img_metrics_coco}
\end{table}

\begin{figure}[p]
    \centering
    \vspace*{\fill}
    \includegraphics{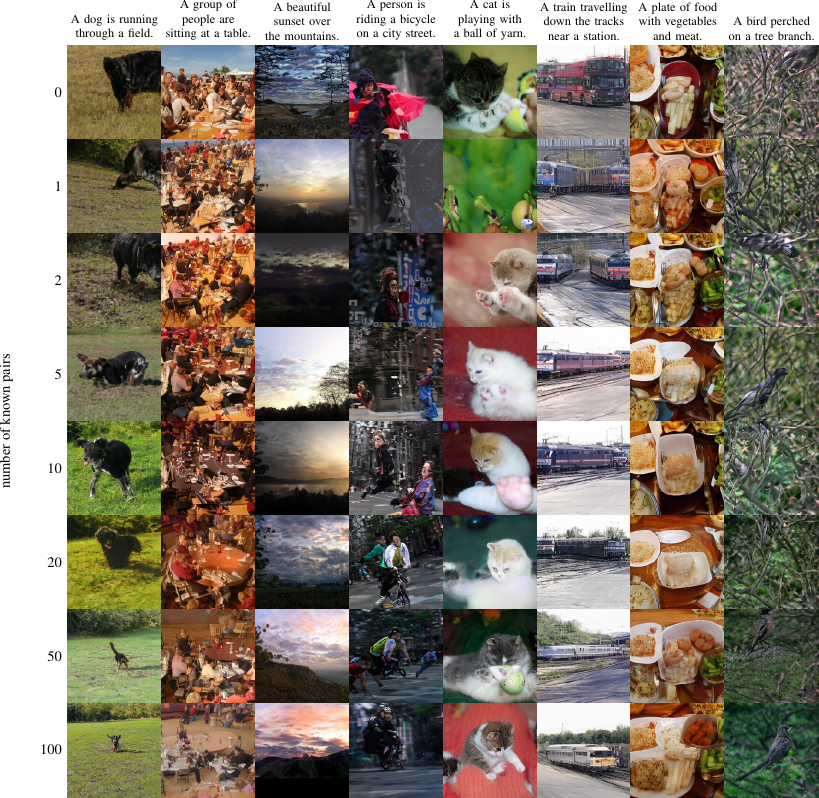} {} 
    \vspace*{\fill}
    \caption{\textbf{Unpaired text-to-image generation with our method and MPNet.} 
    We show the generated images for 8 sample captions with our aligner using DINOv2 ViT-B/14 and MPNet.
    Without pairs, our aligner can already produce images of the general semantic class and scene.
    Additional pairs improve fine-grained details.
    }
    \label{fig:text2img}
\end{figure}

\begin{figure}[p]
    \centering
    \vspace*{\fill}
    \includegraphics{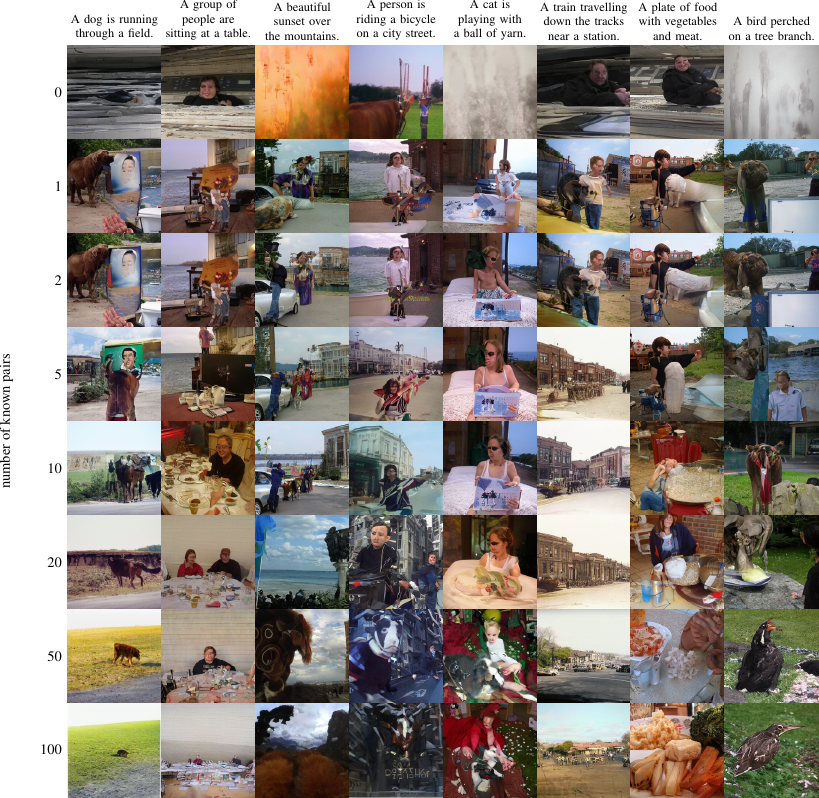} {} 
    \vspace*{\fill}
    \caption{\textbf{Unpaired text-to-image generation with a linear map and MPNet.} 
    We show the generated images for 8 sample captions with a linear map fitted on the pairs using DINOv2 ViT-B/14 and MPNet.
    Without pairs, the linear map is random. As more pairs are used, the faithfulness of the generation improves.}
    \label{fig:text2img_linear_mpnet}
\end{figure}

\begin{figure}[p]
    \centering
    \vspace*{\fill}
    \includegraphics{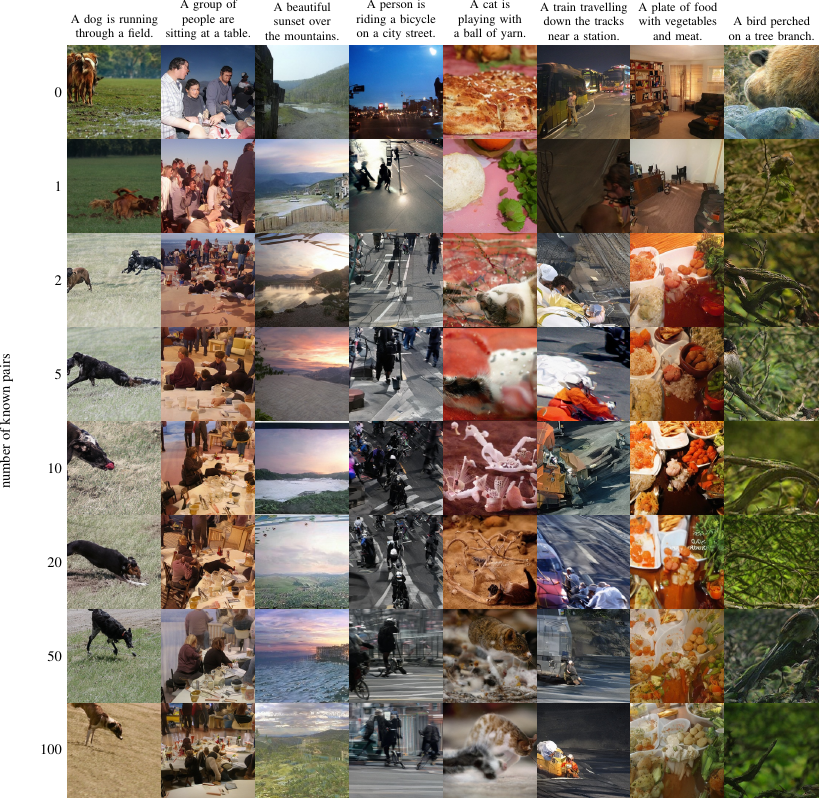} {} 
    \vspace*{\fill}
    \caption{\textbf{Unpaired text-to-image generation with our method and Contriever.} 
    We show the generated images for 8 sample captions with our aligner using DINOv2 ViT-B/14 and Contriever.
    Without pairs, our aligner can already produce images of the general semantic class and scene.
    Additional pairs improve fine-grained details.}
    \label{fig:text2img_contriever}
\end{figure}

\begin{figure}[p]
    \centering
    \vspace*{\fill}
    \includegraphics{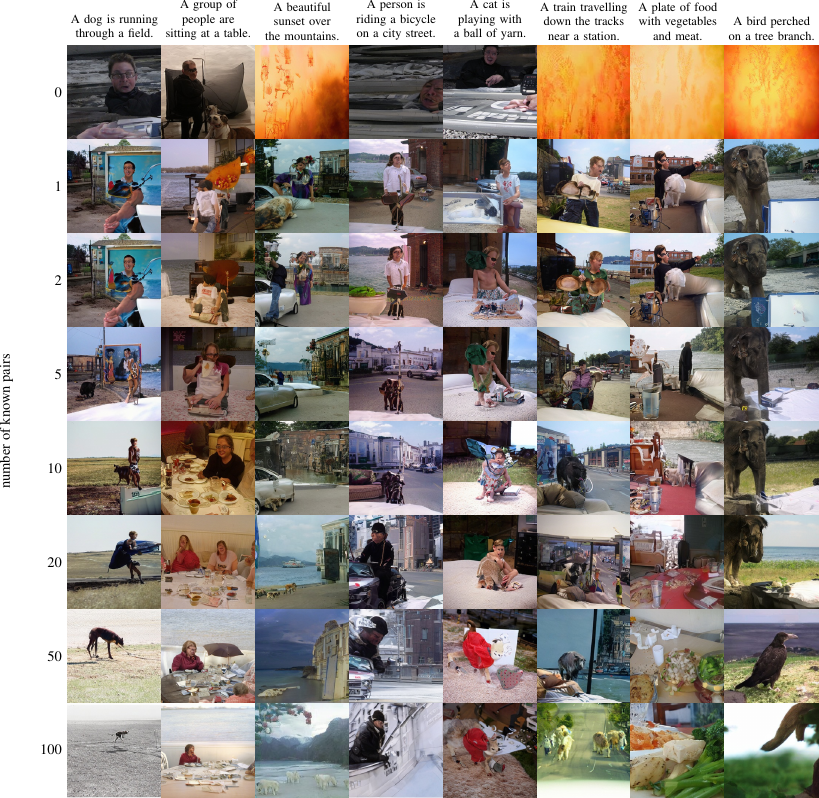} {} 
    \vspace*{\fill}
    \caption{\textbf{Unpaired text-to-image generation with a linear map and Contriever.} 
    We show the generated images for 8 sample captions with a linear map fitted on the pairs using DINOv2 ViT-B/14 and Contriever.
    Without pairs, the linear map is random. As more pairs are used, the faithfulness of the generation improves.}
    \label{fig:text2img_linear_contriever}
\end{figure}

\FloatBarrier
\subsection{Unpaired Domain-Specific Alignment}
\label{app:domain_specific}
\begin{table}[!t]
    \centering\scriptsize
    \begin {tabular}{lS[table-format=1.4]S[table-format=4.1]S[table-format=1.4]S[table-format=2.1]S[table-format=1.4]S[table-format=1.1]}%
\toprule & \multicolumn {2}{c}{vec2vec} & \multicolumn {2}{c}{mini-vec2vec} & \multicolumn {2}{c}{ours} \\ \cmidrule (lr){2-3}\cmidrule (lr){4-5}\cmidrule (lr){6-7}\\Pair&\multicolumn {1}{c}{FOSCTTM $\downarrow $}&\multicolumn {1}{c}{mean rank$\downarrow $}&\multicolumn {1}{c}{FOSCTTM $\downarrow $}&\multicolumn {1}{c}{mean rank$\downarrow $}&\multicolumn {1}{c}{FOSCTTM $\downarrow $}&\multicolumn {1}{c}{mean rank$\downarrow $}\\\midrule %
granite \ensuremath {\to } e5&0.5109&4185.7&\bfseries 0.0000&\bfseries 1.0&\bfseries 0.0000&1.1\\%
granite \ensuremath {\to } gte&0.4209&3448.6&\bfseries 0.0000&\bfseries 1.1&\bfseries 0.0000&1.2\\%
granite \ensuremath {\to } gtr&0.4070&3335.1&\bfseries 0.0000&\bfseries 1.2&\bfseries 0.0000&1.3\\%
granite \ensuremath {\to } stella&0.4875&3994.4&\bfseries 0.0000&\bfseries 1.1&\bfseries 0.0000&1.2\\%
gte \ensuremath {\to } e5&0.2861&2344.5&\bfseries 0.0000&\bfseries 1.1&\bfseries 0.0000&\bfseries 1.1\\%
gtr \ensuremath {\to } e5&0.4882&4000.0&\bfseries 0.0001&\bfseries 1.7&0.0002&2.2\\%
gtr \ensuremath {\to } gte&0.5288&4332.5&0.0024&20.6&\bfseries 0.0000&\bfseries 1.1\\%
gtr \ensuremath {\to } stella&0.5158&4225.7&\bfseries 0.0000&\bfseries 1.0&\bfseries 0.0000&\bfseries 1.0\\%
stella \ensuremath {\to } e5&0.3098&2538.2&\bfseries 0.0000&\bfseries 1.1&\bfseries 0.0000&\bfseries 1.1\\%
stella \ensuremath {\to } gte&0.1394&1142.7&\bfseries 0.0000&\bfseries 1.0&\bfseries 0.0000&\bfseries 1.0\\\bottomrule %
\end {tabular}

    \caption{\textbf{Per-pair results on NQ.} We report the FOSCTTM and mean rank of every encoder pair evaluated on $8{,}192$ validation samples. On average, mini-vec2vec and our aligner achieve nearly perfect alignment on all model pairs. Vec2vec is unstable and does not consistently demonstrate significant performance.}
    \label{tab:domain_nq_full}
\end{table}

\begin{table}[!t]
    \centering\scriptsize
    \begin {tabular}{lS[table-format=1.3]@{\,\ensuremath {\pm }\,}S[table-format=1.3]S[table-format=1.3]@{\,\ensuremath {\pm }\,}S[table-format=1.3]}%
\toprule Method&\multicolumn {2}{c}{FOSCTTM $\downarrow $}&\multicolumn {2}{c}{LTA$\uparrow $}\\\midrule %
SCOT+&0.121&0.027&0.919&0.015\\%
ours&\protect \bfseries  0.089&\protect \bfseries  0.003&\protect \bfseries  0.942&\protect \bfseries  0.019\\\bottomrule %
\end {tabular}

    \caption{\textbf{Result on PBMC.} We evaluate RNA $\to$ ATAC with FOSCTTM and label transfer accuracy (LTA). Our aligner outperforms the domain-specific aligner SCOT+~\citep{baker2026scot+} in FOSCTTM and label transfer accuracy.}
    \label{tab:domain_pbmc_full}
\end{table}

\cref{fig:domain_specific} of the main text averages every benchmark.
The tables below keep each encoder, omics and subject pair on its own row and give one column group per method,
the layout \citet{jha2026harnessing} and \citet{dar2025mini} use for these benchmarks.
We report the average FOSCTTM and the domain-specific metrics for every pair over the seeds.
Bold marks the best method of each row and metric, compared at the precision shown, so that two methods whose
numbers agree to the printed digits are both marked.

\inparagraph{Language.}
On the ten ordered encoder pairs of Natural Questions in \cref{tab:domain_nq_full}, vec2vec stays at or near the
chance value of $0.5$ on five of ten pairs and reaches $0.14$ at best.
Our aligner reaches a FOSCTTM of $0.0000$ on nine of ten pairs and $0.0002$ on the tenth, and mini-vec2vec
reaches $0.0000$ on eight.
What separates the two is not the typical case but the worst one.
On \texttt{gtr} to \texttt{gte} mini-vec2vec gives $0.0024 \pm 0.0045$ with a mean rank of $20.6 \pm 36.5$, while
ours gives $0.0000$ with a mean rank of $1.1 \pm 0.0$ on the same pair.
A spread that large across seeds matters here, because without any pairs there is no signal that could be used to
select the good run.

\inparagraph{Biology.}
On the PBMC benchmark in \cref{tab:domain_pbmc_full} our aligner reaches $0.089 \pm 0.003$ against
$0.121 \pm 0.027$ for SCOT+, which was designed for this kind of data.
The label transfer accuracy (LTA) tells the same story, $94.2\%$ against $91.9\%$ for the nearest cross-modal neighbor.
The standard deviation is again the clearer difference, ours being an order of magnitude smaller.

\inparagraph{Neuroscience.}
The fMRI benchmark in \cref{tab:domain_fmri_full} has $56$ ordered subject pairs.
Both methods use the same per-subject encoders.
Averaged over them our aligner reaches $0.033 \pm 0.025$ against $0.035 \pm 0.041$ for the encoders of
\citet{marcos2026platonic}, which was designed for this dataset.
The two are within each other's spread on the average, and the difference is again that ours varies less across
pairs and seeds.

\begin{table}[p]
    \centering\scriptsize
    \begin {tabular}{lS[table-format=1.3]S[table-format=3.1]S[table-format=1.3]S[table-format=2.1]}%
\toprule & \multicolumn {2}{c}{platonic brain} & \multicolumn {2}{c}{ours} \\ \cmidrule (lr){2-3}\cmidrule (lr){4-5}\\Pair&\multicolumn {1}{c}{FOSCTTM $\downarrow $}&\multicolumn {1}{c}{mean rank$\downarrow $}&\multicolumn {1}{c}{FOSCTTM $\downarrow $}&\multicolumn {1}{c}{mean rank$\downarrow $}\\\midrule %
sub01 \ensuremath {\to } sub02&0.040&36.9&\bfseries 0.005&\bfseries 5.2\\%
sub01 \ensuremath {\to } sub03&\bfseries 0.038&\bfseries 35.8&0.042&38.7\\%
sub01 \ensuremath {\to } sub04&0.073&67.5&\bfseries 0.040&\bfseries 37.1\\%
sub01 \ensuremath {\to } sub05&0.047&43.3&\bfseries 0.046&\bfseries 42.9\\%
sub01 \ensuremath {\to } sub06&0.030&28.0&\bfseries 0.026&\bfseries 25.0\\%
sub01 \ensuremath {\to } sub07&0.019&18.0&\bfseries 0.017&\bfseries 16.2\\%
sub01 \ensuremath {\to } sub08&\bfseries 0.042&\bfseries 39.4&0.056&52.1\\%
sub02 \ensuremath {\to } sub01&0.030&28.3&\bfseries 0.010&\bfseries 9.7\\%
sub02 \ensuremath {\to } sub03&0.070&64.3&\bfseries 0.043&\bfseries 39.8\\%
sub02 \ensuremath {\to } sub04&0.043&39.7&\bfseries 0.035&\bfseries 32.4\\%
sub02 \ensuremath {\to } sub05&0.126&114.8&\bfseries 0.064&\bfseries 58.7\\%
sub02 \ensuremath {\to } sub06&\bfseries 0.048&\bfseries 44.3&0.067&61.9\\%
sub02 \ensuremath {\to } sub07&0.044&40.8&\bfseries 0.029&\bfseries 27.0\\%
sub02 \ensuremath {\to } sub08&0.037&35.0&\bfseries 0.031&\bfseries 28.7\\%
sub03 \ensuremath {\to } sub01&\bfseries 0.043&\bfseries 39.8&0.054&49.6\\%
sub03 \ensuremath {\to } sub02&\bfseries 0.038&\bfseries 35.5&0.041&37.8\\%
sub03 \ensuremath {\to } sub04&\bfseries 0.013&\bfseries 13.0&0.034&32.3\\%
sub03 \ensuremath {\to } sub05&\bfseries 0.015&\bfseries 15.0&0.022&20.8\\%
sub03 \ensuremath {\to } sub06&\bfseries 0.018&\bfseries 16.9&0.019&17.8\\%
sub03 \ensuremath {\to } sub07&\bfseries 0.008&\bfseries 8.7&0.029&27.6\\%
sub03 \ensuremath {\to } sub08&0.049&45.6&\bfseries 0.021&\bfseries 20.0\\%
sub04 \ensuremath {\to } sub01&\bfseries 0.054&\bfseries 50.3&0.068&62.4\\%
sub04 \ensuremath {\to } sub02&0.070&64.7&\bfseries 0.044&\bfseries 41.0\\%
sub04 \ensuremath {\to } sub03&\bfseries 0.022&\bfseries 21.1&0.031&29.1\\%
sub04 \ensuremath {\to } sub05&\bfseries 0.012&\bfseries 12.2&0.014&13.7\\%
sub04 \ensuremath {\to } sub06&\bfseries 0.031&\bfseries 29.2&0.046&42.2\\%
sub04 \ensuremath {\to } sub07&\bfseries 0.031&\bfseries 28.7&0.034&31.7\\%
sub04 \ensuremath {\to } sub08&0.020&19.0&\bfseries 0.016&\bfseries 15.1\\%
sub05 \ensuremath {\to } sub01&\bfseries 0.034&\bfseries 31.6&0.038&35.3\\%
sub05 \ensuremath {\to } sub02&0.073&67.3&\bfseries 0.059&\bfseries 54.6\\%
sub05 \ensuremath {\to } sub03&\bfseries 0.017&\bfseries 16.0&0.028&26.3\\%
sub05 \ensuremath {\to } sub04&\bfseries 0.007&\bfseries 7.3&0.018&17.6\\%
sub05 \ensuremath {\to } sub06&0.014&13.3&\bfseries 0.013&\bfseries 13.0\\%
sub05 \ensuremath {\to } sub07&\bfseries 0.015&\bfseries 14.9&0.020&19.3\\%
sub05 \ensuremath {\to } sub08&0.051&47.1&\bfseries 0.026&\bfseries 24.1\\%
sub06 \ensuremath {\to } sub01&\bfseries 0.020&\bfseries 19.5&0.043&40.3\\%
sub06 \ensuremath {\to } sub02&0.075&68.7&\bfseries 0.060&\bfseries 55.2\\%
sub06 \ensuremath {\to } sub03&\bfseries 0.019&\bfseries 18.0&\bfseries 0.019&18.1\\%
sub06 \ensuremath {\to } sub04&0.035&32.4&\bfseries 0.032&\bfseries 30.1\\%
sub06 \ensuremath {\to } sub05&0.014&13.4&\bfseries 0.013&\bfseries 12.7\\%
sub06 \ensuremath {\to } sub07&0.011&11.4&\bfseries 0.007&\bfseries 7.2\\%
sub06 \ensuremath {\to } sub08&0.058&53.3&\bfseries 0.033&\bfseries 30.5\\%
sub07 \ensuremath {\to } sub01&\bfseries 0.009&\bfseries 9.3&0.014&13.4\\%
sub07 \ensuremath {\to } sub02&\bfseries 0.020&\bfseries 19.0&0.037&34.5\\%
sub07 \ensuremath {\to } sub03&\bfseries 0.012&\bfseries 11.4&0.033&30.8\\%
sub07 \ensuremath {\to } sub04&\bfseries 0.027&\bfseries 25.2&0.054&49.8\\%
sub07 \ensuremath {\to } sub05&0.024&23.0&\bfseries 0.013&\bfseries 13.2\\%
sub07 \ensuremath {\to } sub06&0.027&25.2&\bfseries 0.008&\bfseries 7.9\\%
sub07 \ensuremath {\to } sub08&\bfseries 0.045&\bfseries 41.6&0.046&42.8\\%
sub08 \ensuremath {\to } sub01&\bfseries 0.040&\bfseries 37.1&0.052&48.0\\%
sub08 \ensuremath {\to } sub02&\bfseries 0.037&\bfseries 34.5&0.047&43.7\\%
sub08 \ensuremath {\to } sub03&0.040&37.6&\bfseries 0.028&\bfseries 26.0\\%
sub08 \ensuremath {\to } sub04&\bfseries 0.013&\bfseries 12.7&0.016&15.8\\%
sub08 \ensuremath {\to } sub05&\bfseries 0.028&\bfseries 26.4&\bfseries 0.028&26.5\\%
sub08 \ensuremath {\to } sub06&0.038&35.8&\bfseries 0.036&\bfseries 33.8\\%
sub08 \ensuremath {\to } sub07&0.066&60.6&\bfseries 0.057&\bfseries 52.5\\\bottomrule %
\end {tabular}

    \caption{\textbf{Per-pair results on NSD fMRI.} We report the FOSCTTM score and the average rank of each pair of subjects, averaged over ten random seeds. On average, both methods perform similarly, even though our method is more stable and produces more consistent results.}
    \label{tab:domain_fmri_full}
\end{table}

\FloatBarrier
\subsection{Shared Geometry Across Modalities}
\label{app:other_domains}
\cref{tab:other_domains} gives the per-pair numbers behind \cref{fig:other_domains}.
We observe that the fourteen pairs fall into three groups.
\cref{fig:domain_spectrum} shows a visualization of what the three regimes look like.

\inparagraph{Well aligned.}
Four pairs are aligned almost exactly, Natural Questions at $0.000$, the two interatomic potentials on MP-20 at
$0.001$, the fMRI subject pairs at $0.033$, and PBMC at $0.089$.
SNARE-seq at $0.209$ recover the main cell populations.
All five have a CKA above $0.6$.
Two of them share an encoder family or a measurement device, which is the easy case, but PBMC and SNARE-seq
relate two different assays of the same cells and Natural Questions relates two independently trained text
encoders, so a shared architecture is not what makes them work.

\inparagraph{Partially aligned.}
Three pairs land between $0.22$ and $0.37$, well below chance but far from solved, namely CITE-seq at $0.225$,
human against mouse single-cell RNA at $0.338$ and histology against expression at $0.369$.
These pairs share some structure, but the geometry is not enough to identify individual samples.
CITE-seq has a high CKA of $0.96$, but its FOSCTTM varies strongly over the seeds, with a standard deviation of $0.13$.

\inparagraph{At chance.}
Six pairs are close to chance, Cell Painting against L1000 at $0.483$, mass spectra against molecules at $0.475$,
tissue against MRI at $0.484$, MEG against text at $0.498$, tissue against CT at $0.487$ and galaxy images against
spectra at $0.574$.
Their CKA is at most $0.25$ in every case, so little shared geometry is available.

\inparagraph{Which measure predicts best.}
Across these domains CKA and TSI both track the alignment, while mutual $k$-NN is a weaker predictor.
SNARE-seq is the clearest counterexample, the fifth best aligned pair of the fourteen at $0.209$ and yet a
mutual $k$-NN of $0.043$, lower than pairs that are at chance.
The different datasets considered here contain different numbers of samples.
Mutual $k$-NN measures the overlap of the $k$ nearest neighbors which is sensitive to the density of the point clouds, and the datasets here differ in size by two orders of magnitude.
CKA and TSI compare global similarity structure and are therefore more stable across different sample sizes, which may explain why they predict the alignment better in this setting.

\inparagraph{When no aligner is needed at all.}
The pairs above relate modalities that no single encoder covers.
We also study the opposite case on five pairs taken from the CycleGAN literature, where one encoder can be applied
to both sides, namely MR against CT and CBCT against CT on SynthRAD2023~\citep{thummerer2023synthrad2023}, RGB against thermal on LLVIP~\citep{jia2021llvip}, SAR against
optical on SEN12MS-CR~\citep{sen12mscr} and one speaker against another on CMU Arctic~\citep{kominek2004cmu}.
The four image pairs are embedded with self-supervised vision transformers and the speaker pair with a
self-supervised speech model, one encoder per pair as listed in \cref{tab:cyclegan_pairs}.
Every pair uses the same encoder on both modalities, so the two embedding spaces share a basis by construction and
the identity map is a meaningful baseline.
We report the identity map, the identity map after subtracting each side's training mean and normalizing the rows,
and our aligner, each over the same five seeds and the same $1024$ sample validation slice.
\cref{tab:cyclegan_pairs} gives the numbers.
Our aligner recovers a coarse map on every pair, at $0.020$ for RGB against thermal, $0.054$ for CBCT against CT,
$0.217$ for SAR against optical, $0.265$ for the two speakers and $0.253$ for MR against CT, although it uses no
identity bias, doesn't assume the same encoders, and never sees a pair.
Centering and normalizing alone is better on four of the five pairs and within $0.001$ on RGB against thermal.
On CMU Arctic it recovers the correspondence exactly at $0.000$ because both speakers read the same sentences and one encoder maps them to nearly
the same point.
This is a stronger statement than the one the rest of the paper makes.
Where a single encoder covers both modalities, they can be aligned coarsely across modalities without
fitting anything at all.
Such an alignment would allow architectures that place a generative model in each space and translate between them
without a GAN, in the same way as our text-to-image pipeline in \cref{subsec:text2img}.
Our setting is not designed for that use, since we embed one token per image and the map is not pixel-wise by
construction, so we leave this direction to future work.
The same result also marks the limit of the comparison.
When the same model can be used on both sides, centering and normalizing is the method of choice, and our aligner
is meant for the pairs where no such model exists, such as vision and language.

\begin{table}[t]
    \centering\scriptsize
    \begin {tabularx}{\linewidth }{>{\raggedright \arraybackslash \hspace {0pt}}XlS[table-format=1.3]S[table-format=1.3]S[table-format=1.3]S[table-format=1.3]S[table-format=1.3]}%
\toprule & & \multicolumn {3}{c}{FOSCTTM $\downarrow $ per method} & \multicolumn {2}{c}{shared geometry} \\ \cmidrule (lr){3-5}\cmidrule (lr){6-7}\\Setting&Encoder (both sides)&\multicolumn {1}{c}{identity}&\multicolumn {1}{c}{centred+norm}&\multicolumn {1}{c}{ours}&\multicolumn {1}{c}{CKA $\uparrow $}&\multicolumn {1}{c}{mutual $k$-NN $\uparrow $}\\\midrule %
MR \ensuremath {\leftrightarrow } CT (\allowbreak {}brain)&DINOv1 Vi\allowbreak {}T-B/\allowbreak {}16&0.178&\bfseries 0.108&0.253&0.773&0.124\\%
CBCT \ensuremath {\leftrightarrow } CT (\allowbreak {}brain)&DINOv2 Vi\allowbreak {}T-B/\allowbreak {}14&0.156&\bfseries 0.048&0.054&0.710&0.226\\%
RGB \ensuremath {\leftrightarrow } thermal&DINOv2 Vi\allowbreak {}T-L/\allowbreak {}14&0.053&0.021&\bfseries 0.020&0.894&0.167\\%
SAR \ensuremath {\leftrightarrow } optical&DINOv2 Vi\allowbreak {}T-L/\allowbreak {}14&0.260&\bfseries 0.177&0.217&0.624&0.092\\%
speaker clb \ensuremath {\leftrightarrow } rms&Wav\allowbreak {}LM-Large&\bfseries 0.000&\bfseries 0.000&0.265&0.855&0.474\\\bottomrule %
\end {tabularx}

    \caption{\textbf{Alignment is not required for some domains.} We consider multiple modality pairs from the CycleGAN~\citep{zhu2017unpaired} literature, where one encoder is applied to both modalities.
    We use DINOv1~\citep{Caron:2021:EPS}, DINOv2~\citep{Oquab:2024:DIN}, and WavLM~\citep{chen2022wavlm} as the encoders.
    The identity column applies no map, and the centered and normalized column subtracts the training mean from each side and normalizes the rows.
    Since they come from the same encoder, we observe that the embedding spaces are directly aligned, and centering and normalization help.
    This emphasizes that domains that can use the same encoders, such as RGB and thermal, are directly compatible in their embedding spaces when general self-supervised models are used.
    Although our aligner doesn't assume a shared embedding space, it still recovers a coarse alignment for these modalities.
    }
    \label{tab:cyclegan_pairs}
\end{table}

\begin{table}[t]
    \centering\scriptsize
    \begin {tabularx}{\linewidth }{l>{\raggedright \arraybackslash \hspace {0pt}}XS[table-format=1.2(3)]S[table-format=1.2(3)]S[table-format=1.2(3)]S[table-format=1.2(3)]}%
\toprule Domain&Pair&\multicolumn {1}{c}{FOSCTTM $\downarrow $}&\multicolumn {1}{c}{CKA $\uparrow $}&\multicolumn {1}{c}{mutual $k$-NN $\uparrow $}&\multicolumn {1}{c}{TSI $\uparrow $}\\\midrule %
astronomy&galaxy \ensuremath {\leftrightarrow } spectrum (\allowbreak {}DESI)&\multicolumn {1}{c}{0.57 \ensuremath {\pm } 0.00}&\multicolumn {1}{c}{0.18 \ensuremath {\pm } 0.00}&\multicolumn {1}{c}{0.01 \ensuremath {\pm } 0.00}&\multicolumn {1}{c}{0.55 \ensuremath {\pm } 0.00}\\%
biology&Cell Painting \ensuremath {\leftrightarrow } L1000 (\allowbreak {}Rosetta)&0.48 +- 0.10&\multicolumn {1}{c}{0.25 \ensuremath {\pm } 0.00}&\multicolumn {1}{c}{0.18 \ensuremath {\pm } 0.00}&\multicolumn {1}{c}{0.58 \ensuremath {\pm } 0.00}\\%
biology&human \ensuremath {\leftrightarrow } mouse sc\allowbreak {}RNA&0.34 +- 0.04&\multicolumn {1}{c}{0.32 \ensuremath {\pm } 0.00}&\multicolumn {1}{c}{0.10 \ensuremath {\pm } 0.00}&\multicolumn {1}{c}{0.60 \ensuremath {\pm } 0.00}\\%
biology&sc\allowbreak {}RNA \ensuremath {\leftrightarrow } ADT (\allowbreak {}CITE-seq)&0.22 +- 0.13&\multicolumn {1}{c}{0.96 \ensuremath {\pm } 0.00}&\multicolumn {1}{c}{0.14 \ensuremath {\pm } 0.00}&\multicolumn {1}{c}{0.67 \ensuremath {\pm } 0.00}\\%
biology&sc\allowbreak {}RNA \ensuremath {\leftrightarrow } sc\allowbreak {}ATAC (\allowbreak {}PBMC)&\multicolumn {1}{c}{0.09 \ensuremath {\pm } 0.00}&\multicolumn {1}{c}{0.90 \ensuremath {\pm } 0.00}&\multicolumn {1}{c}{0.06 \ensuremath {\pm } 0.00}&\multicolumn {1}{c}{0.76 \ensuremath {\pm } 0.00}\\%
biology&sc\allowbreak {}RNA \ensuremath {\leftrightarrow } sc\allowbreak {}ATAC (\allowbreak {}SNARE-seq)&0.21 +- 0.03&\multicolumn {1}{c}{0.86 \ensuremath {\pm } 0.00}&\multicolumn {1}{c}{0.04 \ensuremath {\pm } 0.00}&\multicolumn {1}{c}{0.74 \ensuremath {\pm } 0.00}\\%
chemistry&MS/\allowbreak {}MS \ensuremath {\leftrightarrow } molecules (\allowbreak {}Mass\allowbreak {}Spec\allowbreak {}Gym)&0.47 +- 0.02&\multicolumn {1}{c}{0.19 \ensuremath {\pm } 0.00}&\multicolumn {1}{c}{0.11 \ensuremath {\pm } 0.00}&\multicolumn {1}{c}{0.54 \ensuremath {\pm } 0.00}\\%
language&NQ text \ensuremath {\leftrightarrow } text&\multicolumn {1}{c}{0.00 \ensuremath {\pm } 0.00}&0.66 +- 0.09&0.32 +- 0.09&0.70 +- 0.03\\%
materials science&MLIP \ensuremath {\leftrightarrow } MLIP (\allowbreak {}MP-20)&\multicolumn {1}{c}{0.00 \ensuremath {\pm } 0.00}&\multicolumn {1}{c}{0.89 \ensuremath {\pm } 0.00}&\multicolumn {1}{c}{0.73 \ensuremath {\pm } 0.00}&\multicolumn {1}{c}{0.83 \ensuremath {\pm } 0.00}\\%
medicine&histology \ensuremath {\leftrightarrow } expression (\allowbreak {}HEST CCRCC)&0.37 +- 0.02&\multicolumn {1}{c}{0.62 \ensuremath {\pm } 0.00}&\multicolumn {1}{c}{0.19 \ensuremath {\pm } 0.00}&\multicolumn {1}{c}{0.67 \ensuremath {\pm } 0.00}\\%
medicine&tissue \ensuremath {\leftrightarrow } CT (\allowbreak {}CPTAC)&0.49 +- 0.01&\multicolumn {1}{c}{0.05 \ensuremath {\pm } 0.00}&\multicolumn {1}{c}{0.09 \ensuremath {\pm } 0.00}&\multicolumn {1}{c}{0.52 \ensuremath {\pm } 0.00}\\%
medicine&tissue \ensuremath {\leftrightarrow } MRI (\allowbreak {}TCGA glioma)&0.48 +- 0.02&\multicolumn {1}{c}{0.09 \ensuremath {\pm } 0.00}&\multicolumn {1}{c}{0.14 \ensuremath {\pm } 0.00}&\multicolumn {1}{c}{0.52 \ensuremath {\pm } 0.00}\\%
neuroscience&MEG \ensuremath {\leftrightarrow } text (\allowbreak {}Spanish\allowbreak {}BCBL)&0.50 +- 0.02&\multicolumn {1}{c}{0.01 \ensuremath {\pm } 0.00}&\multicolumn {1}{c}{0.07 \ensuremath {\pm } 0.00}&\multicolumn {1}{c}{0.50 \ensuremath {\pm } 0.00}\\%
neuroscience&f\allowbreak {}MRI \ensuremath {\leftrightarrow } f\allowbreak {}MRI (\allowbreak {}NSD subjects)&0.03 +- 0.03&0.62 +- 0.04&0.26 +- 0.03&0.71 +- 0.01\\\bottomrule %
\end {tabularx}

    \caption{\textbf{Complete results for all fourteen modality pairs of \cref{fig:other_domains}.}
    Modalities with a high similarity score can be aligned more easily without pairs.
    }
    \label{tab:other_domains}
\end{table}

\begin{figure}[p]
    \centering
    \begin{subfigure}[t]{\textwidth}
        \centering
        \includegraphics{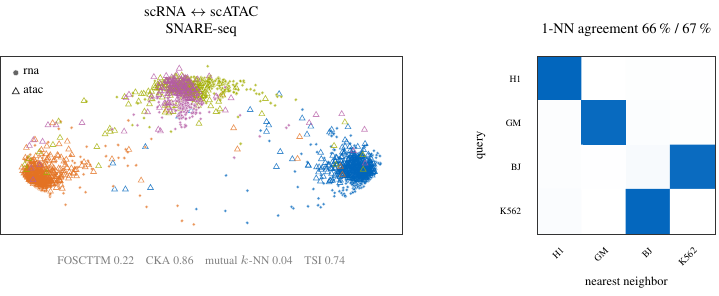} {} 
        \caption{Four cell lines, two of them swapped.}
        \label{fig:domains_snare}
    \end{subfigure}

    \vspace{0.7em}
    \begin{subfigure}[t]{\textwidth}
        \centering
        \includegraphics{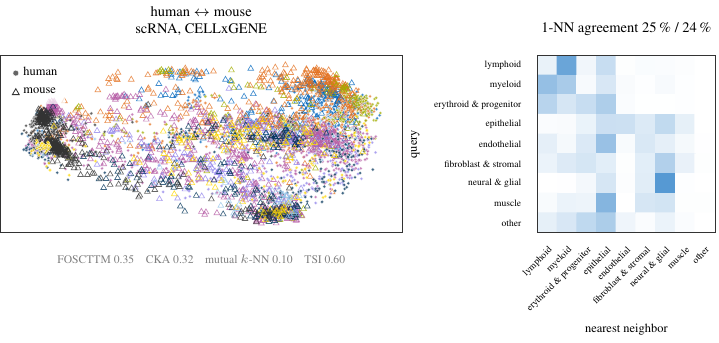} {} 
        \caption{Nine cell lineages, recovered in part.}
        \label{fig:domains_species}
    \end{subfigure}

    \vspace{0.7em}
    \begin{subfigure}[t]{\textwidth}
        \centering
        \includegraphics{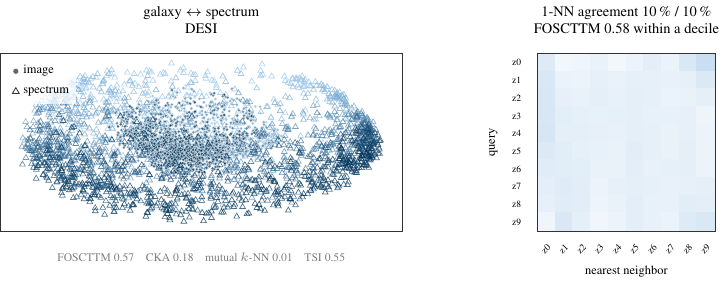} {} 
        \caption{Ten redshift deciles, not recovered at all.}
        \label{fig:domains_galaxy}
    \end{subfigure}

    \caption{\textbf{What a good, a mediocre and a failed alignment look like.} Three pairs of
    \cref{tab:other_domains}, each fitted without any pairs. We use the median seed as a representative visualization.
    Every row shows both modalities after the map, projected by one PCA fitted on their union and colored by a
    label the aligner never saw. In addition, we show the label agreement of the nearest
    cross-modal neighbor (right), the FOSCTTM and three similarity scores (below).
    (\subref{fig:domains_snare})~The four cell lines of SNARE-seq separate, but BJ and K562 are matched to each
    other, so the matrix is diagonal up to that swap.
    (\subref{fig:domains_species})~The lineages are recovered only in part. Neural cells are matched best, lymphoid
    and myeloid cells are mostly matched to each other, and muscle cells are drawn to the epithelial lineage.
    (\subref{fig:domains_galaxy})~Nothing is recovered.}
    \label{fig:domain_spectrum}
\end{figure}
\FloatBarrier
\section{Alignment with Generative Token Pooling}
\label{app:gen_token_pooling}

On MS COCO, Qwen3-8B with generative token pooling is competitive with the two embedding models and is the best of the three for DINOv2 ViT-B/14 (\cref{fig:panel_coco_train2014}). 
On the detailed captioning corpora, it is the worst of the three for every vision model.
This appendix explores the reason.
Generative token pooling asks the language model to write a description of what the text would look like and
pools the tokens it generates, rather than the tokens of the caption itself~\citep{wang2026the}.
Our hypothesis is that generation adds visual detail that a short caption leaves out but cannot add much to a long caption that already contains it.

\inparagraph{Setup.}
We compare three poolings of the same captions against the same DINOv2 ViT-B/14~\citep{Oquab:2024:DIN} image embeddings.
Generative pooling prompts Qwen3-8B~\citep{yang2025qwen3} with \emph{Imagine what it would look like to see:} followed by the caption
and averages the hidden states of at most $128$ generated tokens.
Mean pooling averages the hidden states of the caption tokens with the same backbone and no prompt, which isolates
the effect of generating.
Embedding pooling uses Qwen3-Embedding-8B~\citep{zhang2025qwen3} and serves as a reference point.
We evaluate the alignment on six datasets with varying caption lengths.
MS COCO~\citep{chen2015microsoft} at $11.9$ Qwen3 tokens on average, CC12M~\citep{changpinyo2021conceptual} at
$24.8$, WIT~\citep{srinivasan2021wit} at $41.5$, Stanford Paragraph Captioning~\citep{krause2016paragraphs} at $53.8$, DOCCI~\citep{onoe2024docci} at $140.8$ and Densely Captioned Images~\citep{urbanek2024picture} at
$167.0$.
Each score is measured on five disjoint slices of $1024$ pairs.
We report the clipped CKA of \citet{wang2026the}, which clips every feature at the mean $95$th
percentile of its absolute value and normalizes rows without centering them.
The main goal of this experiment is to isolate in which settings generative token pooling can help.
Instead of comparing the absolute level of each pooling, we compare the difference between different embedding strategies.

\inparagraph{Results.}
\cref{fig:tp_corpora} shows the difference in alignment between the three pooling methods across the six datasets.
The difference of generative and mean pooling is positive on the three corpora with short or web-scraped captions,
$+0.099$ on MS COCO, $+0.107$ on CC12M and $+0.211$ on WIT, near zero on the paragraph corpus at $+0.014$, and
negative on the two densest, $-0.144$ on DOCCI and $-0.067$ on DCI.

\inparagraph{Truncated captions.}
The problem with the above comparison is that the corpora differ in more than just caption length.
To isolate the effect of caption length, we take the two densest corpora and compare captions of decreasing detail for the same images, truncated descriptions for DOCCI and the three human-written caption variants for DCI.
For this experiment, generative and mean pooling use the smaller Qwen3-1.7B and embedding pooling uses
Qwen3-Embedding-0.6B, so its values differ from those in \cref{fig:tp_corpora} on the same captions.
\cref{fig:tp_ladder} shows the same difference in alignment as \cref{fig:tp_corpora}, but as the caption is shortened.
On DOCCI the contrast moves from $-0.108$ on the full description to $-0.057$ on half of it and to $+0.087$ on
the first sentence alone.
On DCI it moves from $-0.071$ on the full caption and $-0.059$ on the extended one to $+0.131$ on the short
human-written caption.

\begin{figure}[t]
    \centering
    \includegraphics{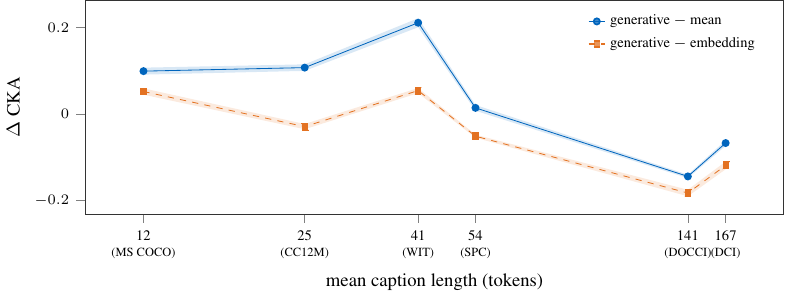} {} 
    \caption{\textbf{Generative token pooling tends to help for short captions.} 
    We compare CKA alignment for different datasets. Generative token pooling tends to be more helpful for short captions than for long, detailed ones. WIT is an outlier where generative token pooling is exceptionally effective.
    }
    \label{fig:tp_corpora}
\end{figure}

\begin{figure}[t]
    \centering
    \begin{subfigure}[b]{0.49\textwidth}
        \centering
        \includegraphics{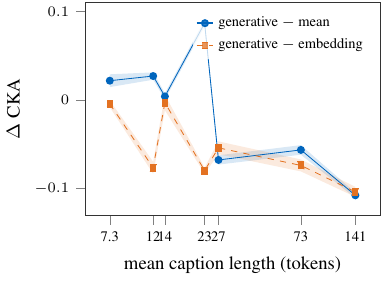} {} 
        \caption{DOCCI}
        \label{fig:tp_ladder_docci}
    \end{subfigure}%
    \hfill
    \begin{subfigure}[b]{0.49\textwidth}
        \centering
        \includegraphics{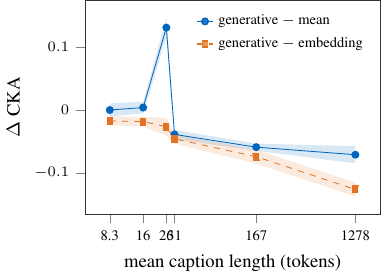} {} 
        \caption{DCI}
        \label{fig:tp_ladder_dci}
    \end{subfigure}

    \caption{\textbf{The same images, captions of decreasing detail.} 
    We compare different embedding strategies on DCI and DOCCI and find that generative token pooling can be helpful for short captions. Embedding models and normal mean pooling of the caption are generally superior for long captions.
    }
    \label{fig:tp_ladder}
\end{figure}
\FloatBarrier
\section{Granularity of Alignment}
\label{app:alignment_granularity}

Cross-modal alignment appears to depend on the scale at which it is measured.
\citet{groger2026revisiting} show that, after accounting for model depth and width, convergence is mainly expressed through local neighborhood structure rather than global distances.
However, they mainly focus on the convergence with increasing model and dataset size and not on the absolute values.
\citet{koepke2026back} show that for fixed $k$, mutual $k$-NN degrades substantially when the dataset is scaled to millions of samples, whereas the alignment is stable at a fixed ratio of $k = n/100$.
At large scales and for a fixed $k$, the alignment is much weaker than the alignment between language models, but remains above the random baseline.
Thus, their results show that fine-grained vision-language alignment is limited, while some coarser alignment remains.
Concurrent to our work, \citet{you2026converges} introduce a global counterpart to mutual $k$-NN based on minimum spanning tree (MST) edge overlap. Their analysis separates local vs. global scale from relational structure vs. metric geometry. They find models converge both in local and global relational structure, but a weaker convergence when distance agreement is required.

However, these experiments do not directly compare different levels of granularity under the same evaluation conditions.
Changing the sample count and $k$ changes the granularity of the mutual $k$-NN comparison, making the resulting alignment values difficult to compare directly across scales.
In this section, we want to test at which levels of granularity alignment exists and how strong it is when the evaluation conditions are held comparable.
We study this in three complementary experiments:

\begin{enumerate}[noitemsep, topsep=0pt, leftmargin=*]
\item A clustering-based experiment, where we can compare the alignment over cluster centers with the alignment within each cluster. We find that the alignment is substantially stronger across the cluster centers compared to inside individual clusters.
\item A PCA-based experiment, where we evaluate how much of the alignment is captured by a small number of dominant dimensions. Most of the observed alignment is already captured by the first ten PCA dimensions.
Random subspaces require substantially more dimensions to reach comparable alignment.
\item A clipping-based experiment, where one cosine similarity kernel is clipped from below or above and the alignment is evaluated after clipping.
This neglects the structure outside the clipping threshold and evaluates the impact of this structure on the general alignment. We find that cosine similarity values between $0.0$ and $0.6$ contribute most strongly to the alignment.
Very close points and far points don't share as much structure.
\end{enumerate}

\begin{figure}[t]
    \centering
    \includegraphics{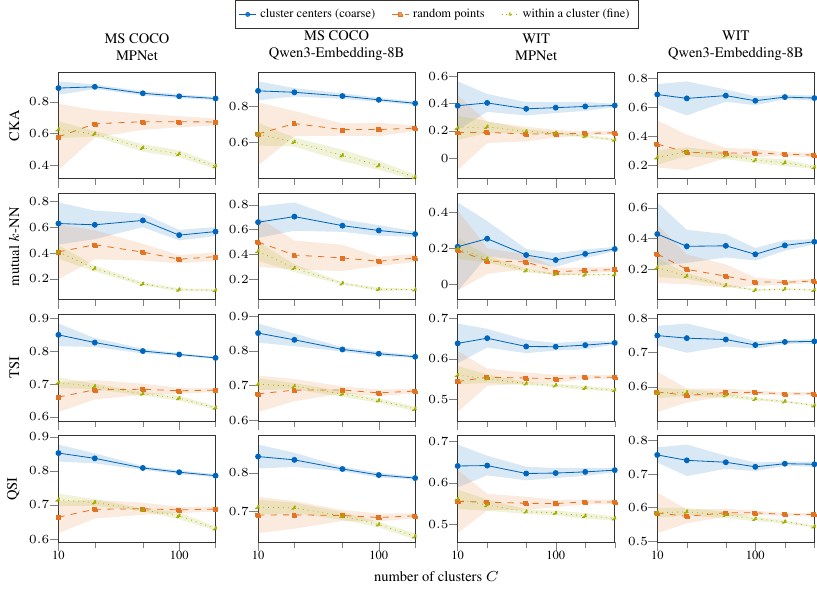} {} 
    \caption{\textbf{Alignment between cluster centers, inside clusters and on random points.} Both spaces are
    clustered jointly into $C$ clusters.
Every point of a curve scores $n = C$ points, either the $C$ cluster
    centers, $C$ members of one cluster or $C$ random samples, so the three curves differ only in how coarse the
    points are.
    The coarse cluster centers are consistently better aligned than random points or points within clusters.
    }
    \label{fig:granularity_partition}
\end{figure}

\inparagraph{Setup.}
We evaluate all experiments with DINOv2 ViT-B/14~\citep{Oquab:2024:DIN}, two language models (MPNet~\citep{Reimers:2019:SBE},  Qwen3-Embedding-8B~\citep{zhang2025qwen3}), and two datasets (MS COCO~\citep{chen2015microsoft} and WIT~\citep{srinivasan2021wit}).
We measure alignment with four alignment measures: CKA~\citep{kornblith2019similarity}, mutual $k$-NN with  $k = \max(1, \lfloor n/100 \rfloor)$~\citep{huh2024platonic}, TSI, and QSI~\citep{soares2026scalable}.
We repeat every experiment for 10 random seeds and report the mean and standard deviation.

\inparagraph{Cluster-based experiment (\cref{fig:granularity_partition}).}
We first test whether coarse groups are more strongly aligned than the fine-grained structure within those groups.
We jointly cluster the two representation spaces using size-balanced $k$-means on at most 100,000 points and lift the clustering to the full dataset for MS COCO
and to 500,000 random samples for WIT.
For each granularity, we sample the same number of points from each cluster and compare alignment across cluster centers, within clusters, and on a random subset of the same size.

Across datasets, models, and granularities, alignment across cluster centers is substantially stronger than alignment within individual clusters.
Random sampling is usually closer to the fine alignment.
This shows that the modalities agree more strongly on coarse organization than on fine-grained structure.

\begin{figure}[t]
    \centering
    \includegraphics{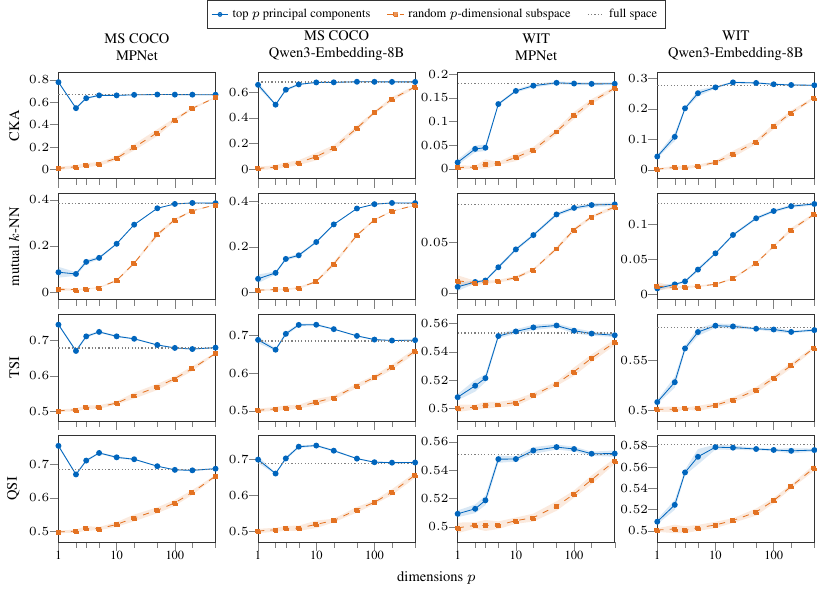} {} 
    \caption{\textbf{Alignment in the top principal components.} Each space is
    projected onto its top $p$ principal components or onto a random $p$-dimensional subspace.
    The dotted line is the alignment of the two full spaces.
    Ten principal components capture most of the alignment and sometimes surpass the alignment of the original space.
    For mutual $k$-NN, around 100 principal components are required to approximate the full alignment.
}
    \label{fig:granularity_subspace}
\end{figure}

\inparagraph{PCA-based experiment (\cref{fig:granularity_subspace}).}
We next examine whether the shared alignment is concentrated in a small number of dominant dimensions.
We subsample 10,000 points and project each representation space onto its first $p$ principal components.
We compare this to random subspaces of the same dimensionality.

The first ten PCA dimensions already capture most of the observed alignment across datasets, models, and alignment measures.
In some cases, they even produce slightly higher alignment than using more dimensions.
The only exception is mutual $k$-NN, which needs an order of magnitude more components.
Random projections require substantially more dimensions to reach comparable alignment.

\begin{figure}[t]
    \centering
    \includegraphics{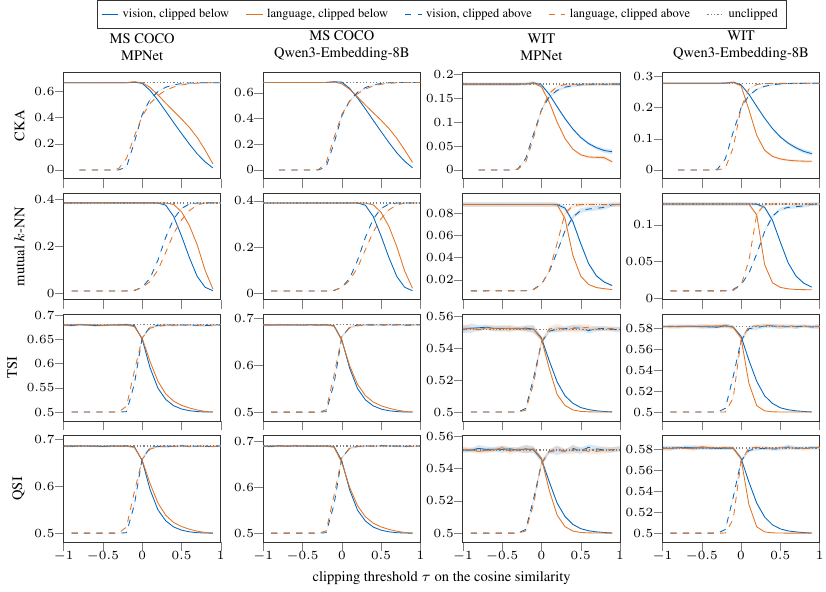} {} 
    \caption{\textbf{Alignment after clipping one similarity kernel.} The centered cosine-similarity kernel of one space is clipped from below or from above at the threshold $\tau$, while the kernel of the other space is left unchanged.
    Points with a cosine similarity of $0.6$ or higher and points farther away than orthogonal contribute only marginally to the alignment score.
    }
    \label{fig:granularity_clipping}
\end{figure}

\inparagraph{Clipping-based experiment (\cref{fig:granularity_clipping}).}
Finally, we test which ranges of cosine similarity contribute most to the alignment.
We clip a cosine-similarity kernel from below or above and measure alignment after clipping on subsets of $10{,}000$ points.

We find that cosine similarities between $0.0$ and $0.6$ carry most of the alignment. 
The structure outside this range is largely redundant.
Clipping similarities above $0.6$ has only a limited effect, indicating that the precise structure of very close points is not strongly shared.
Similarly, moving all negative cosine similarities to zero has little effect on CKA and mutual $k$-NN, suggesting that the precise structure of very distant points contributes little.

Overall, the strongest shared structure lies at an intermediate, coarse similarity scale.
The modalities share coarse relationships between samples, and the structure of very close and very distant points adds little beyond them.

\end{document}